\documentclass[11pt,onecolumn]{article}

\usepackage[T1]{fontenc}
\usepackage{lmodern}
\usepackage{amsmath}
\usepackage{amssymb}
\usepackage{amsthm}
\usepackage{geometry}
\usepackage{booktabs}
\usepackage{array}
\usepackage{hyperref}
\usepackage{natbib}
\bibpunct{(}{)}{;}{a}{,}{,}

\newtheorem{theorem}{Theorem}
\newtheorem{lemma}{Lemma}
\newtheorem{proposition}{Proposition}
\newtheorem{corollary}{Corollary}[theorem]
\theoremstyle{remark}
\newtheorem{remark}{Remark}

\newcommand{\E}{\mathbb{E}}
\newcommand{\R}{\mathbb{R}}

\newcommand{\Prob}{\mathbb{P}}
\newcommand{\cP}{\mathcal{P}}
\newcommand{\cPh}{\widehat{\mathcal{P}}}
\newcommand{\cA}{\mathcal{A}}
\newcommand{\cS}{\mathcal{S}}
\newcommand{\cG}{\mathcal{G}}

\newcommand{\Var}{\mathrm{Var}}
\newcommand{\TV}{d_{\mathrm{TV}}}
\newcommand{\Wone}{W_1}
\DeclareMathOperator*{\argmin}{arg\,min}

\title{Reliability and Identifiability in Persona-Trained Monte Carlo:\\
Variance Decomposition, Stability Bounds, and the\\
Identifiability of Heterogeneous News Reaction}

\author{Salavat Ishbulatov \\
Independent researcher \\
\texttt{salavat@doplan.ai}}

\date{}

\begin{document}

\maketitle

\begin{abstract}
Persona-Trained Monte Carlo (PTMC) estimates distributions of market-outcome functionals by repeatedly simulating limit-order-book interaction among $K$ neural policy bots whose behavioral personas are drawn from a learned heterogeneity distribution $\cP$. This paper develops the statistical theory that makes the word ``reliable'' precise for such estimators.

We decompose estimator variance into a persona-draw component $\sigma_P^2$ and a within-run component $\sigma_w^2$, give unbiased ANOVA estimators of both, and derive the variance-optimal allocation of a fixed compute budget between outer persona draws and inner replications. A coupling-based stability bound quantifies how misestimation of $\cP$ and error in the trained policy propagate into the estimand, yielding a three-term total-error budget whose terms are separately estimable; a uniform-in-horizon version holds under a Doeblin condition on the market chain.

The main contribution is an identification theory for heterogeneous news reaction: under a fixed response nonlinearity, the aggregate impact curve $A(z)=\E_Q[g(\eta z)]$ detects heterogeneous news sensitivity through a strict Jensen gap and identifies the distribution $Q$ locally via odd moments and Hausdorff determinacy, with sharp failure when the response family is unknown. We provide $\sqrt{n}$-consistent estimators and a boundary-corrected test of homogeneous news reaction. Two separation theorems delimit when PTMC is provably preferable to homogeneous-population simulators and reduced-form forecasters, formalizing an irreducible Jensen bias floor and the Lucas critique as a minimax limit on intervention extrapolation. All proofs are given in full; guarantees are classified as unconditional (Monte Carlo convergence), conditional worst-case (the error budget), or open (the large-$K$ mean-field limit).
\end{abstract}

\noindent\textbf{Companion papers.} Framework specification: \emph{Persona-Trained Monte Carlo: Estimating Market-Outcome Distributions via Swarms of Persona-Conditioned Neural Policy Bots in a Limit Order Book} (\href{https://arxiv.org/abs/2606.29556}{arXiv:2606.29556}). This paper contains the theory; nothing here depends on simulation output.

\section{Introduction}
\label{sec:intro}

Agent-based market simulators promise distributions of outcomes---crash probabilities, drawdowns, spread dynamics---generated endogenously by interacting traders rather than imposed by a parametric price process. Persona-Trained Monte Carlo (PTMC) makes the promise statistical: each simulation run draws a population of $K$ trader personas from a distribution $\cP$, instantiates $K$ copies of one trained policy network $\pi_\phi$ conditioned on those personas, lets them trade in a continuous double auction for $T$ steps, and records a functional $F$ of the resulting path; averaging over $N$ independent runs gives a Monte Carlo estimator $\hat\mu_N$ of $\E_\cP[F]$. Three questions decide whether such an estimator deserves to be called reliable, and they are the subject of this paper.

\emph{First, what does the estimator's error look like, and how should a compute budget be spent?} Section~\ref{sec:estimator} decomposes the estimator's variance into a between-population component $\sigma_P^2$ (the variance of the conditional mean across persona draws) and a within-run component $\sigma_w^2$, exhibits unbiased ANOVA estimators of both from a two-stage design (Theorem~\ref{thm:consistency}), and derives the variance-minimizing allocation between outer persona draws and inner replications under a cost model (Theorem~\ref{thm:optimalR}). These results are mathematically standard---they are the input-uncertainty decomposition of the stochastic-simulation literature \citep{cheng_sensitivity_1997,barton_quantifying_2014,song_advanced_2014} and the nested-simulation allocation of \citet{sun_efficient_2011} transplanted to populations of interacting learned agents---and we present them as preliminaries with full proofs, because the quantity $\sigma_P^2$ is exactly what gives empirical content to the oldest objection to behavioral market models: \citet{gode_allocative_1993} showed that zero-intelligence traders reproduce allocative efficiency, so behavioral structure must earn its cost. Corollary~\ref{cor:gs} restates that objection as an allocation rule: if $\sigma_P^2=0$, the optimal design never redraws personas.

\emph{Second, how wrong is the answer when the inputs are wrong?} The persona distribution is estimated, and the policy is behaviorally cloned; both carry error into the estimand itself, beyond Monte Carlo noise. Section~\ref{sec:stability} proves a coupling bound (Theorem~\ref{thm:stability}): under a total-variation Lipschitz condition on the policy---verified for softmax policies with Lipschitz logits in Lemma~\ref{lem:softmax}---the estimand moves by at most $2MKTL\,\Wone(\cP,\cPh)$ when the persona distribution moves by $\Wone(\cP,\cPh)$ in Wasserstein distance, and by at most $2MKT\varepsilon$ when the policy is wrong by $\varepsilon$ in total variation. Corollary~\ref{cor:budget} assembles the resulting three-term reliability budget. The worst-case linearity in $KT$ parallels the compounding-error analyses of imitation learning \citep{ross_efficient_2010,ross_reduction_2011}; Theorem~\ref{thm:uniform} shows that under a Doeblin minorization of the market chain the horizon factor $T$ is replaced by $1/\alpha$, uniformly in $T$, for terminal-state and time-average functionals, by the perturbation theory of uniformly ergodic chains \citep{mitrophanov_sensitivity_2005,meyn_markov_2009}.

\emph{Third---and this is the paper's reason to exist---is the framework's central behavioral premise testable?} PTMC's premise is that each agent has a \emph{personal} reaction to external information: a per-agent news sensitivity $\eta^{(k)}$, drawn from a distribution $Q$, scaling a saturating response $g(\eta z)$ to a signed news shock of magnitude $z$. Section~\ref{sec:identification} proves that this premise is identifiable from aggregate data. If $\Var_Q(\eta)>0$, the aggregate impact curve $A(z)=\E_Q[g(\eta z)]$ lies strictly below the homogeneous curve with the same origin slope, at every $z>0$ (Theorem~\ref{thm:newsid}(i): a strict Jensen gap); knowledge of $A$ near zero identifies $Q$ up to nothing at all---the atom of news-ignorers included---via odd moments and Hausdorff determinacy (Theorem~\ref{thm:newsid}(ii)); if $A$ is known only up to an overall scale, as it is empirically, the \emph{shape} of the active part $Q(\cdot \mid \eta>0)$ remains identified, and heterogeneity of the active part remains detectable (Corollary~\ref{cor:scale}). Proposition~\ref{prop:unknowng} draws the sharp boundary of these claims: if the response function $g$ is unknown within the natural admissible class, identifiability fails completely---every heterogeneous population is observationally equivalent to a homogeneous population with a deformed response function. Identification of heterogeneous reaction to information is therefore a statement \emph{within} a model class whose response nonlinearity is fixed (as it is in PTMC, where $g$ is the architecture's output nonlinearity), not a model-free property of order flow. We are not aware of a statement of this identification problem, in either direction, in the agent-based-market literature; its mathematical home is the random-coefficients tradition of \citet{beran_estimating_1992}, \citet{ichimura_maximum_1998}, and \citet{gautier_nonparametric_2013}.

Section~\ref{sec:inference} completes the identification theory with inference: a nonlinear-least-squares estimator of $(\E\eta,\Var\eta)$ from event-study data is $\sqrt n$-consistent and asymptotically normal (Theorem~\ref{thm:nls}), and the natural test of the homogeneity null $H_0:\Var(\eta)=0$---a null on the boundary of the parameter space---has the $\tfrac12\chi^2_0+\tfrac12\chi^2_1$ limit of \citet{self_asymptotic_1987} and \citet{andrews_testing_2001} (Theorem~\ref{thm:boundary}). The boundary correction is not pedantic: the naive bootstrap is inconsistent in exactly this situation \citep{andrews_inconsistency_2000}, so a design that tested $\Var(\eta)=0$ by resampling would be wrong in a way that matters.

\emph{Fourth, when is PTMC provably the better tool?} Section~\ref{sec:separation} turns the comparison with rival method classes into theorems, in the only form such comparisons admit: conditional separations. Theorems~\ref{thm:quantjensen}--\ref{thm:homsep} give every homogeneous-population simulator an irreducible bias floor of explicit size $\tfrac38z^2\beta(z)\Var_Q(\eta)$ on impact functionals---an error no data or compute can remove, while the PTMC-class estimator is consistent. Theorem~\ref{thm:lucas} is the stronger statement: for an interventional target (a shift in the population's news sensitivity---a stress scenario), two markets are constructed that are \emph{observationally identical} yet differ in the target by an explicit $2\Delta$, so every estimator that is a functional of observational data---Bayesian neural networks, deep ensembles, and LLM-based forecasters included---has worst-case error at least $\Delta$ at every sample size, whereas the structural PTMC estimator is consistent when its response family is correct (Corollary~\ref{cor:lucasptmc}). This is the Lucas critique \citep{lucas_econometric_1976} converted into a minimax bound, with this paper's own observational-equivalence construction (Proposition~\ref{prop:unknowng}) as the two-point pair; Remark~\ref{rem:lucasreading} states with equal care what the theorem does \emph{not} claim---nothing about purely predictive targets, and nothing unconditional on the structural commitment.

\paragraph{What is and is not guaranteed.} Because ``convergence'' claims are easy to conflate, we state at the outset which one this paper proves, which it bounds, and which it leaves open.
\begin{itemize}
\item[(a)] \emph{Convergence in runs---proved, unconditionally.} For fixed population size $K$ and horizon $T$, $\hat\mu_N\to\E_{\cP}[F]$ almost surely at the canonical $N^{-1/2}$ rate with asymptotically valid confidence intervals (Theorem~\ref{thm:consistency}); the only hypotheses are boundedness of $F$ and independent runs, both true by construction. The limit, however, is the simulator's \emph{own} estimand.
\item[(b)] \emph{Distance to the correctly specified simulator---bounded, conditionally.} Corollary~\ref{cor:budget} controls the gap between that estimand and the one the same simulator would produce under the true persona distribution and the true policy, by two input-error terms---the Wasserstein error in the calibrated distribution of trader characteristics and the total-variation error of the cloned policy---each measurable from data. These terms do not shrink with more simulation, only with better inputs; the bound is worst-case, linear in $KT$, and informative precisely when the measured input errors are small on that scale (Remark~\ref{rem:budgetreading}).
\item[(c)] \emph{Two things no theorem here delivers.} The behavior of the estimand as $K\to\infty$ is stated as an open problem (Section~\ref{sec:discussion}), and no result asserts that the simulator class contains the real market: that is the empirical task of the companion validation protocol, for which Sections~\ref{sec:identification}--\ref{sec:inference} supply the sharpest available test.
\end{itemize}
What the theorems jointly deliver---and this is the sense in which they demonstrate the framework's potential---is a property rare among agent-based models: every component of PTMC's error either vanishes with compute or is attributable to a named, separately measurable input.

\paragraph{Honest positioning.} Theorems~\ref{thm:consistency} and~\ref{thm:optimalR} are instances of known results, included with proofs because the paper's later sections and its companion experiment design use their exact statements; the contribution there is the framing of persona draws as input uncertainty, not the mathematics. Theorem~\ref{thm:stability} and Lemma~\ref{lem:softmax} are, to our knowledge, new for interacting agent-based simulators, though the proof technique (one-step maximal coupling) is classical. Theorem~\ref{thm:uniform} is an application of \citet{mitrophanov_sensitivity_2005}-style perturbation bounds under an \emph{assumed} Doeblin condition whose verification for a realistic limit order book is open (Section~\ref{sec:discussion}). Theorems~\ref{thm:newsid}--\ref{thm:boundary}, Proposition~\ref{prop:unknowng}, and the separation results of Section~\ref{sec:separation} (Theorems~\ref{thm:quantjensen}--\ref{thm:lucas}) are the paper's contribution; the proof techniques of the latter are elementary (Taylor remainders, analytic continuation, a two-point argument), but we are not aware of their statement in the market-simulation context. No simulation results are reported; the numerical illustration the theory calls for is specified in Section~\ref{sec:numerics} and deferred to the companion experimental study.

\section{Related work}
\label{sec:related}

\paragraph{Input uncertainty in stochastic simulation.} When a simulation's driving distributions are themselves estimated, the estimator's error has a component that no amount of replication removes; quantifying it is the input-uncertainty problem \citep{cheng_sensitivity_1997,song_advanced_2014}. \citet{barton_quantifying_2014} construct confidence intervals accounting for input error via metamodel-assisted bootstrapping. PTMC's outer persona loop is input uncertainty of an unusual kind: the ``input distribution'' is a distribution over decision-making agents, and---unlike in the classical setting, where input uncertainty is a nuisance---the between-population variance $\sigma_P^2$ is itself an object of scientific interest, because it measures whether heterogeneity matters for the functional under study. The variance decomposition and ANOVA estimators of Theorem~\ref{thm:consistency} correspond to the classical one-way random-effects analysis \citep{searle_variance_1992}, and the optimal inner/outer allocation of Theorem~\ref{thm:optimalR} is the two-stage sampling trade-off known since \citet{cochran_sampling_1977}; in the nested-simulation form closest to ours, \citet{sun_efficient_2011} prove unbiasedness of the ANOVA-like variance estimator and derive optimal inner sample sizes under a budget.

\paragraph{Stability and perturbation.} The question ``how far does a simulator's output distribution move when its inputs move'' is treated for Markov chains by perturbation theory: for uniformly ergodic chains, one-step kernel perturbations translate into stationary and finite-horizon perturbations with constants governed by the convergence rate \citep{mitrophanov_sensitivity_2005,meyn_markov_2009}. Our Theorem~\ref{thm:stability} is not a chain-perturbation result---it couples two $K$-agent simulations agent by agent and step by step, paying a factor $KT$---but Theorem~\ref{thm:uniform} connects the two: under a Doeblin condition the chain-perturbation machinery replaces $T$ by a constant. The linear-in-$T$ worst case echoes the analysis of behavioral cloning under distribution shift \citep{ross_efficient_2010}, where per-step policy error $\varepsilon$ compounds to $O(T^2\varepsilon)$ in cumulative per-step loss and to $O(T\varepsilon)$ in the probability that the trajectory ever deviates; our bounded terminal functional corresponds to the latter regime. Optimal-transport distances enter only through the elementary coupling characterization of $\Wone$ \citep{villani_optimal_2009}.

\paragraph{Random-coefficient identification.} Recovering the distribution of a latent per-unit coefficient from aggregate or cross-sectional data is a classical program: \citet{beran_estimating_1992} for linear random-coefficient regression, \citet{ichimura_maximum_1998} and \citet{gautier_nonparametric_2013} for binary choice. Theorem~\ref{thm:newsid} belongs to this tradition with two twists specific to the market-simulation setting: the mixing function $g$ is a bounded, saturating, \emph{odd} nonlinearity fixed by the agent architecture, which makes the moment problem one on a compact support (Hausdorff, hence determinate \citep{shohat_problem_1943}); and the natural empirical curve is available only up to scale, which is why Corollary~\ref{cor:scale}'s up-to-scale statement, and the negative Proposition~\ref{prop:unknowng}, matter for practice. Boundary inference on a variance-type parameter goes back to \citet{chernoff_distribution_1954} and \citet{self_asymptotic_1987}, with the extremum-estimator generality we need in \citet{andrews_testing_2001} and the bootstrap caveat in \citet{andrews_inconsistency_2000}.

\paragraph{Agent-based market models.} The behavioral-relevance question that $\sigma_P^2$ quantifies originates with \citet{gode_allocative_1993}; the stylized facts that heterogeneity is supposed to buy are catalogued by \citet{cont_stylized_2001}; hand-coded heterogeneous-agent models that produce them include \citet{lux_volatility_1999}. Mean-field and scaling limits of order-book models \citep{lachapelle_efficiency_2016,horst_law_2017} are the natural home for the open $K\to\infty$ questions we state but do not solve. The behavioral persona components whose distributions $\cP$ encodes are grounded in \citet{kahneman_prospect_1979,tversky_loss_1991,odean_overconfidence_1998}; the news-novelty channel that motivates the identification target of Section~\ref{sec:identification} is documented by \citet{glasserman_new_2023}.

\section{Formal setup}
\label{sec:setup}

\subsection{The simulator as a Markov chain}
\label{sec:simulator}

Throughout, the following objects are fixed. A \emph{persona space} $P\subset\R^{d_p}$, compact, with norm $\|\cdot\|$; personas are tuples $p=(\theta,\rho)$ of demographic and behavioral coordinates as in the companion framework paper. A \emph{persona distribution} $\cP$, a Borel probability measure on $P$. A \emph{state space} $\cS$, a Polish space carrying the full market configuration: the limit order book, price and news history, and every bot's inventory and wealth. A finite \emph{action space} $\cA$ (order type and discretized size/price; Remark~\ref{rem:continuous} treats continuous heads). A \emph{trained policy} $\pi_\phi$: a Markov kernel from $\cS\times P$ to $\cA$, so that bot $k$ in state $s$ with persona $p^{(k)}$ plays $a\sim\pi_\phi(\cdot\mid s,p^{(k)})$. An i.i.d.\ \emph{exogenous shock} sequence $(\xi_t)_{t\ge1}$ with values in a Polish space $\Xi$ and law $\nu_\xi$, independent of everything else (news arrivals, liquidity shocks); the degenerate case $\xi_t\equiv\text{const}$ is allowed. A measurable, \emph{deterministic matching map} $\Psi:\cS\times\cA^K\times\Xi\to\cS$: given the state, the action profile, and the shock, the continuous double auction clears deterministically.

A \emph{run} is generated as follows: draw a persona population $\omega=(p^{(1)},\dots,p^{(K)})\sim\cP^{\otimes K}$; fix a (deterministic) initial state $s_0$; for $t=0,\dots,T-1$, draw actions $a_t^{(k)}\sim\pi_\phi(\cdot\mid s_t,p^{(k)})$ independently across $k$ given $s_t$, and set $s_{t+1}=\Psi(s_t,(a_t^{(1)},\dots,a_t^{(K)}),\xi_{t+1})$. The \emph{path} is $X=(s_0,\dots,s_T)\in\cS^{T+1}$. A \emph{functional} $F:\cS^{T+1}\to\R$ is Borel measurable with $|F|\le M$; all functionals in the companion design (maximum drawdown on a capped price grid, crash indicators, capped tail statistics, spread summaries) are of this form by construction. Write
\[
\mu \;=\; \E[F(X)], \qquad \mu(\omega) \;=\; \E[F(X)\mid\omega],
\]
where $\mu(\omega)$ is a measurable function of $\omega$ (Fubini's theorem, all kernels being Markov kernels between Polish spaces and $F$ bounded). Conditional on $\omega$, the state sequence $(s_t)_{t\ge0}$ is a time-homogeneous Markov chain on $\cS$ with transition kernel
\begin{equation}
P_\omega(s,B) \;=\; \int_\Xi \sum_{a\in\cA^K} \Big[\prod_{k=1}^K \pi_\phi(a^{(k)}\mid s,p^{(k)})\Big]\, \mathbf 1\{\Psi(s,a,\xi)\in B\}\, \nu_\xi(d\xi).
\label{eq:kernel}
\end{equation}

Three assumptions are used at various points; each theorem states which it needs.
\begin{itemize}
\item[\textbf{(A1)}] \emph{(TV-Lipschitz policy.)} There is $L<\infty$ with
$\sup_{s\in\cS}\TV\big(\pi_\phi(\cdot\mid s,p),\,\pi_\phi(\cdot\mid s,p')\big)\le L\|p-p'\|$ for all $p,p'\in P$,
where $\TV(\lambda,\lambda')=\sup_{B}|\lambda(B)-\lambda'(B)|$ denotes total variation distance. Lemma~\ref{lem:softmax} verifies (A1) for softmax policies with Lipschitz logits.
\item[\textbf{(A2)}] \emph{(Bounded functional.)} $|F|\le M$.
\item[\textbf{(A3)}] \emph{(Deterministic matching, common shocks.)} The dynamics are as above: matching is a deterministic measurable function of (state, action profile, shock), and when two simulations are compared, the exogenous shocks have the same law in both (and can hence be coupled to coincide).
\end{itemize}

\subsection{The PTMC estimator and its two-stage refinement}

The single-stage estimator is $\hat\mu_N=\frac1N\sum_{i=1}^N F(X_i)$ over $N$ i.i.d.\ runs (independent persona populations \emph{and} independent within-run randomness). The two-stage design draws $D$ persona populations $\omega_1,\dots,\omega_D\sim\cP^{\otimes K}$ i.i.d.; for each $i$ it runs $R$ conditionally independent replications sharing $\omega_i$, yielding $F_{ij}$, $j=1,\dots,R$, and sets $\bar F_i=\frac1R\sum_j F_{ij}$, $\hat\mu_{D,R}=\frac1D\sum_i\bar F_i$. The estimator randomness thus has exactly the two sources the theory separates: which population was drawn, and what happened within the run.

\section{Estimator theory: variance decomposition and budget allocation}
\label{sec:estimator}

Define the \emph{between-population} and \emph{within-run} variance components
\[
\sigma_P^2 \;=\; \Var\big(\mu(\omega)\big), \qquad
\sigma_w^2 \;=\; \E\big[\Var\big(F(X)\mid\omega\big)\big],
\]
both finite by (A2), and note $\Var(F(X))=\sigma_P^2+\sigma_w^2$ by the law of total variance.

\begin{theorem}[Consistency, CLT, and variance decomposition]
\label{thm:consistency}
Assume (A2), runs i.i.d.\ as above. Then:
\begin{enumerate}
\item $\E[\hat\mu_N]=\mu$ and $\hat\mu_N\to\mu$ almost surely as $N\to\infty$.
\item If $\sigma_P^2+\sigma_w^2>0$, then $\sqrt N(\hat\mu_N-\mu)\Rightarrow\mathcal N(0,\sigma_P^2+\sigma_w^2)$.
\item In the two-stage design, $\E[\hat\mu_{D,R}]=\mu$,
\[
\Var(\hat\mu_{D,R}) \;=\; \frac{\sigma_P^2}{D}+\frac{\sigma_w^2}{DR},
\]
and, for $R\ge2$, $D \ge 2$, the one-way random-effects ANOVA estimators
\[
\hat\sigma_w^2=\frac{1}{D(R-1)}\sum_{i=1}^D\sum_{j=1}^R\big(F_{ij}-\bar F_i\big)^2,
\qquad
\hat\sigma_P^2=\frac{1}{D-1}\sum_{i=1}^D\big(\bar F_i-\hat\mu_{D,R}\big)^2-\frac{\hat\sigma_w^2}{R}
\]
satisfy $\E[\hat\sigma_w^2]=\sigma_w^2$ and $\E[\hat\sigma_P^2]=\sigma_P^2$. Moreover, if $\sigma_P^2+\sigma_w^2/R>0$, then $\sqrt D(\hat\mu_{D,R}-\mu)\Rightarrow\mathcal N(0,\sigma_P^2+\sigma_w^2/R)$ as $D\to\infty$ with $R$ fixed.
\end{enumerate}
\end{theorem}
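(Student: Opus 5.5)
The plan is to reduce every claim to standard i.i.d.\ limit theory and one-way random-effects algebra, using only boundedness (A2) and the independence structure of the two designs.

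\textbf{Single-stage design.} For parts 1 and 2, the runs $F(X_1),\dots,F(X_N)$ are i.i.d.\ and bounded by $M$, hence integrable with finite variance. Unbiasedness then follows from linearity of expectation. Almost sure convergence follows from Kolmogorov's strong law of large numbers. The central limit theorem follows from the Lindeberg--L\'evy CLT with variance $\Var(F(X))$. I would identify this variance as $\sigma_P^2+\sigma_w^2$ by the law of total variance, already noted before the statement. The only measurability point is that $\mu(\omega)$ and $\Var(F(X)\mid\omega)$ are measurable. This is supplied by the Fubini remark in Section~\ref{sec:simulator}.

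\textbf{Two-stage design: structure and variance of $\hat\mu_{D,R}$.} For part 3, I would first record the structure. The pairs $(\omega_i,(F_{ij})_j)$ are i.i.d.\ across $i$. Given $\omega_i$, the $F_{ij}$ are i.i.d.\ with mean $\mu(\omega_i)$ and variance $v(\omega_i):=\Var(F\mid\omega_i)$.
\begin{itemize}
\item Conditioning on $\omega_i$ gives $\E[\bar F_i\mid\omega_i]=\mu(\omega_i)$ and $\Var(\bar F_i\mid\omega_i)=v(\omega_i)/R$.
\item By total variance, $\Var(\bar F_i)=\sigma_P^2+\sigma_w^2/R$.
\item Independence across $i$ then yields $\E\hat\mu_{D,R}=\mu$ and the stated variance formula.
\end{itemize}

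\textbf{Unbiasedness of the ANOVA estimators.} For $\hat\sigma_w^2$, I would use the conditional unbiasedness of the within-group sample variance: $\E[\frac{1}{R-1}\sum_j(F_{ij}-\bar F_i)^2\mid\omega_i]=v(\omega_i)$. Averaging over $i$ and taking expectations gives $\sigma_w^2$. For the between-group term, the $\bar F_i$ are i.i.d., so their sample variance is unbiased for $\Var(\bar F_i)=\sigma_P^2+\sigma_w^2/R$. Subtracting $\E[\hat\sigma_w^2]/R=\sigma_w^2/R$ gives $\E\hat\sigma_P^2=\sigma_P^2$. The requirements $R\ge2$ and $D\ge2$ are exactly what make the divisors $R-1$ and $D-1$ legitimate.

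\textbf{CLT with $R$ fixed.} Since $\hat\mu_{D,R}$ is the mean of $D$ i.i.d.\ bounded variables $\bar F_i$ with variance $\sigma_P^2+\sigma_w^2/R>0$, the Lindeberg--L\'evy CLT applies directly as $D\to\infty$.

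I expect no genuine obstacle. The one step requiring care is the conditional-independence bookkeeping in the two-stage design. Specifically, one must justify that, given $\omega_i$, the replications are i.i.d.\ draws from the run law with that population, so that conditional sample-variance unbiasedness applies. One must also justify that distinct groups are independent. Both follow from the construction: fresh within-run randomness per replication, and independent $\omega_i$. I would state these facts explicitly as a lemma-free observation before doing the computations.
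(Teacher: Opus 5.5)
Your proposal is correct and follows the paper's proof essentially step for step. Both arguments use Kolmogorov's strong law and Lindeberg--L\'evy for the single-stage design, with the variance identified as $\sigma_P^2+\sigma_w^2$ by total variance; then the conditional i.i.d.\ structure given $\omega_i$, total variance for $\Var(\bar F_i)=\sigma_P^2+\sigma_w^2/R$, conditional unbiasedness of the within-group sample variance, and subtraction of $\hat\sigma_w^2/R$ from the unbiased between-group sample variance.
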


\begin{proof}
(1)--(2). The variables $F_i=F(X_i)$ are i.i.d.\ with mean $\mu$ and variance $\Var(F_i)=\Var(\E[F_i\mid\omega_i])+\E[\Var(F_i\mid\omega_i)]=\sigma_P^2+\sigma_w^2<\infty$. Unbiasedness is linearity of expectation; almost-sure convergence is Kolmogorov's strong law; the central limit theorem is Lindeberg--L\'evy.

(3). Conditionally on $\omega_i$, the $F_{ij}$, $j=1,\dots,R$, are i.i.d.\ with mean $\mu(\omega_i)$ and variance $v(\omega_i):=\Var(F\mid\omega_i)$, and $\E[v(\omega_i)]=\sigma_w^2$. Hence $\E[\bar F_i\mid\omega_i]=\mu(\omega_i)$, $\Var(\bar F_i\mid\omega_i)=v(\omega_i)/R$, and by the law of total variance
\[
\Var(\bar F_i)=\Var(\mu(\omega_i))+\E\!\left[\frac{v(\omega_i)}{R}\right]=\sigma_P^2+\frac{\sigma_w^2}{R}.
\]
The rows $(\bar F_i)_{i\le D}$ are i.i.d., so $\Var(\hat\mu_{D,R})=(\sigma_P^2+\sigma_w^2/R)/D$, which is the displayed formula, and the stated CLT for $\hat\mu_{D,R}$ is again Lindeberg--L\'evy applied to the i.i.d.\ bounded variables $\bar F_i$.

For $\hat\sigma_w^2$: conditionally on $\omega_i$, $\sum_j(F_{ij}-\bar F_i)^2$ is $(R-1)$ times the usual unbiased sample variance of an i.i.d.\ sample, so $\E\big[\sum_j(F_{ij}-\bar F_i)^2\mid\omega_i\big]=(R-1)v(\omega_i)$; taking expectations and summing over $i$ gives $\E[\hat\sigma_w^2]=\sigma_w^2$.

For $\hat\sigma_P^2$: the $\bar F_i$ are i.i.d.\ with variance $\sigma_P^2+\sigma_w^2/R$, so the sample variance $\frac{1}{D-1}\sum_i(\bar F_i-\hat\mu_{D,R})^2$ has expectation $\sigma_P^2+\sigma_w^2/R$; subtracting $\hat\sigma_w^2/R$, which has expectation $\sigma_w^2/R$, leaves $\E[\hat\sigma_P^2]=\sigma_P^2$.
\end{proof}

\medskip

\begin{remark}[Using $\hat\sigma_P^2$ as a test statistic]
\label{rem:permutation}
$\hat\sigma_P^2$ can be negative in samples (a familiar feature of ANOVA variance-component estimation \citep{searle_variance_1992}); for point reporting it is customary to truncate at zero, at the price of a small positive bias. As a \emph{test}, the natural statistic is $\mathrm{MS}_{\mathrm{between}}/\mathrm{MS}_{\mathrm{within}}$ with $\mathrm{MS}_{\mathrm{between}}=\frac{R}{D-1}\sum_i(\bar F_i-\hat\mu)^2$ and $\mathrm{MS}_{\mathrm{within}}=\hat\sigma_w^2$. Two nulls must be distinguished. Under the \emph{sharp null} that the conditional law of $F$ given $\omega$ does not depend on $\omega$, all $DR$ observations are i.i.d., group labels are exchangeable, and the permutation test (re-randomizing runs across persona groups) is exact at any sample size \citep{lehmann_testing_2005}. Under the weaker \emph{moment null} $\sigma_P^2=0$, the conditional variance $v(\omega)$ may still vary with $\omega$; exchangeability then fails and the permutation test is only approximately valid. The classical $F_{D-1,D(R-1)}$ reference distribution is exact under the sharp null \emph{plus} conditional normality with constant $v(\omega)$. The companion experiment design pre-registers the permutation version and interprets rejections as evidence against the sharp null, which is the scientifically relevant one: it is exactly the statement ``persona draws do not affect the outcome distribution.''
\end{remark}

\begin{theorem}[Optimal inner/outer allocation]
\label{thm:optimalR}
Let $c_0>0$ be the cost of one outer draw (persona sampling and market setup) and $c_1>0$ the cost of one inner replication, so a $(D,R)$ design costs $B=D(c_0+Rc_1)$. Treat $D,R$ as continuous with $\sigma_P^2 > 0$, $\sigma^2_w > 0$. Then, subject to the budget $B$, $\Var(\hat\mu_{D,R})=(\sigma_P^2+\sigma_w^2/R)/D$ is minimized at
\[
R^*=\sqrt{\frac{c_0\,\sigma_w^2}{c_1\,\sigma_P^2}},
\qquad\text{with minimal variance}\qquad
V^*=\frac{\big(\sqrt{c_0\sigma_P^2}+\sqrt{c_1\sigma_w^2}\big)^2}{B}.
\]
\end{theorem}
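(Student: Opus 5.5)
The plan is to eliminate $D$ using the budget constraint, which reduces the problem to a one-variable minimization in $R$. Solving $B=D(c_0+Rc_1)$ for $D$ gives $D=B/(c_0+Rc_1)$. Substituting this into the variance formula from Theorem~\ref{thm:consistency}(3) gives
\[
V(R)\;=\;\frac{1}{B}\Big(\sigma_P^2+\frac{\sigma_w^2}{R}\Big)\big(c_0+Rc_1\big)
\;=\;\frac{1}{B}\Big(c_0\sigma_P^2+c_1\sigma_w^2+Rc_1\sigma_P^2+\frac{c_0\sigma_w^2}{R}\Big),
\]
which is to be minimized over $R>0$, with $D$ then determined by the constraint. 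Only the last two terms depend on $R$.

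To minimize, I will apply AM--GM to those two terms:
\[
Rc_1\sigma_P^2+\frac{c_0\sigma_w^2}{R}\;\ge\;2\sqrt{c_0c_1\sigma_P^2\sigma_w^2}.
\]
Equality holds exactly when $Rc_1\sigma_P^2=c_0\sigma_w^2$, that is, at $R^*=\sqrt{c_0\sigma_w^2/(c_1\sigma_P^2)}$. The hypotheses $\sigma_P^2,\sigma_w^2>0$ are what make $R^*$ finite and strictly positive, so it is an interior point of $(0,\infty)$. The same conclusion follows from calculus: $V$ is strictly convex on $(0,\infty)$ with $V''(R)=2c_0\sigma_w^2/(BR^3)>0$, so its unique critical point is the global minimum.

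Substituting $R^*$ back, the minimal value is
\[
V^*=\frac{1}{B}\Big(c_0\sigma_P^2+c_1\sigma_w^2+2\sqrt{c_0\sigma_P^2}\sqrt{c_1\sigma_w^2}\Big)=\frac{\big(\sqrt{c_0\sigma_P^2}+\sqrt{c_1\sigma_w^2}\big)^2}{B},
\]
which is the stated value. The corresponding number of outer draws is $D^*=B/(c_0+R^*c_1)$.

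No step here is a real obstacle; the proof is the classical two-stage sampling computation. The only point that needs care is the reduction itself: one must note that every feasible $(D,R)$ lies on the budget curve, so minimizing over $R$ after substituting $D(R)$ is equivalent to minimizing under the constraint. It also helps to record why the positivity hypotheses matter, since this motivates Corollary~\ref{cor:gs}. If $\sigma_P^2=0$, then $V(R)$ is strictly decreasing in $R$, so the optimum is the boundary case $R\to\infty$ with $D$ as small as possible, meaning personas are never redrawn.
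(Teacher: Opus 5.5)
Your proposal is correct and follows essentially the same route as the paper: substitute $D=B/(c_0+Rc_1)$, isolate the $R$-dependent terms $Rc_1\sigma_P^2+c_0\sigma_w^2/R$, apply AM--GM with its equality condition to get $R^*$, and substitute back to obtain $V^*$. Your extra calculus check ($V''(R)=2c_0\sigma_w^2/(BR^3)>0$) and the remark on the $\sigma_P^2=0$ boundary case are accurate additions.
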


\begin{proof}
Substituting $D=B/(c_0+Rc_1)$,
\[
B\cdot V(R)\;=\;\Big(\sigma_P^2+\frac{\sigma_w^2}{R}\Big)(c_0+Rc_1)
\;=\;\sigma_P^2c_0+\sigma_w^2c_1+\Big(\sigma_P^2c_1R+\frac{\sigma_w^2c_0}{R}\Big).
\]
The bracketed term is strictly convex in $R>0$ and, by the AM--GM inequality, is at least $2\sqrt{\sigma_P^2\sigma_w^2c_0c_1}$, with equality iff $\sigma_P^2c_1R=\sigma_w^2c_0/R$, i.e.\ iff $R=R^*$. At $R^*$,
$B\,V(R^*)=\sigma_P^2c_0+\sigma_w^2c_1+2\sqrt{\sigma_P^2c_0\,\sigma_w^2c_1}=\big(\sqrt{c_0\sigma_P^2}+\sqrt{c_1\sigma_w^2}\big)^2$.
\end{proof}

\medskip

\begin{remark}
In practice $R$ is an integer; since $B\,V(R)$ is convex in $R$, the integer optimum is one of $\lfloor R^*\rfloor,\lceil R^*\rceil$. The pilot-then-allocate use of Theorem~\ref{thm:optimalR}---estimate $(\hat\sigma_P^2,\hat\sigma_w^2)$ from a small two-stage pilot via Theorem~\ref{thm:consistency}(3), then set $R$---is the standard two-stage sampling design \citep{cochran_sampling_1977}; in the nested-simulation setting, \citet{sun_efficient_2011} derive the same square-root allocation and show the optimal inner sample size stays bounded as $B\to\infty$.
\end{remark}

\begin{corollary}[The Gode--Sunder objection as an allocation rule]
\label{cor:gs}
Fix $c_0,c_1,\sigma_w^2>0$. As $\sigma_P^2\to0$, $R^*\to\infty$: in the limit of irrelevant personas the optimal design draws one population and never redraws it. Conversely $R^*$ is small precisely when $\sigma_P^2$ is large relative to $\sigma_w^2$. Thus the measured $R^*$ is itself a scale-free summary of how much learned heterogeneity matters for the functional under study, and estimating it answers, functional by functional, the objection of \citet{gode_allocative_1993} that behavioral structure may be doing no work.
\end{corollary}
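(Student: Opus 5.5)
The corollary follows by reading off the closed form of Theorem~\ref{thm:optimalR}. That theorem gives $R^*=\sqrt{c_0\sigma_w^2/(c_1\sigma_P^2)}$ for $\sigma_P^2,\sigma_w^2>0$. With $c_0,c_1,\sigma_w^2>0$ fixed, $R^*$ is a strictly decreasing continuous function of $\sigma_P^2$ on $(0,\infty)$, and $R^*\sim C/\sigma_P$ with $C=\sqrt{c_0\sigma_w^2/c_1}>0$. Hence $R^*\to\infty$ as $\sigma_P^2\downarrow0$. Under the budget $B=D(c_0+Rc_1)$ this forces $D^*=B/(c_0+R^*c_1)\to0$, or, once $D$ is constrained to be a positive integer, $D^*=1$ eventually. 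This is the formal content of ``draws one population and never redraws it.''

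To make the limiting statement honest rather than only asymptotic, I will also treat the boundary case $\sigma_P^2=0$ directly, since Theorem~\ref{thm:optimalR} excludes it. By Theorem~\ref{thm:consistency}(3), $\Var(\hat\mu_{D,R})=\sigma_w^2/(DR)$. Under the budget this equals $\sigma_w^2(c_0+Rc_1)/(BR)=\sigma_w^2(c_0/R+c_1)/B$. This is strictly decreasing in $R$, so for fixed $B$ the optimum puts as much budget as possible into inner replications. In integer terms that means $D=1$ and $R=\lfloor (B-c_0)/c_1\rfloor$, which matches the limit above.

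The converse claim, that $R^*$ is small precisely when $\sigma_P^2$ is large relative to $\sigma_w^2$, is the observation that $R^*$ depends on the variance components only through the ratio $\sigma_w^2/\sigma_P^2$, scaled by the fixed cost ratio $c_0/c_1$. Explicitly, $R^*<r$ if and only if $\sigma_P^2/\sigma_w^2>c_0/(c_1r^2)$. I will state it in this threshold form so that ``small'' and ``large'' have a precise meaning. The remark that $R^*$ is scale-free is then immediate: multiplying $F$ by a constant multiplies both $\sigma_P^2$ and $\sigma_w^2$ by its square and leaves $R^*$ unchanged. Estimability of $R^*$ follows by plugging the unbiased ANOVA estimators of Theorem~\ref{thm:consistency}(3) into the formula. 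Consistency of this plug-in, via the law of large numbers in $D$ together with the continuous mapping theorem, holds whenever $\sigma_P^2>0$.

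There is no real obstacle here. The only delicate point is interpretive: the limit $\sigma_P^2\to0$ lies outside the hypotheses of Theorem~\ref{thm:optimalR}, which is why the separate computation at $\sigma_P^2=0$ is needed. The final sentence about the Gode--Sunder objection is interpretation, not a mathematical claim, and needs no proof.
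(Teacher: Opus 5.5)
Your proposal is correct and takes the paper's route. The paper gives no separate argument: the corollary is read off from the closed form $R^*=\sqrt{c_0\sigma_w^2/(c_1\sigma_P^2)}$ of Theorem~\ref{thm:optimalR}, and your threshold form, scale-invariance observation, and direct check at $\sigma_P^2=0$ (where $B\,\Var(\hat\mu_{D,R})=\sigma_w^2(c_0/R+c_1)$ decreases strictly in $R$) are sound additions.

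One caution on your integer aside. The claim ``$D^*=1$ eventually'' holds when only $D$ is integer, because the reduced objective $\sigma_P^2/D+\sigma_w^2c_1/(B-Dc_0)$ is convex and increasing on $D\ge1$ once $\sigma_P^2<\sigma_w^2c_0c_1/(B-c_0)^2$. With $R$ integer as well, ties in $DR$ can occur: for example, $c_0=0.1$, $c_1=1$, $B=10.2$ allows both $(D,R)=(1,10)$ and $(2,5)$. There $D=2$ is strictly better for every $\sigma_P^2>0$, so state ``$D^*=1$'' for the relaxed problem only, and at $\sigma_P^2=0$ call $D=1$ \emph{an} optimum rather than the unique one.
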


\section{Stability: reliability under misestimated inputs}
\label{sec:stability}

Monte Carlo error is only one of three error sources: the persona distribution $\cPh$ is estimated, and the policy $\hat\pi$ is trained. This section bounds the movement of the estimand itself.

\subsection{Verifying assumption (A1) in practice}

Assumption (A1) is not a special or fragile requirement: it holds for the standard policy architectures used in practice, with an explicit, computable constant $L$ in every case. For a finite action space with a softmax head and Lipschitz logits, (A1) holds with $L$ no larger than the logits' own Lipschitz constant; for composite actions with a continuous (e.g.\ Gaussian) mark head, an analogous bound holds with an extra additive term controlled by the mark head's Lipschitz behavior; and even if the mark head's own scale varies with the persona, an explicit bound remains available via Pinsker's inequality. The precise statement and proof---Lemma~\ref{lem:softmax} and Remark~\ref{rem:scale}---are given in Appendix~\ref{app:softmax}, since they are a technical verification for a specific architecture rather than part of the paper's argument; what the results of this section use is only that (A1) holds with some finite $L$, verified once and for all in the appendix for the architecture the companion framework actually trains.

\subsection{The finite-horizon stability bound}

$\Wone$ denotes the Wasserstein-1 distance on probability measures on $P$ with cost $\|\cdot\|$: $\Wone(\cP,\cPh)=\inf_\gamma\E_{(p,\hat p)\sim\gamma}\|p-\hat p\|$ over couplings $\gamma$ of $(\cP,\cPh)$; an optimal coupling exists \citep[Thm.~4.1]{villani_optimal_2009}.

\begin{theorem}[Stability under misestimation of $\cP$ and of $\pi$]
\label{thm:stability}
Assume (A1)--(A3), with both simulations started at the same $s_0$ and run for $T$ steps with $K$ bots.
\begin{enumerate}
\item For any two persona distributions $\cP,\cPh$ on $P$,
\[
\Big|\E_{\cP^{\otimes K}}[F(X)]-\E_{\cPh^{\otimes K}}[F(X)]\Big|\;\le\;2MKTL\;\Wone(\cP,\cPh).
\]
\item If a second policy $\hat\pi$ satisfies $\sup_{s,p}\TV\big(\hat\pi(\cdot\mid s,p),\pi_\phi(\cdot\mid s,p)\big)\le\varepsilon$, then, with the same persona distribution in both simulations,
\[
\Big|\E^{\hat\pi}[F(X)]-\E^{\pi_\phi}[F(X)]\Big|\;\le\;2MKT\varepsilon.
\]
\end{enumerate}
\end{theorem}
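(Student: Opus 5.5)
The plan is a coupling argument: construct both simulations on one probability space so that they agree on the whole path except on an event of small probability. Then use $|\E F(X)-\E F(\hat X)|\le 2M\,\Prob(X\neq\hat X)$, which follows from $|F|\le M$ by (A2).

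For part 1, I first fix an optimal coupling $\gamma$ of $(\cP,\cPh)$ (it exists, as recalled above). I draw $(p^{(k)},\hat p^{(k)})\sim\gamma$ independently for $k=1,\dots,K$, so the marginals are $\cP^{\otimes K}$ and $\cPh^{\otimes K}$. By (A3) I use the same shocks $\xi_t$ in both runs, and both start at $s_0$.

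I then build the paths step by step. At step $t$, given $s_t$ and $\hat s_t$, if $s_t=\hat s_t$ I draw each pair $(a_t^{(k)},\hat a_t^{(k)})$ from a maximal coupling of $\pi_\phi(\cdot\mid s_t,p^{(k)})$ and $\pi_\phi(\cdot\mid s_t,\hat p^{(k)})$, independently across $k$. If $s_t\neq\hat s_t$ I draw the actions independently from their own laws. In either case the action profile of each run has the correct conditional law $\prod_k\pi_\phi(\cdot\mid s_t,p^{(k)})$ given its own past. Hence each path has the law of the corresponding simulation, i.e.\ the kernel \eqref{eq:kernel}; this is the point to verify carefully, since the coupling is adaptive. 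Because $\Psi$ is deterministic and the shocks are shared, $s_t=\hat s_t$ and $a_t=\hat a_t$ together give $s_{t+1}=\hat s_{t+1}$.

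A union bound over first-disagreement times then gives
\[
\Prob(X\ne\hat X)\le\sum_{t=0}^{T-1}\sum_{k=1}^K\Prob\big(s_t=\hat s_t,\ a_t^{(k)}\ne\hat a_t^{(k)}\big).
\]
By maximal coupling and (A1), conditionally on the personas and on $s_t=\hat s_t$, each summand is at most $\TV\le\min(1,L\|p^{(k)}-\hat p^{(k)}\|)\le L\|p^{(k)}-\hat p^{(k)}\|$. Taking expectations, each term is at most $L\,\E_\gamma\|p-\hat p\|=L\,\Wone(\cP,\cPh)$, so $\Prob(X\ne\hat X)\le KTL\,\Wone(\cP,\cPh)$. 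Multiplying by $2M$ gives the first bound.

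Part 2 uses the identical construction, with common personas $p^{(k)}=\hat p^{(k)}$ and a maximal coupling of $\pi_\phi(\cdot\mid s,p)$ and $\hat\pi(\cdot\mid s,p)$. Each per-agent, per-step disagreement probability is then at most $\varepsilon$, giving $2MKT\varepsilon$.

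I expect the main obstacle to be measurability: the maximal coupling must be chosen measurably in $(s,p,\hat p)$ so that the coupled process is a legitimate Markov construction. Because $\cA$ is finite, the maximal coupling has an explicit formula with overlap mass $\min(\pi(a),\pi'(a))$ and independent residuals, which is measurable in its inputs, so I expect this to go through. The remaining checks (marginal laws, the union bound) are routine.
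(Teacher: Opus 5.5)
Your proposal is correct and follows essentially the paper's proof. Both use common shocks, per-bot maximal couplings while the runs agree, a union bound over the first disagreement time, and persona pairs drawn i.i.d.\ from a $\Wone$-optimal coupling. The only differences are cosmetic: the paper first proves the fixed-population bound $|\mu(\omega)-\mu(\hat\omega)|\le 2MT\,\delta(\omega,\hat\omega)$ and then averages over $\gamma$, while you take the expectation over the persona coupling directly inside the union bound. You also re-couple whenever the states coincide again, which is harmless.
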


\noindent Where $M$ bounds $F$ via (A2), $L$ is the policy's TV-Lipschitz constant via (A1), and $K,T$ are the population size and horizon fixed in Section~\ref{sec:simulator}.

\begin{proof}
\emph{Step 1: fixed populations.} Fix $\omega=(p^{(k)})_{k\le K}$ and $\hat\omega=(\hat p^{(k)})_{k\le K}$ and set
$\delta(\omega,\hat\omega)=L\sum_{k=1}^K\|p^{(k)}-\hat p^{(k)}\|$.
We construct the two runs on a common probability space. Both use the same shock sequence $(\xi_t)$ (possible by (A3)). Suppose the histories agree up to time $t$, with common state $s_t$. Conditionally, the two action profiles are product measures over $k$; couple them coordinatewise, each bot $k$ by a maximal coupling of $\pi_\phi(\cdot\mid s_t,p^{(k)})$ and $\pi_\phi(\cdot\mid s_t,\hat p^{(k)})$, drawn independently across $k$. (On the finite space $\cA$ the maximal coupling is explicit and measurable in its two input pmfs: with probability $1-\tau$, where $\tau$ is the total variation distance, draw both actions equal from the normalized overlap $\min(q,\hat q)/(1-\tau)$; otherwise draw the pair independently from the normalized positive and negative parts of $q - \hat q$.) By (A1) and the union bound,
\[
\Prob\big(\exists k:\;a_t^{(k)}\ne\hat a_t^{(k)}\;\big|\;\text{histories agree at }t\big)
\;\le\;\sum_{k=1}^K L\|p^{(k)}-\hat p^{(k)}\|\;=\;\delta(\omega,\hat\omega).
\]
If all $K$ coupled action pairs agree, then by (A3) the next states coincide, $s_{t+1}=\Psi(s_t,a_t,\xi_{t+1})=\hat s_{t+1}$, and the construction continues; after the first disagreement, let the two runs evolve independently according to their own laws. Writing $E_T$ for the event that the full paths agree,
\[
\Prob(E_T^c)\;\le\;\sum_{t=0}^{T-1}\Prob\big(\text{first disagreement at }t\big)\;\le\;T\,\delta(\omega,\hat\omega),
\]
and by (A2), since $F(X)=F(\hat X)$ on $E_T$,
\begin{equation}
\big|\mu(\omega)-\mu(\hat\omega)\big|=\big|\E[F(X)-F(\hat X)]\big|\le2M\,\Prob(E_T^c)\le2MT\,\delta(\omega,\hat\omega).
\label{eq:fixedpop}
\end{equation}

\emph{Step 2: random populations.} Let $\gamma$ be a $\Wone$-optimal coupling of $(\cP,\cPh)$ and draw the pairs $(p^{(k)},\hat p^{(k)})$, $k=1,\dots,K$, i.i.d.\ from $\gamma$; then $\omega\sim\cP^{\otimes K}$, $\hat\omega\sim\cPh^{\otimes K}$, and $\E[\delta(\omega,\hat\omega)]=LK\,\Wone(\cP,\cPh)$. By \eqref{eq:fixedpop} and Jensen's inequality,
\[
\big|\E_{\cP^{\otimes K}}F-\E_{\cPh^{\otimes K}}F\big|
=\big|\E_\gamma[\mu(\omega)-\mu(\hat\omega)]\big|
\le2MT\,\E[\delta(\omega,\hat\omega)]=2MKTL\,\Wone(\cP,\cPh).
\]

(2) is the same coupling with the roles of the two \emph{policies} exchanged for the two persona vectors: with a common population $\omega$, couple bot $k$'s action at each step by a maximal coupling of $\hat\pi(\cdot\mid s_t,p^{(k)})$ and $\pi_\phi(\cdot\mid s_t,p^{(k)})$, which disagrees with probability at most $\varepsilon$; the per-step disagreement probability is at most $K\varepsilon$, and \eqref{eq:fixedpop} becomes $|\Delta\E F|\le2MTK\varepsilon$.
\end{proof}

\medskip

\begin{corollary}[Total reliability budget]
\label{cor:budget}
Let $\mu^{\mathrm{true}}$ be the estimand under the true persona distribution $\cP$ and the true policy $\pi^*$, and let $\hat\mu_N$ be the PTMC estimator computed from $N$ i.i.d.\ runs under the estimated pair $(\cPh,\hat\pi)$, with $\mu^{\mathrm{sim}}$ its estimand. Assume (A2)--(A3), (A1) for $\hat{\pi}$, $\sup_{s,p}\TV(\hat\pi,\pi^*)\le\varepsilon_{\mathrm{BC}}$, and $\sigma_P^2+\sigma_w^2>0$ under $(\cPh,\hat\pi)$. Then, for any $\alpha\in(0,1)$,
\[
\limsup_{N\to\infty}\;\Prob\Bigg(\big|\hat\mu_N-\mu^{\mathrm{true}}\big|\;>\;
z_{1-\alpha/2}\,\frac{\sqrt{\sigma_P^2+\sigma_w^2}}{\sqrt N}
\;+\;2MKT\,\varepsilon_{\mathrm{BC}}
\;+\;2MKTL\,\Wone(\cP,\cPh)\Bigg)\;\le\;\alpha .
\]
\end{corollary}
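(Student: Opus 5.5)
The proof will use the triangle inequality with an intermediate estimand. Let $\mu^{\mathrm{mid}}$ denote the estimand under the estimated persona distribution $\cPh$ and the \emph{true} policy $\pi^*$. Then
\[
|\hat\mu_N-\mu^{\mathrm{true}}|\le|\hat\mu_N-\mu^{\mathrm{sim}}|+|\mu^{\mathrm{sim}}-\mu^{\mathrm{mid}}|+|\mu^{\mathrm{mid}}-\mu^{\mathrm{true}}|.
\]
The two bias terms are deterministic and will be bounded by Theorem~\ref{thm:stability}. The first term is Monte Carlo error and will be controlled by Theorem~\ref{thm:consistency}.

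For the policy term, $\mu^{\mathrm{sim}}$ and $\mu^{\mathrm{mid}}$ share the persona distribution $\cPh$ and differ only in the policy ($\hat\pi$ versus $\pi^*$). Part~(2) of Theorem~\ref{thm:stability} therefore gives $|\mu^{\mathrm{sim}}-\mu^{\mathrm{mid}}|\le 2MKT\varepsilon_{\mathrm{BC}}$. Its proof uses only (A2)--(A3) and the uniform TV bound, not (A1), and the TV bound is symmetric in the two policies. The same bound follows for every fixed population and hence after averaging over $\cPh^{\otimes K}$.

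For the persona term, $\mu^{\mathrm{mid}}$ and $\mu^{\mathrm{true}}$ share a policy and differ only in the persona distribution. Part~(1) of Theorem~\ref{thm:stability} applies, but it needs (A1) for the common policy. That policy is $\pi^*$ under my split, while the corollary assumes (A1) only for $\hat\pi$. I would therefore use the other ordering: take the intermediate estimand to be $\mu'$, computed under $(\cP,\hat\pi)$. Then $|\mu^{\mathrm{sim}}-\mu'|\le 2MKTL\,\Wone(\cP,\cPh)$ by part~(1) with policy $\hat\pi$, which satisfies (A1). Also $|\mu'-\mu^{\mathrm{true}}|\le 2MKT\varepsilon_{\mathrm{BC}}$ by part~(2) under the common persona distribution $\cP$. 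This is the ordering I will use, and it exactly matches the hypotheses. Together, the two deterministic terms are bounded by $b:=2MKT\varepsilon_{\mathrm{BC}}+2MKTL\,\Wone(\cP,\cPh)$.

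For the stochastic term, the runs are i.i.d.\ under $(\cPh,\hat\pi)$ with mean $\mu^{\mathrm{sim}}$ and variance $\sigma^2=\sigma_P^2+\sigma_w^2>0$. Theorem~\ref{thm:consistency}(2) then gives $\sqrt N(\hat\mu_N-\mu^{\mathrm{sim}})/\sigma\Rightarrow\mathcal N(0,1)$. The event in the corollary is contained in $\{|\hat\mu_N-\mu^{\mathrm{sim}}|>z_{1-\alpha/2}\sigma/\sqrt N\}$, because on its complement the triangle inequality bounds the total error by $z_{1-\alpha/2}\sigma/\sqrt N+b$. The normal limit has no atoms, so the probability of this larger event converges to $\alpha$ by the portmanteau theorem, and the $\limsup$ bound follows.

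The only real subtlety is choosing the intermediate estimand so that the Lipschitz assumption is invoked only for $\hat\pi$. A minor further point is that $\sigma_P^2,\sigma_w^2$ in the display are the population quantities under $(\cPh,\hat\pi)$, so no plug-in or Slutsky argument is required.
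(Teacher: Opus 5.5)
Your proposal is correct and, once you discard the first ordering, it is exactly the paper's argument: the intermediate estimand under $(\cP,\hat\pi)$, Theorem~\ref{thm:stability}(1) applied only to $\hat\pi$ (the policy for which (A1) is assumed), Theorem~\ref{thm:stability}(2) under $\cP^{\otimes K}$, and the CLT of Theorem~\ref{thm:consistency}(2) for the Monte Carlo term. Your event-containment and portmanteau step spells out what the paper compresses into ``giving the asymptotic probability bound.''
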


\begin{proof}
Decompose $\hat\mu_N-\mu^{\mathrm{true}}=(\hat\mu_N-\mu^{\mathrm{sim}})+(\mu^{\mathrm{sim}}-\mu^{\mathrm{mid}})+(\mu^{\mathrm{mid}}-\mu^{\mathrm{true}})$, where $\mu^{\mathrm{mid}}$ is the estimand under $(\cP,\hat\pi)$. The middle difference is controlled by Theorem~\ref{thm:stability}(1) applied to the policy $\hat\pi$ (which satisfies (A1)): $|\mu^{\mathrm{sim}}-\mu^{\mathrm{mid}}|\le2MKTL\,\Wone(\cPh,\cP)$. The last difference is Theorem~\ref{thm:stability}(2) applied under $\cP^{\otimes K}$: $|\mu^{\mathrm{mid}}-\mu^{\mathrm{true}}|\le2MKT\varepsilon_{\mathrm{BC}}$. The first difference satisfies the CLT of Theorem~\ref{thm:consistency}(2), giving the asymptotic probability bound.
\end{proof}

\medskip

\begin{remark}[Reading the budget]
\label{rem:budgetreading}
Three notes on scope and use. (i) \emph{What ``true'' means.} $\mu^{\mathrm{true}}$ is the estimand of the correctly specified \emph{simulator}---the same matching engine, horizon, and population size, run under the true persona distribution and the true policy. The corollary attributes input error; it does not certify the simulator class against reality, which is the task of the companion validation protocol. (ii) \emph{Estimability.} The three terms are separately estimable: the Monte Carlo term from Theorem~\ref{thm:consistency}(3) with a plug-in variance; $\varepsilon_{\mathrm{BC}}$ from held-out behavioral-cloning evaluation (per-state total variation between the cloned policy and held-out action frequencies, or a calibrated upper bound via log-loss and Pinsker); $\Wone(\cP,\cPh)$ from the persona-calibration bootstrap---the latter is exactly the uncertainty in the distribution of trader characteristics that the calibration data leave unresolved. (iii) \emph{Worst case, and when it binds.} Since $|\Delta\E F|\le2M$ holds trivially, the policy and persona terms are informative only when $KT\varepsilon_{\mathrm{BC}}<1$ and $KTL\,\Wone(\cP,\cPh)<1$; with realistic $K$ and $T$ this is demanding, because the constants price the worst case in which every bot's every decision is pivotal. This is why the companion experiment design (its Phase~1, step~5) measures the \emph{realized} sensitivity slopes $|\Delta\hat\mu|/\Wone$ and $|\Delta\hat\mu|/\varepsilon_{\mathrm{BC}}$, expected to lie orders of magnitude below the worst-case constants: the theorem's value is the decomposition of the error into named, separately attributable channels, not the tightness of its constants. The linear-in-$T$ growth for a bounded terminal functional matches the trajectory-deviation regime of imitation-learning theory \citep{ross_efficient_2010,ross_reduction_2011}; the $O(T^2)$ compounding familiar from that literature concerns cumulative per-step loss, a different functional class.
\end{remark}

\begin{remark}[Continuous action heads]
\label{rem:continuous}
Theorem~\ref{thm:stability} is stated for finite $\cA$ only through the explicit maximal-coupling construction; Lemma~\ref{lem:softmax}(2)--(3) supplies (A1) for composite actions with continuous marks, and the coupling argument goes through verbatim on standard Borel action spaces, where maximal couplings measurable in their inputs exist by the usual density construction (write both kernels' densities with respect to a common dominating measure, e.g.\ counting $\otimes$ Lebesgue for softmax-plus-Gaussian heads, and repeat the overlap construction).
\end{remark}

\subsection{A uniform-in-horizon bound under a Doeblin condition}

Theorem~\ref{thm:stability} degrades linearly in $T$. For functionals of the \emph{terminal} state, or time averages, the horizon factor can be removed if the market chain forgets its past at a uniform geometric rate. We state the ergodicity hypothesis as an assumption; whether realistic order-book chains satisfy it is discussed honestly below.

\begin{itemize}
\item[\textbf{(A4)}] \emph{(Uniform Doeblin minorization.)} There exists $\alpha\in(0,1]$ such that for every persona population $\omega\in P^K$ there is a probability measure $\nu_\omega$ on $\cS$ with
$P_\omega(s,\cdot)\;\ge\;\alpha\,\nu_\omega(\cdot)$ for all $s\in\cS$,
where $P_\omega$ is the one-step kernel \eqref{eq:kernel}.
\end{itemize}

\begin{theorem}[Uniform-in-$T$ stability]
\label{thm:uniform}
Assume (A1), (A3), (A4). Let $F(X)=f(s_T)$ with $|f|\le M$, or $F(X)=\frac1T\sum_{t=1}^Tf_t(s_t)$ with $\sup_t|f_t|\le M$. Then for all $T\ge1$,
\[
\Big|\E_{\cP^{\otimes K}}[F(X)]-\E_{\cPh^{\otimes K}}[F(X)]\Big|\;\le\;\frac{2MKL}{\alpha}\,\Wone(\cP,\cPh),
\]
and likewise $|\Delta\E F|\le2MK\varepsilon/\alpha$ for a policy perturbation as in Theorem~\ref{thm:stability}(2). More precisely, the factor $1/\alpha$ can be replaced by $\min(T,1/\alpha)$.
\end{theorem}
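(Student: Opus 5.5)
The plan is to replace the path coupling of Theorem~\ref{thm:stability} with a one-step kernel comparison, and to let the Doeblin contraction absorb the horizon factor. First fix two populations $\omega,\hat\omega$. By the product structure of \eqref{eq:kernel}, the coordinatewise maximal coupling from Step~1 of Theorem~\ref{thm:stability}, and (A3), we obtain
\[
\sup_s\TV\big(P_\omega(s,\cdot),P_{\hat\omega}(s,\cdot)\big)\le\delta(\omega,\hat\omega)=L\sum_k\|p^{(k)}-\hat p^{(k)}\|.
\]
Equivalently, $\|(P_\omega-P_{\hat\omega})h\|_\infty\le2\delta\|h\|_\infty$, and indeed $\le\delta\,\mathrm{osc}(h)$. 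Under (A4), Doeblin's argument gives the contraction $\TV(\lambda P_\omega^n,\lambda'P_\omega^n)\le(1-\alpha)^n\TV(\lambda,\lambda')$. Equivalently, $\mathrm{osc}(P_\omega^n h)\le(1-\alpha)^n\mathrm{osc}(h)\le 2M(1-\alpha)^n$ when $|h|\le M$.

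\textbf{Terminal functional.} I will use the telescoping identity
\[
P_\omega^T f-P_{\hat\omega}^T f=\sum_{t=0}^{T-1}P_{\hat\omega}^{t}(P_\omega-P_{\hat\omega})P_\omega^{T-1-t}f
\]
evaluated at $s_0$. Each summand is bounded in sup norm by $\delta\cdot\mathrm{osc}(P_\omega^{T-1-t}f)\le 2M\delta(1-\alpha)^{T-1-t}$. Here I use that the difference of two probability measures annihilates constants, so it pairs with the oscillation. Summing, the total is at most $2M\delta\min\big(T,\sum_n(1-\alpha)^n\big)=2M\delta\min(T,1/\alpha)$. This gives $|\mu(\omega)-\mu(\hat\omega)|\le2M\min(T,1/\alpha)\delta(\omega,\hat\omega)$.

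\textbf{Time-average functional.} Apply the same bound to each $f_t(s_t)$ with horizon $t$ and average: $\frac1T\sum_{t=1}^T2M\delta\min(t,1/\alpha)\le2M\delta\min(T,1/\alpha)$. The $f_t$ may vary with $t$, which is harmless since each term is handled separately.

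\textbf{Randomizing and policy perturbation.} Step~2 of Theorem~\ref{thm:stability} carries over verbatim. Couple the personas i.i.d.\ from a $\Wone$-optimal $\gamma$ and take expectations, using $\E\delta=LK\Wone(\cP,\cPh)$. This yields $2MKL\min(T,1/\alpha)\Wone(\cP,\cPh)$. For the policy perturbation, the one-step kernel discrepancy is at most $K\varepsilon$ by the same maximal coupling, and the telescoping argument is identical. In that case (A4) is needed for one of the two kernels: the expansion above is written so that the powers $P_\omega^{n}$ of the unperturbed kernel carry the contraction, while the outer $P_{\hat\omega}^t$ is merely a Markov operator, a sup-norm contraction.

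\textbf{Main obstacle.} The delicate step is keeping the constant at $2M$ rather than $4M$. I would insist on the oscillation form: $|(\lambda-\lambda')h|\le\TV(\lambda,\lambda')\,\mathrm{osc}(h)$, together with $\mathrm{osc}(P^nh)\le(1-\alpha)^n\,2M$. The Doeblin oscillation contraction itself follows by writing $P_\omega=\alpha\nu_\omega+(1-\alpha)Q_\omega$ with $Q_\omega$ Markov, so that $P_\omega h(s)-P_\omega h(s')=(1-\alpha)(Q_\omega h(s)-Q_\omega h(s'))$. I also need to check that the $\nu_\omega$ in (A4) may depend on $\omega$; it may, since the contraction is only ever applied to a single fixed kernel $P_\omega$.
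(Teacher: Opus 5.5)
Your proof is correct and is essentially the paper's argument. It uses the same one-step bound $\sup_s\TV(P_\omega(s,\cdot),P_{\hat\omega}(s,\cdot))\le\delta(\omega,\hat\omega)$, the same telescoping with powers of the contracting kernel $P_\omega$ on the right, the same geometric sum giving $\min(T,1/\alpha)$, and the same averaging over a $\Wone$-optimal coupling. The only difference is that you work on the function side via $|(\lambda-\lambda')h|\le\TV(\lambda,\lambda')\,\mathrm{osc}(h)$, whereas the paper contracts mass-zero signed measures and then applies $|\int f\,d(\hat\lambda_T-\lambda_T)|\le 2M\,\TV(\hat\lambda_T,\lambda_T)$; these are dual formulations and give the same constant $2M$.
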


\begin{proof}
\emph{Step 0: contraction.} Under (A4), write $P_\omega=\alpha\nu_\omega+(1-\alpha)R_\omega$ with $R_\omega$ a Markov kernel. For probability measures $\lambda,\lambda'$, $(\lambda-\lambda')P_\omega=(1-\alpha)(\lambda-\lambda')R_\omega$ because the $\nu_\omega$ parts cancel, whence the Dobrushin-type contraction
\begin{equation}
\TV(\lambda P_\omega,\lambda'P_\omega)\le(1-\alpha)\,\TV(\lambda,\lambda').
\label{eq:dobrushin}
\end{equation}

\emph{Step 1: one-step kernel perturbation.} Fix $\omega,\hat\omega$ and let $\Delta:=\sup_{s\in\cS}\TV\big(P_\omega(s,\cdot),P_{\hat\omega}(s,\cdot)\big)$. By \eqref{eq:kernel}, $P_\omega(s,\cdot)$ is the pushforward under the measurable map $(a,\xi)\mapsto\Psi(s,a,\xi)$ of the product law (action profile under $\omega$) $\otimes\,\nu_\xi$. Total variation does not increase under a common measurable pushforward, and for product measures with a common factor it is bounded by the total variation of the differing factors, which for products over $k$ is at most the sum of the coordinatewise distances. Hence, using (A1),
\begin{equation}
\Delta\;\le\;\sup_s\;\TV\Big(\textstyle\bigotimes_k\pi_\phi(\cdot\mid s,p^{(k)}),\;\bigotimes_k\pi_\phi(\cdot\mid s,\hat p^{(k)})\Big)
\;\le\;L\sum_{k=1}^K\|p^{(k)}-\hat p^{(k)}\|\;=\;\delta(\omega,\hat\omega).
\label{eq:onestep}
\end{equation}
(The subadditivity over coordinates follows from Lemma~\ref{lem:softmax}(2)'s decomposition argument applied $K-1$ times, or directly from the coordinatewise maximal coupling and a union bound.)

\emph{Step 2: telescoping.} Let $\lambda_t=\delta_{s_0}P_\omega^t$ and $\hat\lambda_t=\delta_{s_0}P_{\hat\omega}^t$ be the time-$t$ state laws. The telescoping identity
\[
\hat\lambda_T-\lambda_T\;=\;\sum_{t=0}^{T-1}\hat\lambda_t\big(P_{\hat\omega}-P_\omega\big)P_\omega^{\,T-1-t}
\]
holds by expanding the difference of products. Each summand is a signed measure of total mass zero; its total variation after the first factor is at most $\Delta$ (integrate the state-wise bound over $\hat\lambda_t$), and each subsequent application of $P_\omega$ contracts by $(1-\alpha)$ by \eqref{eq:dobrushin}. Hence
\[
\TV(\hat\lambda_T,\lambda_T)\;\le\;\Delta\sum_{t=0}^{T-1}(1-\alpha)^{T-1-t}\;\le\;\Delta\,\min\!\Big(T,\frac1\alpha\Big).
\]
For $F=f(s_T)$, $|\E f(s_T)-\hat\E f(s_T)|\le2M\,\TV(\hat\lambda_T,\lambda_T)\le2M\Delta\min(T,1/\alpha)$; for the time average, apply the bound at each $t\le T$ and average, which only improves it. Combining with \eqref{eq:onestep} and integrating over the $\Wone$-optimal coupling as in Step~2 of Theorem~\ref{thm:stability} gives the claim; the policy version replaces \eqref{eq:onestep} by $\Delta\le K\varepsilon$.
\end{proof}

\medskip

\begin{remark}[Scope and honesty]
\label{rem:doeblin}
Three caveats. (i) Path-supremum functionals (maximum drawdown, crash-within-window indicators over the whole path) are not covered: the argument controls marginals, not path laws, and a uniform-in-$T$ bound cannot hold for, e.g., running maxima of an ergodic chain without further structure. For such functionals Theorem~\ref{thm:stability}'s linear-in-$T$ bound stands. (ii) Verifying (A4) for a price--time-priority order book on its natural (unbounded) state space is an open problem; we do not claim it. On a truncated state space (capped prices, book depth, and inventories---which the companion design imposes anyway to keep $F$ bounded), softmax policies have full support on $\cA$, so every action profile has positive probability and minorization arguments are available in principle, but the resulting $\alpha$ may be astronomically small, in which case $\min(T,1/\alpha)=T$ for all practical horizons and Theorem~\ref{thm:uniform} adds nothing beyond Theorem~\ref{thm:stability}. The theorem's value is structural: it identifies the single constant ($\alpha$, equivalently the chain's mixing rate) that decides whether reliability degrades with horizon. Sharper constants in terms of ergodicity coefficients of iterated kernels are in \citet{mitrophanov_sensitivity_2005}; background on minorization and uniform ergodicity is in \citet{meyn_markov_2009}. (iii) (A4) with $\nu_\omega$ allowed to depend on $\omega$ is weaker than a joint minorization; only the contraction \eqref{eq:dobrushin} for each fixed $\omega$ is used.
\end{remark}

\section{Identifiability of heterogeneous news reaction}
\label{sec:identification}

This section is the paper's core. The behavioral premise under examination is that each agent has a \emph{personal} reaction to external information: agent $k$'s expected instantaneous net order-flow response to a signed news shock of magnitude $z$ is $g(\eta^{(k)}z)$, where $g$ is a saturating nonlinearity fixed by the agent architecture and $\eta^{(k)}\ge0$ is the agent's news sensitivity, drawn from a distribution $Q$---the marginal of $\cP$ on the news-sensitivity coordinate. Aggregating over the population gives the \emph{aggregate news-impact curve}
\[
A(z)\;=\;\E_{Q}\big[g(\eta z)\big],\qquad z\in\R .
\]
Empirically, $A$ is estimated by event studies: bin news events by signed magnitude $\times$ novelty $z$ and average the subsequent order-flow imbalance \citep{glasserman_new_2023}. The identification question: what does $A$ reveal about $Q$?

\subsection{Standing assumptions on the response function}

\begin{itemize}
\item[\textbf{(G1)}] $g:\R\to\R$ is odd, bounded, $C^2$ on $\R$, real-analytic on a neighborhood of $0$ with Taylor radius $r>0$ at the origin, $g'(0)>0$, and $g''<0$ on $(0,\infty)$ (hence $g$ is strictly concave there). Write $\cG$ for the class of such functions and $g(x)=\sum_{j\ge0}b_jx^{2j+1}$, $|x|<r$, for the (odd) Taylor expansion, $b_0=g'(0)$.
\item[\textbf{(G2)}] \emph{(Nonvanishing coefficients.)} $b_j\ne0$ for all $j\ge0$.
\item[\textbf{(Q1)}] $Q$ is a Borel probability measure on $[0,\bar\eta]$, $\bar\eta<\infty$, with $\E_Q[\eta]>0$ (equivalently $Q\ne\delta_0$: a nonnegative variable with zero mean is a.s.\ zero).
\end{itemize}

The canonical choice $g=\tanh$ (the output nonlinearity of the policy's news branch in the companion architecture) satisfies (G1) with $r=\pi/2$ and (G2): its expansion is $\tanh x=\sum_{n\ge1}\frac{2^{2n}(2^{2n}-1)B_{2n}}{(2n)!}x^{2n-1}$, so $b_j=\frac{2^{2j+2}(2^{2j+2}-1)B_{2j+2}}{(2j+2)!}$, and the Bernoulli numbers $B_{2n}$ are nonzero for all $n$; e.g.\ $b_0=1$, $b_1=-\tfrac13$.

Under (G1) and (Q1), $A$ is well defined, odd, bounded by $\sup|g|$, and, by dominated convergence with the local dominations $|g'|,|g''|$ bounded on compacts, $A\in C^2(\R)$ with
\begin{equation}
A'(z)=\E_Q\big[\eta\,g'(\eta z)\big],\qquad A''(z)=\E_Q\big[\eta^2g''(\eta z)\big].
\label{eq:aderivs}
\end{equation}
In particular $A'(0)=g'(0)\,\E_Q[\eta]>0$, and for $z>0$, $A''(z)<0$ whenever $Q((0,\infty))>0$, since the integrand is negative on $\{\eta>0\}$ and zero at $\eta=0$: $A$ itself is odd, bounded, and strictly concave on $(0,\infty)$.

\subsection{Detectability and identifiability}

\begin{theorem}[The impact curve identifies personal news sensitivity]
\label{thm:newsid}
Assume (G1) and (Q1), and write $m_{2j+1}=\E_Q[\eta^{2j+1}]$.
\begin{enumerate}
\item[(i)] (\emph{Detectability: strict Jensen gap.}) If $\Var_Q(\eta)>0$, then for every $z>0$
\[
A(z)\;<\;g\big(z\,\E_Q[\eta]\big).
\]
Consequently no homogeneous population reproduces $A$: if $A=A_{\delta_m}$ for some $m\ge0$, then matching origin slopes forces $m=\E_Q[\eta]$, and the displayed strict inequality is a contradiction at every $z>0$; hence $A$ determines whether $\Var_Q(\eta)=0$.
\item[(ii)] (\emph{Identifiability.}) Assume in addition (G2). Then $A$ restricted to any neighborhood of $0$ determines $Q$ completely: the odd moments $(m_{2j+1})_{j\ge0}$ are read off the Taylor coefficients of $A$, they determine $Q$ on $(0,\infty)$, and $Q(\{0\})=1-Q((0,\infty))$.
\end{enumerate}
\end{theorem}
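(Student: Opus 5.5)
For (i), fix $z>0$ and view $x\mapsto g(xz)$ on $[0,\bar\eta]$. By (G1), $g''<0$ on $(0,\infty)$, so this map is strictly concave on $[0,\infty)$; its second derivative $z^2g''(xz)$ is negative for $x>0$ and may vanish only at $x=0$. Strict concavity on the closed half-line still follows, because $g'$ is strictly decreasing on $[0,\infty)$. Strict Jensen then applies: if $\Var_Q(\eta)>0$, then $\eta$ is not a.s.\ constant, so $\E_Q[g(\eta z)]<g(z\E_Q[\eta])$. To keep this self-contained, I will write it out via the tangent line at $m=\E_Q\eta$. Set $h(x)=g(xz)-g(mz)-zg'(mz)(x-m)$. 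Then $h\le0$ on $[0,\infty)$, with equality only at $x=m$. Integrating gives $\E h(\eta)<0$, since $Q(\eta\neq m)>0$.

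For the consequence in (i), \eqref{eq:aderivs} gives $A_{\delta_m}'(0)=g'(0)m$ and $A'(0)=g'(0)\E_Q\eta$. Since $g'(0)>0$, equality of the curves forces $m=\E_Q\eta$. But then $A(z)=g(zm)$ contradicts the strict inequality. Conversely, if $\Var_Q\eta=0$, then $Q=\delta_m$ and $A=g(m\,\cdot)$. Hence $A$ decides homogeneity: $\Var_Q\eta=0$ if and only if $A(z)=g(zA'(0)/g'(0))$ for all $z$.

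For (ii), the plan is to expand $A$ near $0$. For $|z|<r/\bar\eta$ we have $|\eta z|<r$ for every $\eta$ in the support, so $g(\eta z)=\sum_j b_j\eta^{2j+1}z^{2j+1}$. I need to justify swapping sum and expectation, and this is the main obstacle: the Taylor series converges only inside radius $r$, and absolute convergence is needed uniformly in $\eta$. I would first restrict to $|z|<\rho/\bar\eta$ with $\rho<r$. There $\sum_j|b_j|\rho^{2j+1}<\infty$ by absolute convergence of a power series inside its radius, and this dominates every term uniformly in $\eta\in[0,\bar\eta]$. Fubini/dominated convergence then gives $A(z)=\sum_j b_j m_{2j+1}z^{2j+1}$, so $A$ is real-analytic at $0$ with Taylor coefficients $b_jm_{2j+1}$.

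These coefficients equal $A^{(2j+1)}(0)/(2j+1)!$, so they are determined by $A$ on any neighborhood of $0$, however small. This works even when the neighborhood is smaller than $r/\bar\eta$, because derivatives at $0$ are local. Using (G2), $m_{2j+1}=A^{(2j+1)}(0)/((2j+1)!\,b_j)$.

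Next I recover $Q$ on $(0,\infty)$ from these odd moments. Define the finite measure $\mu(d\eta)=\eta\,Q(d\eta)$ on $[0,\bar\eta]$. It has no mass at $0$, and its total mass is $m_1>0$. Its even moments are $\int\eta^{2j}\,d\mu=m_{2j+1}$. Push $\mu$ forward under $\eta\mapsto u=\eta^2\in[0,\bar\eta^2]$; this map is injective on $[0,\infty)$. The pushforward has all integer moments $\int u^j=m_{2j+1}$, so it is a measure on a compact interval with known moments. It is therefore determined, by Hausdorff determinacy or equivalently by Weierstrass density of polynomials in $C[0,\bar\eta^2]$. Hence $\mu$ is determined.

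Finally, $Q$ restricted to $(0,\bar\eta]$ equals $\eta^{-1}\mu(d\eta)$, so $Q((0,\infty))$ is determined. The atom is then $Q(\{0\})=1-Q((0,\infty))$, which completes (ii).
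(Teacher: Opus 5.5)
Your proposal is correct and follows the paper's proof essentially step for step. For (i), both use a supporting tangent line at $m=\E_Q[\eta]$ plus origin-slope matching; for (ii), both integrate the odd Taylor series termwise on $|z|<r/\bar\eta$, then apply Hausdorff determinacy to the pushforward of $\eta\,Q(d\eta)$ under $\eta\mapsto\eta^2$ and divide by $\eta$ on $(0,\infty)$. The only differences are cosmetic: you get strictness of the tangent-line inequality from strict monotonicity of $g'$ rather than from the paper's ``concave function pinched between a line is affine'' argument, and you add the explicit homogeneity criterion $A(z)=g\big(zA'(0)/g'(0)\big)$.
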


\begin{proof}
(i) Fix $z>0$ and let $\varphi(x)=g(xz)$ on $[0,\bar\eta]$; $\varphi$ is concave on $[0,\bar\eta]$ (continuous, $\varphi''<0$ on $(0,\bar\eta]$) and not affine on any nondegenerate subinterval, since $\varphi''$ vanishes nowhere on $(0,\bar\eta]$. Let $m=\E_Q[\eta]\in(0,\bar\eta)$ (interior because $\Var_Q(\eta) > 0$ forces $Q \neq \delta_{\bar\eta}$ as well as $Q \neq \delta_0$). Concavity gives the supporting line $\varphi(x)\le\varphi(m)+\varphi'(m)(x-m)$ for all $x\in[0,\bar\eta]$. Equality at some $x_1\ne m$ would force $\varphi$ to be affine on the interval between $x_1$ and $m$ (a concave function pinched between a line at two points is affine in between), contradicting non-affineness. So the inequality is strict for $x\ne m$, and integrating against the non-degenerate $Q$,
\[
A(z)=\E_Q[\varphi(\eta)]<\varphi(m)+\varphi'(m)\,\E_Q[\eta-m]=\varphi(m)=g(zm).
\]
For the slope claim: by \eqref{eq:aderivs}, $A'(0)=g'(0)\E_Q[\eta]$, while $A_{\delta_m}'(0)=g'(0)m$; since $g'(0)>0$, equality of the curves near $0$ forces $m=\E_Q[\eta]$.

(ii) \emph{The Taylor coefficients.} Since the power series of $g$ converges absolutely for $|x|<r$, for $|z|<r/\bar\eta$ we have $\sum_{j\ge0}|b_j|\,\E_Q|\eta z|^{2j+1}\le\sum_j|b_j|(\bar\eta|z|)^{2j+1}<\infty$, so Fubini's theorem applies to $g(\eta z)=\sum_jb_j(\eta z)^{2j+1}$ (valid pointwise because $\eta|z|\le\bar\eta|z|<r$) and
\[
A(z)=\sum_{j\ge0}b_j\,m_{2j+1}\,z^{2j+1},\qquad|z|<r/\bar\eta .
\]
Thus $A$ is real-analytic near $0$ and its Taylor coefficients there are $b_jm_{2j+1}$; knowing $A$ on any neighborhood of $0$ determines these, and by (G2) determines every $m_{2j+1}$.

\emph{From odd moments to $Q$.} Let $\mu^*(d\eta):=\eta\,Q(d\eta)$, a finite Borel measure on $[0,\bar\eta]$ assigning no mass to $\{0\}$, and let $\nu$ be its pushforward under $\eta\mapsto\eta^2$, a finite measure on $[0,\bar\eta^2]$. Then for every $j\ge0$,
\[
\int u^j\,\nu(du)=\int\eta^{2j}\,\mu^*(d\eta)=\int\eta^{2j+1}\,Q(d\eta)=m_{2j+1}.
\]
A finite measure on a compact interval is determined by its moment sequence: if $\nu_1,\nu_2$ share all moments, they integrate all polynomials identically, hence (Stone--Weierstrass, plus boundedness of the interval) all continuous functions, hence $\nu_1=\nu_2$ by the Riesz representation theorem. (This is the determinate Hausdorff case of the classical moment problem \citep{shohat_problem_1943}.) So $\nu$ is determined by $A$; since $\eta\mapsto\eta^2$ is a homeomorphism of $[0,\infty)$ onto itself, $\mu^*$ is determined ($\mu^*$ is the pushforward of $\nu$ under $u\mapsto\sqrt u$); and $Q$ on $(0,\infty)$ is recovered by $Q(B)=\int_B\eta^{-1}\mu^*(d\eta)$ for Borel $B\subset(0,\infty)$, a finite integral because it equals $Q(B)\le1$. Finally $Q(\{0\})=1-Q((0,\infty))$.
\end{proof}

\medskip

\begin{remark}[If some coefficients vanish]
\label{rem:muntz}
(G2) can be weakened. Suppose $J=\{j\ge0:b_j\ne0\}$; note $0\in J$ always ($b_0=g'(0)>0$). The proof of (ii) recovers $\{m_{2j+1}:j\in J\}$, i.e.\ the $\nu$-moments $\{\int u^j\nu:j\in J\}$. If $\sum_{j\in J\setminus\{0\}}1/j=\infty$, then by the M\"untz theorem \citep{borwein_polynomials_1995} the span of $\{u^j:j\in J\}$ is dense in $C[0,\bar\eta^2]$, and the same Riesz argument identifies $\nu$, hence $Q$. Part (i) needs neither (G2) nor analyticity---only (G1)'s concavity and differentiability at $0$.
\end{remark}

\subsection{What survives when the curve is known only up to scale}

Empirically $A$ is estimated in order-flow units confounded with population size and order-size scaling, so only $\kappa A$ for an unknown $\kappa>0$ is observed. Identification degrades in an exactly describable way: the \emph{shape} of the active part of $Q$ survives; its total mass does not.

\begin{corollary}[Up-to-scale identification]
\label{cor:scale}
Assume (G1), (G2), (Q1). Let $Q,Q'$ satisfy (Q1) and suppose $A_{Q'}=\kappa A_Q$ for some $\kappa>0$. Then
\[
Q'\big(\cdot\mid\eta>0\big)=Q\big(\cdot\mid\eta>0\big)
\qquad\text{and}\qquad
Q'\big((0,\infty)\big)=\kappa\,Q\big((0,\infty)\big).
\]
In particular, from $A$ known up to scale one still identifies: (i) the conditional law of $\eta$ given $\eta>0$; and hence (ii) whether the \emph{active} population is heterogeneous, i.e.\ whether $\Var_Q(\eta\mid\eta>0)>0$. Equivalently: $A$ is proportional to $g(c\,\cdot)$ on a neighborhood of $0$ for some $c>0$ if and only if $Q(\cdot\mid\eta>0)=\delta_c$.
\end{corollary}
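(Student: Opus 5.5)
The plan is to reuse the machinery of Theorem~\ref{thm:newsid}(ii) and keep track of one extra degree of freedom, the scale $\kappa$.

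\emph{Step 1: odd moments up to a common factor.} Apply the Taylor expansion from the proof of Theorem~\ref{thm:newsid}(ii) to both $Q$ and $Q'$ on the common neighborhood $|z|<r/\bar\eta_{\max}$, where $\bar\eta_{\max}$ is the larger of the two support bounds. Comparing coefficients in $A_{Q'}=\kappa A_Q$ gives $b_j m'_{2j+1}=\kappa b_j m_{2j+1}$. By (G2) we can divide by $b_j$, so $m'_{2j+1}=\kappa\,m_{2j+1}$ for every $j\ge0$.

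\emph{Step 2: pass to measures.} Define $\nu$ and $\nu'$ as in that proof: $\nu$ is the pushforward of $\eta\,Q(d\eta)$ under $\eta\mapsto\eta^2$, and similarly for $\nu'$. Both are finite measures on a common compact interval $[0,\bar\eta_{\max}^2]$. Their $j$-th moments are $m_{2j+1}$ and $m'_{2j+1}$, so $\nu'$ and $\kappa\nu$ share all moments. By the same Hausdorff/Stone--Weierstrass/Riesz argument, $\nu'=\kappa\nu$. Undoing the homeomorphism $u\mapsto\sqrt u$ gives $\eta\,Q'(d\eta)=\kappa\,\eta\,Q(d\eta)$ on $(0,\infty)$. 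Dividing by $\eta$ on $(0,\infty)$ yields $Q'(B)=\kappa Q(B)$ for every Borel $B\subset(0,\infty)$. Taking $B=(0,\infty)$ gives the mass relation. This mass is positive by (Q1), so dividing the two relations gives equality of the conditional laws $Q'(\cdot\mid\eta>0)=Q(\cdot\mid\eta>0)$.

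\emph{Step 3: consequences.} Claims (i) and (ii) follow at once, since the variance of the conditional law is a functional of that law. For the final equivalence there are two directions.
\begin{itemize}
\item If $Q(\cdot\mid\eta>0)=\delta_c$, write $w=Q((0,\infty))$. Then $Q=(1-w)\delta_0+w\delta_c$. Since $g(0)=0$ by oddness, this gives $A=w\,g(c\,\cdot)$ everywhere.
\item Conversely, suppose $A=\kappa\,g(c\,\cdot)$ near $0$. Note that $g(c\,\cdot)=A_{\delta_c}$, so $A_Q=\kappa A_{\delta_c}$ near $0$. This is only a local identity, but Steps 1--2 used only the Taylor coefficients at the origin. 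So the argument applies with the roles $Q'=Q$ and reference $\delta_c$, giving $Q(\cdot\mid\eta>0)=\delta_c(\cdot\mid\eta>0)=\delta_c$. Here $c>0$, so $\delta_c$ satisfies (Q1).
\end{itemize}
The same remark justifies assuming $A_{Q'}=\kappa A_Q$ only near $0$ throughout.

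\emph{Main obstacle.} The only delicate point is bookkeeping rather than analysis. The moment-determinacy argument in Theorem~\ref{thm:newsid}(ii) was stated for measures with equal moments, and here it must be applied to $\nu'$ versus $\kappa\nu$. That is harmless, because the Riesz argument works for any pair of finite positive measures. One must also check that the two supports, and hence the radii of convergence, can be taken common. Using the maximum of the two support bounds handles this. Finally, the atom at $0$ is invisible to $A$ (since $g(0)=0$), which is exactly why only the total active mass, scaled by $\kappa$, survives and the split between $Q(\{0\})$ and $\kappa$ is not identified.
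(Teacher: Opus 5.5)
Your proposal is correct and matches the paper's proof step for step: coefficient comparison under (G2) gives $m'_{2j+1}=\kappa m_{2j+1}$, Hausdorff determinacy gives $\nu'=\kappa\nu$, and the reconstruction formula gives $Q'|_{(0,\infty)}=\kappa\,Q|_{(0,\infty)}$, from which both the conditional-law claim and the mass claim follow. The only difference is cosmetic and sits in the converse of the final equivalence: you observe $g(c\,\cdot)=A_{\delta_c}$ and reuse the first part, whereas the paper reads off directly that $\nu$ has the moments of $\kappa c\,\delta_{c^2}$, which is the same computation.
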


\begin{proof}
$A_{Q'}=\kappa A_Q$ near $0$ gives, by the coefficient identification in Theorem~\ref{thm:newsid}(ii) and (G2), $m_{2j+1}'=\kappa m_{2j+1}$ for all $j$, i.e.\ $\nu'=\kappa\nu$ (equal moment sequences on a compact interval, determinate as above), hence $\mu^{*\prime}=\kappa\mu^*$, hence $Q'|_{(0,\infty)}=\kappa\,Q|_{(0,\infty)}$ by the reconstruction formula. Two finite measures that are positive multiples of one another have the same normalization, which is the conditional-law claim; evaluating total masses gives $Q'((0,\infty))=\kappa Q((0,\infty))$ (in particular any such $Q'$ exists only for $\kappa\le1/Q((0,\infty))$).

For the equivalence: if $Q(\cdot\mid\eta>0)=\delta_c$, write $w=Q((0,\infty))$; then $A(z)=w\,g(cz)$, which is proportional to $g(c\,\cdot)$. Conversely if $A=\kappa g(c\,\cdot)$ near $0$, comparing coefficients gives $m_{2j+1}=\kappa c^{2j+1}$ for all $j$, so $\nu$ has moments $\kappa c\,(c^2)^j$, which are the moments of $\kappa c\,\delta_{c^2}$; by determinacy $\nu=\kappa c\,\delta_{c^2}$, so $\mu^*=\kappa c\,\delta_c$ and $Q|_{(0,\infty)}=\kappa\delta_c$: the active part is degenerate at $c$.
\end{proof}

\medskip

\begin{remark}[The news-ignorers are invisible, precisely]
Corollary~\ref{cor:scale} makes exact the informal caveat in the companion design: the atom $Q(\{0\})$ of news-ignoring agents affects $A$ only through overall scale, which is empirically confounded. What is testable from the shape of the impact curve alone is heterogeneity \emph{among agents who react at all}. The companion Study~A includes a $20\%$ $\eta=0$ component precisely to measure the practical cost of this confound.
\end{remark}

\subsection{The sharp boundary: unknown response functions}

Theorem~\ref{thm:newsid} treats $g$ as known---in PTMC it is, being fixed by the architecture. A skeptic may object that a different agent model, with a different response nonlinearity, could explain the same curve without heterogeneity. The skeptic is right, in the strongest possible sense; identification is a within-model-class property, and the following proposition delimits it exactly.

\begin{proposition}[Observational equivalence under unknown $g$]
\label{prop:unknowng}
Assume (G1) and (Q1), and let $A=A_{Q,g}$ be the impact curve of the (possibly heterogeneous) pair $(Q,g)$. Then for every $m>0$ the function $\tilde g:=A(\cdot/m)$ belongs to $\cG$, and the \emph{homogeneous} pair $(\delta_m,\tilde g)$ has the same impact curve:
\[
A_{\delta_m,\tilde g}(z)=\tilde g(mz)=A(z)\qquad\text{for all }z .
\]
Hence, if the response function is unknown within $\cG$, then not only is $Q$ unidentified---even the \emph{presence} of heterogeneity is undetectable from $A$: every heterogeneous population is observationally equivalent to a homogeneous one with a deformed response function. Under (G2)-type knowledge of $g$ up to scale, by contrast, Corollary~\ref{cor:scale}(ii) makes active-part heterogeneity detectable even with the curve known only up to scale.
\end{proposition}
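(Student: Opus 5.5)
The identity $A_{\delta_m,\tilde g}(z)=\tilde g(mz)=A(mz/m)=A(z)$ is immediate from the definitions. The real work is checking that $\tilde g=A(\cdot/m)\in\cG$, i.e.\ that it satisfies every clause of (G1). Since $\tilde g$ is $A$ composed with the linear map $x\mapsto x/m$, $m>0$, each clause reduces to the corresponding property of $A$. I would go through them in order, relying on the facts about $A$ already recorded after (G1) and on the expansion in the proof of Theorem~\ref{thm:newsid}(ii).

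\emph{Oddness, boundedness, $C^2$.} $A$ is odd because $g$ is odd and the expectation is linear. It is bounded by $\sup|g|$. It is $C^2$ by dominated convergence: $\eta\in[0,\bar\eta]$, and $g',g''$ are bounded on the compact set $[-\bar\eta|z_0|-1,\bar\eta|z_0|+1]$, which gives \eqref{eq:aderivs}. Composing with $x\mapsto x/m$ preserves all three properties, so $\tilde g$ is odd, bounded and $C^2$.

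\emph{Analyticity at $0$.} From the proof of Theorem~\ref{thm:newsid}(ii),
\[
A(z)=\sum_j b_j m_{2j+1}z^{2j+1}\quad\text{for } |z|<r/\bar\eta,
\]
which uses only (G1) and not (G2). Hence $\tilde g(x)=\sum_j b_j m_{2j+1}m^{-(2j+1)}x^{2j+1}$ converges for $|x|<mr/\bar\eta$. This gives a positive Taylor radius $\tilde r\ge mr/\bar\eta>0$. If $\bar\eta=0$, then (Q1) fails, so this case does not arise.

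\emph{Slope and strict concavity.} The origin slope is $\tilde g'(0)=A'(0)/m=g'(0)\E_Q[\eta]/m>0$ by (Q1). For concavity, $\tilde g''(x)=m^{-2}A''(x/m)=m^{-2}\E_Q[\eta^2g''(\eta x/m)]$. For $x>0$ the integrand is strictly negative on $\{\eta>0\}$, and this set has positive $Q$-mass because $\E_Q\eta>0$. So $\tilde g''<0$ on $(0,\infty)$.

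This step is where I expect the only real care to be needed. The strict negativity of $A''$ on $(0,\infty)$ uses $g''<0$ at the \emph{positive} points $\eta x/m$, and $Q((0,\infty))>0$ is exactly (Q1); both must be invoked explicitly. Given these, all of (G1) holds and $\tilde g\in\cG$.

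\emph{Consequences.} Because each homogeneous pair $(\delta_m,\tilde g)$ reproduces $A$, no functional of $A$ can separate $\Var_Q(\eta)>0$ from $\Var_Q(\eta)=0$ once $g$ ranges over $\cG$. The final sentence of the proposition just cites Corollary~\ref{cor:scale}(ii). I would add one remark: (G2) may fail for $\tilde g$, since some $b_jm_{2j+1}$ could vanish only if $b_j=0$ (the moments $m_{2j+1}$ are positive). So $\tilde g$ inherits (G2) whenever $g$ has it, which means the equivalence holds even inside the smaller class satisfying (G1) and (G2).
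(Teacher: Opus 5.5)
Your proof is correct and follows the paper's argument essentially step for step. The identity is immediate, and membership of $\tilde g$ in $\cG$ is checked clause by clause from \eqref{eq:aderivs}, from (Q1), and from the series $A(z)=\sum_j b_j m_{2j+1}z^{2j+1}$ in the proof of Theorem~\ref{thm:newsid}(ii). Your closing observation is also right and is a small strengthening the paper does not state: since $m_{2j+1}>0$ under (Q1), the coefficients $b_j m_{2j+1}m^{-(2j+1)}$ of $\tilde g$ vanish exactly when $b_j$ does, so the equivalence persists within the class satisfying both (G1) and (G2); reword ``(G2) may fail for $\tilde g$,'' which misstates this.
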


\begin{proof}
The displayed identity is immediate from the definition. It remains to check $\tilde g\in\cG$, i.e.\ that $A(\cdot/m)$ satisfies (G1). Oddness and boundedness are inherited. $C^2$-smoothness and the derivative formulas are \eqref{eq:aderivs}. $\tilde g'(0)=A'(0)/m=g'(0)\E_Q[\eta]/m>0$ by (Q1). Strict concavity on $(0,\infty)$: for $z>0$, $A''(z)=\E_Q[\eta^2g''(\eta z)]<0$ because $Q((0,\infty))>0$ (by (Q1)) and the integrand is $<0$ there. Real-analyticity near $0$ with positive radius: by the series representation in the proof of Theorem~\ref{thm:newsid}(ii), $A$ has a convergent power series on $(-r/\bar\eta,\,r/\bar\eta)$.
\end{proof}

\medskip

\begin{remark}[What Proposition~\ref{prop:unknowng} does and does not say]
\label{rem:withinmodel}
It does \emph{not} undermine the companion design's test of heterogeneous news reaction: there, all competing arms share one architecture---one $g$---and differ only in the persona distribution, which is exactly the known-$g$ regime of Theorem~\ref{thm:newsid} and Corollary~\ref{cor:scale}. It \emph{does} say that a claim of the form ``order-flow data demonstrate heterogeneous news sensitivity, whatever the response function'' is unprovable from the impact curve alone: the data constrain the pair $(g,Q)$, and the heterogeneity conclusion is conditional on the response family. Any empirical write-up should state the conclusion in this conditional form. (One referee-proof reading: the curve identifies the \emph{mixture} $\int g(\eta\,\cdot)\,Q(d\eta)$; splitting it into $g$ and $Q$ requires fixing one of them.)
\end{remark}

\begin{remark}[Scalar identifiability versus a multivariate persona]
\label{rem:multivariate}
Theorem~\ref{thm:newsid} identifies a single scalar coordinate, $\eta$, of the persona $p=(\theta,\rho)$, holding the rest of the persona and the response nonlinearity $g$ fixed. The companion framework's personas are typically multivariate---news sensitivity $\eta$, loss aversion $\lambda$, a herding propensity $h$, and further behavioral coordinates---and nothing in this paper proves \emph{joint} identifiability of their distribution from a single aggregate response curve. The obstruction is not merely notational: recovering a multivariate mixing distribution from a low-dimensional aggregate functional is a substantially harder inverse problem than the one-dimensional moment problem of Theorem~\ref{thm:newsid}(ii). The analogous multivariate moment problem is, in general, indeterminate without further structure (e.g.\ independence, or a known copula linking the coordinates), and even where it is determinate, nonparametric recovery rates in the random-coefficients literature degrade sharply with dimension \citep{gautier_nonparametric_2013}. Two routes avoid this curse in practice, at a cost: (i) an ``orthogonal shocks'' design, in which distinct empirical instruments are each chosen to isolate one coordinate---news-magnitude events for $\eta$, disposition-effect event studies for $\lambda$, order-flow clustering for $h$---holding the others fixed, reducing the joint problem to a sequence of one-dimensional ones of exactly the type solved here; or (ii) accepting only partial identification of the joint law. The identification theory of this paper should be read as settling the one-dimensional case cleanly and exactly---the case the companion Study~A isolates by design---not as a template that extends automatically to the full persona vector.
\end{remark}

\section{Estimation and testing of the sensitivity distribution}
\label{sec:inference}

Theorem~\ref{thm:newsid} is a population statement about exact knowledge of $A$ near $0$. This section provides the corresponding finite-sample instruments: a parametric $\sqrt n$ estimation theorem, and a test of homogeneity that is correct at the boundary. Throughout, data are event-study observations
\[
(z_i,Y_i),\quad i=1,\dots,n,\ \text{i.i.d.},\qquad Y_i=A_{\vartheta_0}(z_i)+\epsilon_i,\quad\E[\epsilon_i\mid z_i]=0,
\]
where $z_i$ is the signed news magnitude of event $i$ (law $\lambda$, compactly supported), $Y_i$ the post-event order-flow response, $|Y_i| \le C_Y$ bounded (order flow is capped, as in Section~\ref{sec:setup}), and $\{A_\vartheta:\vartheta\in\Theta\}$ the impact curves of a parametric family $\{Q_\vartheta\}$.

\begin{itemize}
\item[\textbf{(R1)}] $\Theta\subset\R^{d_\vartheta}$ is compact; $\vartheta\mapsto Q_\vartheta$ is injective; every $Q_\vartheta$ is supported in $[0,\bar\eta]$ and satisfies (Q1); $g$ satisfies (G1)--(G2).
\item[\textbf{(R2)}] For each $z$ in the support of $\lambda$, $\vartheta\mapsto A_\vartheta(z)$ is continuous on $\Theta$ and twice continuously differentiable on a neighborhood of $\vartheta_0$, with $\sup_{z,\vartheta}(|A_\vartheta(z)|+\|\nabla_\vartheta A_\vartheta(z)\|+\|\nabla^2_\vartheta A_\vartheta(z)\|)<\infty$ on that neighborhood.
\item[\textbf{(R3)}] The support of $\lambda$ has an accumulation point in $(0,\,r/\bar\eta)$.
\item[\textbf{(R4)}] $\vartheta_0$ is interior to $\Theta$ and $J:=\E_\lambda\big[\nabla_\vartheta A_{\vartheta_0}(z)\,\nabla_\vartheta A_{\vartheta_0}(z)^{\!\top}\big]$ is nonsingular.
\end{itemize}

Define the nonlinear least-squares estimator $\hat\vartheta_n=\argmin_{\vartheta\in\Theta}Q_n(\vartheta)$, $Q_n(\vartheta)=\frac1n\sum_{i=1}^n\big(Y_i-A_\vartheta(z_i)\big)^2$.

\begin{lemma}[Identification from the design]
\label{lem:ident}
Under (R1) and (R3): if $A_\vartheta=A_{\vartheta_0}$ $\lambda$-almost everywhere, then $\vartheta=\vartheta_0$.
\end{lemma}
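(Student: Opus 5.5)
The plan is to pass from $\lambda$-a.e.\ agreement to agreement on a whole neighborhood of $0$ by analyticity, and then invoke Theorem~\ref{thm:newsid}(ii) together with the injectivity in (R1).

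First, set $D(z):=A_\vartheta(z)-A_{\vartheta_0}(z)$. By (R1) both $Q_\vartheta$ and $Q_{\vartheta_0}$ are supported in $[0,\bar\eta]$, and $g$ satisfies (G1). So the series representation from the proof of Theorem~\ref{thm:newsid}(ii) applies to each: $D(z)=\sum_{j\ge0}b_j\,(m_{2j+1}(\vartheta)-m_{2j+1}(\vartheta_0))\,z^{2j+1}$ on $I:=(-r/\bar\eta,\,r/\bar\eta)$. Hence $D$ is real-analytic on $I$. Outside $I$, $D$ is continuous, because each $A$ is $C^2$ by \eqref{eq:aderivs}.

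Second, I upgrade a.e.\ vanishing to vanishing on a set with an accumulation point in $I$. Let $N=\{z: D(z)\neq0\}$. This set is open by continuity of $D$, and $\lambda(N)=0$ by hypothesis. An open $\lambda$-null set cannot meet $\operatorname{supp}\lambda$, since every point of the support has $\lambda$-positive neighborhoods. Therefore $D=0$ on all of $\operatorname{supp}\lambda$. By (R3), $\operatorname{supp}\lambda$ contains a sequence of distinct points converging to some $z_*\in(0,r/\bar\eta)\subset I$. Dropping finitely many terms, these points lie in $I$. By the identity theorem for real-analytic functions on the interval $I$, $D\equiv0$ on $I$.

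Third, $A_\vartheta=A_{\vartheta_0}$ on a neighborhood of $0$. Theorem~\ref{thm:newsid}(ii), which uses (G2) from (R1), then gives $Q_\vartheta=Q_{\vartheta_0}$. Injectivity of $\vartheta\mapsto Q_\vartheta$ yields $\vartheta=\vartheta_0$.

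The main point of care is the second step. Agreement only $\lambda$-almost everywhere must be converted into agreement at actual points accumulating inside the radius of analyticity. This is why continuity of $A_\vartheta$ is used to show that disagreement sets are open. It is also why (R3) requires the accumulation point to lie in $(0,r/\bar\eta)$ rather than anywhere in the support: outside $I$, only $C^2$ regularity is available, and the identity theorem would not apply.
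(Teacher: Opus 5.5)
Your proof is correct and follows the paper's argument essentially step for step. In both, the disagreement set is open and $\lambda$-null, so the curves agree on $\mathrm{supp}\,\lambda$; the accumulation point in $(0,r/\bar\eta)$ and the identity theorem on $(-r/\bar\eta,r/\bar\eta)$ then give agreement near $0$, and Theorem~\ref{thm:newsid}(ii) with injectivity of $\vartheta\mapsto Q_\vartheta$ finishes.
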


\begin{proof}
$D:=\{z:A_\vartheta(z)\ne A_{\vartheta_0}(z)\}$ is open (both curves are continuous) and $\lambda$-null, hence disjoint from $\mathrm{supp}\,\lambda$; so the two curves agree on $\mathrm{supp}\,\lambda$, whose accumulation point $z^*\in(0,r/\bar\eta)$ lies in the interior of the interval $(-r/\bar\eta,r/\bar\eta)$ on which both curves are real-analytic (proof of Theorem~\ref{thm:newsid}(ii)). An analytic function whose zero set has an accumulation point in its (connected) domain vanishes identically; hence $A_\vartheta=A_{\vartheta_0}$ on a neighborhood of $0$, and Theorem~\ref{thm:newsid}(ii) gives $Q_\vartheta=Q_{\vartheta_0}$, so $\vartheta=\vartheta_0$ by injectivity.
\end{proof}

\medskip

\begin{theorem}[$\sqrt n$-consistent estimation of the sensitivity distribution]
\label{thm:nls}
Under (R1)--(R4), $\hat\vartheta_n\to\vartheta_0$ in probability, and
\[
\sqrt n\,\big(\hat\vartheta_n-\vartheta_0\big)\;\Rightarrow\;\mathcal N\Big(0,\;J^{-1}\,\E\big[\sigma^2(z)\,\nabla A_{\vartheta_0}(z)\nabla A_{\vartheta_0}(z)^{\!\top}\big]\,J^{-1}\Big),
\qquad\sigma^2(z):=\E[\epsilon^2\mid z].
\]
In the two-parameter case $\vartheta=(\E\eta,\Var\eta)$ ranging over a family satisfying (R1)--(R2), this is the ``two-moment'' estimator of the companion design, with sandwich standard errors.
\end{theorem}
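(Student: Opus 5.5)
The plan is the standard extremum-estimator route: consistency from uniform convergence of $Q_n$ plus identification, then asymptotic normality from a Taylor expansion of the first-order condition.

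\emph{Consistency.} Define the population criterion $Q(\vartheta)=\E[(Y-A_\vartheta(z))^2]$. Using $\E[\epsilon\mid z]=0$, it decomposes as
\[
Q(\vartheta)=\E[\epsilon^2]+\E_\lambda\big[(A_\vartheta(z)-A_{\vartheta_0}(z))^2\big].
\]
So $Q$ is minimized exactly where $A_\vartheta=A_{\vartheta_0}$ holds $\lambda$-a.e. By Lemma~\ref{lem:ident} this happens only at $\vartheta=\vartheta_0$, so the minimizer is unique.

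Next I need a uniform law of large numbers over the compact set $\Theta$. The summands $(Y-A_\vartheta(z))^2$ are continuous in $\vartheta$ for each $z$, by (R2). They are also dominated by the constant $(C_Y+\sup|g|)^2$, since $|A_\vartheta|\le\sup|g|$ always. A standard ULLN (e.g.\ Newey--McFadden Lemma 2.4, or Jennrich) then gives $\sup_\Theta|Q_n-Q|\to0$ almost surely. The same domination shows $Q$ is continuous.

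Consistency follows from the usual argument. On the compact set $\{\vartheta\in\Theta:\|\vartheta-\vartheta_0\|\ge\delta\}$ the continuous function $Q-Q(\vartheta_0)$ is bounded below by some $c_\delta>0$. Once $\sup|Q_n-Q|<c_\delta/2$, the estimator $\hat\vartheta_n$ must therefore lie within $\delta$ of $\vartheta_0$.

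\emph{Asymptotic normality.} By consistency and (R4), $\hat\vartheta_n$ lies in the interior neighborhood where $A_\vartheta$ is $C^2$ with probability tending to one. On that event the score vanishes:
\[
\nabla Q_n(\hat\vartheta_n)=-\frac2n\sum_i\big(Y_i-A_{\hat\vartheta_n}(z_i)\big)\nabla A_{\hat\vartheta_n}(z_i)=0.
\]
A mean-value expansion around $\vartheta_0$, row by row with intermediate points $\bar\vartheta_n$, gives
\[
0=\nabla Q_n(\vartheta_0)+H_n(\bar\vartheta_n)(\hat\vartheta_n-\vartheta_0),
\qquad
H_n(\vartheta)=\frac2n\sum_i\Big[\nabla A_\vartheta\nabla A_\vartheta^\top-(Y_i-A_\vartheta)\nabla^2A_\vartheta\Big](z_i).
\]
The two pieces are handled as follows.
\begin{itemize}
\item \emph{Score.} $\sqrt n\,\nabla Q_n(\vartheta_0)=-\frac{2}{\sqrt n}\sum_i\epsilon_i\nabla A_{\vartheta_0}(z_i)$ is a normalized sum of i.i.d.\ mean-zero vectors, mean zero by $\E[\epsilon\mid z]=0$. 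Their covariance is $4\,\E[\sigma^2(z)\nabla A\nabla A^\top]$, which is finite because $\epsilon$ and $\nabla A$ are bounded. Lindeberg--L\'evy gives the Gaussian limit.
\item \emph{Hessian.} The bracket in $H_n$ is bounded and continuous in $\vartheta$ on the neighborhood, by (R2). The ULLN on a closed ball around $\vartheta_0$, together with $\bar\vartheta_n\to\vartheta_0$, gives $H_n(\bar\vartheta_n)\to 2J$ in probability. The cross term drops out in the limit because $\E[(Y-A_{\vartheta_0})\nabla^2A_{\vartheta_0}(z)]=0$.
\end{itemize}
Since $J$ is nonsingular by (R4), Slutsky's lemma yields the sandwich limit $J^{-1}\E[\sigma^2\nabla A\nabla A^\top]J^{-1}$; the factors of 2 and 4 cancel. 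The two-moment remark is then just this result specialized to $\vartheta=(\E\eta,\Var\eta)$, with the sandwich covariance estimated by plug-in.

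\emph{Main obstacle.} Identification, which is the delicate step, is already supplied by Lemma~\ref{lem:ident}. The remaining care is making the ULLN valid on $\Theta$, where (R2) gives only continuity in $\vartheta$ rather than differentiability. I would handle this by relying on domination plus pointwise continuity, which suffices for Jennrich's theorem. I would also check that the constant envelope from $|A_\vartheta|\le\sup|g|$ is legitimate: it holds for every $Q_\vartheta$ automatically, so no additional moment condition is needed.
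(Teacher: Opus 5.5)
Your proposal is correct and follows the paper's proof. Consistency comes from a uniform law of large numbers over the compact $\Theta$ with the bounded envelope, combined with Lemma~\ref{lem:ident}; the sandwich limit comes from a CLT for the score $-\frac{2}{\sqrt n}\sum_i\epsilon_i\nabla A_{\vartheta_0}(z_i)$ together with uniform convergence of the Hessian to $2J$. The only difference is cosmetic: you carry out the mean-value expansion of the first-order condition explicitly, while the paper simply verifies the hypotheses of Newey--McFadden's Theorem~3.1.
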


\begin{proof}
\emph{Consistency.} The criterion class $\{(y,z)\mapsto(y-A_\vartheta(z))^2:\vartheta\in\Theta\}$ is a family of bounded functions (uniform bound from $|Y| \le C_Y$ and (R2)), continuous in $\vartheta$ for each $(y,z)$, indexed by a compact set; by the uniform law of large numbers for such classes \citep[Lemma~2.4]{newey_large_1994}, $\sup_\Theta|Q_n(\vartheta)-M(\vartheta)|\to0$ in probability, where
\[
M(\vartheta)=\E\big[(Y-A_\vartheta(z))^2\big]=\E[\sigma^2(z)]+\E\big[(A_{\vartheta_0}(z)-A_\vartheta(z))^2\big].
\]
$M$ is continuous, and by Lemma~\ref{lem:ident} it is uniquely minimized at $\vartheta_0$. Standard M-estimation consistency \citep[Thm.~2.1]{newey_large_1994} yields $\hat\vartheta_n\to_p\vartheta_0$.

\emph{Asymptotic normality.} Conditions of \citet[Thm.~3.1]{newey_large_1994} (equivalently \citealp[Thm.~5.23]{vandervaart_asymptotic_1998}): $\vartheta_0$ interior (R4); $Q_n$ twice continuously differentiable near $\vartheta_0$ with
$\nabla Q_n(\vartheta_0)=-\frac2n\sum_i\epsilon_i\nabla A_{\vartheta_0}(z_i)$, an i.i.d.\ average of mean-zero bounded vectors, so $\sqrt n\,\nabla Q_n(\vartheta_0)\Rightarrow\mathcal N(0,\,4\,\E[\sigma^2(z)\nabla A\nabla A^{\!\top}])$ by the multivariate CLT; and
$\nabla^2Q_n(\vartheta)\to_p2\,\E[\nabla A_\vartheta\nabla A_\vartheta^{\!\top}]-2\,\E[(Y-A_\vartheta)\nabla^2A_\vartheta]$ uniformly near $\vartheta_0$ (ULLN again, using the (R2) dominations), which at $\vartheta_0$ equals $2J\succ0$ since $\E[\epsilon\,\nabla^2A_{\vartheta_0}(z)]=0$. The sandwich formula follows:
$\sqrt n(\hat\vartheta_n-\vartheta_0)\Rightarrow\mathcal N\big(0,(2J)^{-1}\,4\,\E[\sigma^2\nabla A\nabla A^{\!\top}]\,(2J)^{-1}\big)$, which is the display.
\end{proof}

\medskip

\subsection{Testing homogeneity: a null on the boundary}
\label{sec:boundarytest}

The scientific null---``news reaction is \emph{not} personal''---is $H_0:\Var(\eta)=0$, a boundary point of any parameterization in which the variance is a parameter. We work in the natural two-parameter family
\begin{equation}
\eta_\theta\;=\;m+\sqrt v\,\xi,\qquad\theta=(m,v)\in\Theta\subset\R\times[0,\infty),
\label{eq:family}
\end{equation}
where $\xi$ is a fixed mean-zero, unit-variance random variable with compact support and $\E|\xi|^3<\infty$, and $\Theta$ is a compact neighborhood of $(m_0,0)$ in $\R\times[0,\infty)$, small enough that $\eta_\theta\in[0,\bar\eta]$ for all $\theta\in\Theta$; the true parameter is $\theta_0=(m_0,0)$ with $m_0>0$ interior in its coordinate. (Any family smoothly parameterized by mean and variance with a fixed standardized shape works identically; \eqref{eq:family} is the concrete one the companion design fits.) We strengthen the smoothness of $g$ to $C^3$---satisfied by $\tanh$---and assume homoskedastic errors for the likelihood-ratio-type statistic:
\begin{itemize}
\item[\textbf{(B1)}] $g\in\cG$ is $C^3$ on $\R$; $(z_i,Y_i)$ as above with $\E[\epsilon^2\mid z]=\sigma^2>0$ constant and $|\epsilon|$ bounded; (R3) holds; and the two functions $z\mapsto z\,g'(m_0z)$ and $z\mapsto\tfrac12z^2g''(m_0z)$ are linearly independent in $L^2(\lambda)$, so that $J=\E_\lambda[\nabla A_{\theta_0}\nabla A_{\theta_0}^{\!\top}]\succ0$, where $\nabla A_{\theta_0}(z):=\big(zg'(m_0z),\,\tfrac12z^2g''(m_0z)\big)^{\!\top}$ is the (one-sided in $v$) gradient computed in Lemma~\ref{lem:expansion} below.
\end{itemize}
For $g=\tanh$, linear independence holds for every $m_0>0$ whenever $\mathrm{supp}\,\lambda$ has an accumulation point in $(0,r/\bar\eta)$: a relation $a\,zg'(m_0z)+\tfrac b2z^2g''(m_0z)\equiv0$ on an interval forces, by comparing the $z$ and $z^3$ Taylor coefficients ($g'(x)=1-x^2+O(x^4)$, $g''(x)=-2x+O(x^3)$), first $a=0$ and then $b\,m_0=0$, so $a=b=0$.

Define the restricted and unrestricted estimators and the quasi-likelihood-ratio statistic
\[
\tilde\theta_n=\argmin_{\Theta\cap\{v=0\}}Q_n,\qquad
\hat\theta_n=\argmin_{\Theta}Q_n,\qquad
T_n\;=\;\frac{n\,\big(Q_n(\tilde\theta_n)-Q_n(\hat\theta_n)\big)}{\hat\sigma_n^2},\quad
\hat\sigma_n^2:=Q_n(\hat\theta_n).
\]

\begin{lemma}[Smoothness of the family at the boundary]
\label{lem:expansion}
Under (B1), for $\theta=(m,v)\in\Theta$ and $z$ in the (compact) support of $\lambda$:
\begin{enumerate}
\item $A_\theta(z)=g(mz)+\dfrac v2\,z^2g''(mz)+R(\theta,z)$ with $|R(\theta,z)|\le C\,v^{3/2}$ uniformly;
\item $\partial A_\theta(z)/\partial v$ exists on $\{v>0\}$, extends continuously to $v=0$ with value $\tfrac12z^2g''(mz)$, and $\big|\partial_vA_\theta(z)-\tfrac12z^2g''(mz)\big|\le C\sqrt v$ uniformly; $\partial_mA_\theta(z)$ is Lipschitz in $\theta$;
\item consequently $A_\theta(z)-A_{\theta_0}(z)=\nabla A_{\theta_0}(z)^{\!\top}(\theta-\theta_0)+\rho(\theta,z)$ with $|\rho(\theta,z)|\le C\|\theta-\theta_0\|^{3/2}$ uniformly, and $A_\theta(z)$ is Lipschitz in $\theta$ on $\Theta$, uniformly in $z$.
\end{enumerate}
\end{lemma}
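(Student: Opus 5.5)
The plan is a third-order Taylor expansion of $g$ around $mz$ in the direction $\sqrt v\,\xi z$, taken inside the expectation over $\xi$. Throughout, compact support of $\xi$ and of $\lambda$ gives a single constant $C$.

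For part 1, write $A_\theta(z)=\E[g(mz+\sqrt v\,\xi z)]$. Taylor's theorem with Lagrange remainder (using $g\in C^3$ from (B1)) gives
\[
g(mz+h)=g(mz)+h\,g'(mz)+\tfrac12h^2g''(mz)+\tfrac16h^3g'''(\zeta),
\]
with $h=\sqrt v\,\xi z$. Taking expectations, the linear term vanishes because $\E\xi=0$, and the quadratic term becomes $\tfrac v2 z^2g''(mz)$ because $\E\xi^2=1$. The remainder satisfies $|R|\le\tfrac16\sup|g'''|\,v^{3/2}\E|\xi|^3\sup|z|^3$. Here the supremum of $|g'''|$ is taken over the compact set of arguments $mz+\sqrt v\xi z$, which is finite by continuity.

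For part 2, differentiate under the expectation for $v>0$:
\[
\partial_vA_\theta(z)=\E\Big[\frac{\xi z}{2\sqrt v}\,g'(mz+\sqrt v\xi z)\Big].
\]
This is justified by dominated convergence, since the integrand is bounded for $v$ bounded away from $0$. Now expand $g'$ to second order, $g'(mz+h)=g'(mz)+h\,g''(mz)+\tfrac12h^2g'''(\zeta')$. The $g'(mz)$ term is killed by $\E\xi=0$. The $g''$ term gives exactly $\tfrac12z^2g''(mz)$, since $\E\xi^2=1$. The remainder is bounded by $\tfrac14\sqrt v\,|z|^3\sup|g'''|\,\E|\xi|^3$. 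This yields the $C\sqrt v$ bound and the continuous extension to $v=0$. For $\partial_mA_\theta(z)=\E[z\,g'(mz+\sqrt v\xi z)]$, the $m$-direction is Lipschitz because $g''$ is bounded on compacts. The $v$-direction is handled by $|\sqrt v-\sqrt{v'}|\le\sqrt{|v-v'|}$, which gives only Hölder-$\tfrac12$ in $v$. To get genuine Lipschitz, I would instead use $\partial_v\partial_mA=\E[z\,\tfrac{\xi z}{2\sqrt v}g''(\cdot)]$ and expand $g''$ to first order. The leading term vanishes by $\E\xi=0$, leaving a bounded quantity $\le\tfrac12 z^3\sup|g'''|$. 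Hence $\partial_mA$ has bounded partial derivatives in both $m$ and $v$ on $v>0$, and by continuity it is Lipschitz on all of $\Theta$.

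For part 3, write $A_\theta-A_{\theta_0}=[A_{(m,v)}-A_{(m,0)}]+[A_{(m,0)}-A_{(m_0,0)}]$.
\begin{itemize}
\item The second bracket is $g(mz)-g(m_0z)=z g'(m_0z)(m-m_0)+O((m-m_0)^2)$.
\item The first bracket is $\tfrac v2z^2g''(mz)+O(v^{3/2})$ by part 1. Replacing $g''(mz)$ by $g''(m_0z)$ costs $O(v|m-m_0|)$.
\end{itemize}
All the error terms, $(m-m_0)^2$, $v|m-m_0|$ and $v^{3/2}$, are $O(\|\theta-\theta_0\|^{3/2})$ on the bounded set $\Theta$, since $v\le\|\theta-\theta_0\|$. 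For global Lipschitzness of $\theta\mapsto A_\theta(z)$, the $m$-derivative is bounded, and the $v$-derivative is bounded on $v>0$ by part 2. Integrating along segments, and using continuity at $v=0$, gives the Lipschitz bound.

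The main obstacle is the $1/\sqrt v$ singularity in $\partial_v$. It is removed by the cancellation $\E\xi=0$, which is why I expand one order beyond what a naive bound would use. Everything else is uniform-constant bookkeeping on compact sets.
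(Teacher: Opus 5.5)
Your proposal is correct. Parts 1 and 2 follow the paper's proof essentially verbatim:
\begin{itemize}
\item the Lagrange expansion in $u=\sqrt v$, with the linear term removed by $\E\xi=0$;
\item the formula $\partial_vA=\E[\xi z\,g'(\cdot)]/(2\sqrt v)$, expanded one order further so that the $1/\sqrt v$ cancels;
\item the same cancellation to bound $\partial_v\partial_mA$ and obtain genuine Lipschitzness of $\partial_mA$.
\end{itemize}

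Part 3 takes a different route. The paper integrates $\nabla A_\theta$ along the straight segment from $\theta_0$ to $\theta$. It combines the Lipschitz $m$-gradient and H\"older-$\tfrac12$ $v$-gradient from part 2 with the fundamental theorem of calculus, using the one-sided derivative at $v=0$.

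You instead go along an axis-parallel path $(m_0,0)\to(m,0)\to(m,v)$:
\begin{itemize}
\item the horizontal leg is an ordinary second-order Taylor bound on $g$;
\item the vertical leg is part 1 applied directly;
\item swapping $g''(mz)$ for $g''(m_0z)$ then costs $O(v|m-m_0|)$, using $g'''$ bounded on compacts.
\end{itemize}

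What your version buys is that it works only with value expansions. It never has to integrate the formally singular $\partial_vA$ down to $v=0$, and the $\|\theta-\theta_0\|^{3/2}$ remainder comes out with an explicit constant.

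What the paper's version buys is structure: the exponent $3/2$ is visibly the H\"older-$\tfrac12$ regularity of the $v$-gradient. It needs the same $C^3$ input you use, to control $g''(m(t)z)-g''(m_0z)$ inside $\partial_vA$ along the diagonal, but leaves that step implicit.

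For the Lipschitz claim on $\theta\mapsto A_\theta(z)$ the two arguments coincide: bounded partials, with continuity up to $v=0$.
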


\begin{proof}
(1) Taylor's theorem with Lagrange remainder in the scalar $u=\sqrt v$: for fixed $\xi$-realization and $z$,
\[
g\big((m+u\xi)z\big)=g(mz)+u\,\xi z\,g'(mz)+\frac{u^2\xi^2}{2}z^2g''(mz)+\frac{u^3\xi^3}{6}z^3g'''\big((m+\tau u\xi)z\big),\ \tau\in(0,1).
\]
Take expectations: the linear term vanishes ($\E\xi=0$), the quadratic term gives $\tfrac v2z^2g''(mz)$ ($\E\xi^2=1$), and the remainder is bounded by $\tfrac{v^{3/2}}{6}\E|\xi|^3\,\bar z^3\sup|g'''|$ over the compact range of arguments.

(2) For $v>0$, differentiating under the (finite-range) expectation, $\partial_vA=\E\big[g'((m+\sqrt v\xi)z)\,z\xi\big]/(2\sqrt v)$. Expanding $g'((m+\sqrt v\xi)z)=g'(mz)+\sqrt v\,\xi z\,g''(mz)+\tfrac{v\xi^2z^2}{2}g'''(\cdot)$ and using $\E\xi=0$,
\[
\partial_vA=\frac{z}{2\sqrt v}\Big[\sqrt v\,z\,g''(mz)\,\E\xi^2+O(v)\Big]=\frac{z^2}{2}g''(mz)+O(\sqrt v),
\]
with the $O$-terms uniform by compactness. $\partial_mA=\E[z\,g'((m+\sqrt v\xi)z)]$ is $C^1$ in $(m,\sqrt v)$ with bounded derivatives, hence Lipschitz in $\theta$ (in $v$: $|\partial_v\partial_mA|=|\E[z^2\xi g''(\cdot)]/(2\sqrt v)|=O(1)$ by the same cancellation).

(3) Integrate the gradient along the segment from $\theta_0$ to $\theta$ (staying in $\R\times[0,\infty)$, where the one-sided derivative at $v=0$ makes the fundamental theorem of calculus valid coordinatewise) and use (2): the $m$-coordinate contributes a Lipschitz-gradient (hence $O(\|\Delta\|^2)$) error, the $v$-coordinate a H\"older-$\tfrac12$-gradient error $O(|\Delta_v|\cdot|\Delta_v|^{1/2})$; both are $O(\|\theta-\theta_0\|^{3/2})$. Lipschitz continuity of $A_\theta$ in $\theta$ follows from the boundedness of both partials.
\end{proof}

\medskip

\begin{theorem}[Boundary test of homogeneous news reaction]
\label{thm:boundary}
Under (B1) and $H_0:\theta_0=(m_0,0)$,
\[
T_n\;\Rightarrow\;\tfrac12\chi^2_0+\tfrac12\chi^2_1,
\]
where the limit is the law of $W^2\,\mathbf 1\{W>0\}$ for $W\sim\mathcal N(0,1)$: an atom of mass $\tfrac12$ at $0$, and conditionally on being positive a $\chi^2_1$ distribution. The test rejecting when $T_n>\chi^2_{1,1-2\alpha}$ has asymptotic size $\alpha$ for $\alpha<\tfrac12$.
\end{theorem}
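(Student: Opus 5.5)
The plan is to follow the Chernoff--Self--Liang--Andrews route: reduce $T_n$ to a quadratic-form comparison in a local parameter over a cone, then compute the projection explicitly. Write $\Delta=\theta-\theta_0=(\Delta_m,\Delta_v)$, which ranges over $\Lambda=\R\times[0,\infty)$ locally, since $m_0$ is interior and $v\ge0$. First establish consistency of both $\hat\theta_n$ and $\tilde\theta_n$ exactly as in Theorem~\ref{thm:nls}. The uniform LLN applies because $A_\theta$ is Lipschitz in $\theta$ (Lemma~\ref{lem:expansion}(3)), and Lemma~\ref{lem:ident} gives identification. That lemma needs $\theta\mapsto Q_\theta$ injective; this holds in family \eqref{eq:family} since the mean and variance of $\eta_\theta$ recover $(m,v)$. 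Consistency of $\hat\sigma_n^2$ to $\sigma^2$ follows as well.

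Next, derive a uniform quadratic expansion of the criterion. Using Lemma~\ref{lem:expansion}(3), write $Y_i-A_\theta(z_i)=\epsilon_i-\nabla A_{\theta_0}(z_i)^{\top}\Delta-\rho(\theta,z_i)$. Expanding the square gives
\[
n\big(Q_n(\theta)-Q_n(\theta_0)\big)=-2\sqrt n\Delta^{\top}S_n+n\Delta^{\top}J_n\Delta+r_n(\theta),
\qquad S_n=\tfrac{1}{\sqrt n}\textstyle\sum_i\epsilon_i\nabla A_{\theta_0}(z_i),
\]
with $J_n\to J$ a.s. by the LLN and $S_n\Rightarrow\mathcal N(0,\sigma^2J)$ by the CLT under homoskedasticity. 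The remainder contains two cross terms. The first, $\frac2n\sum\rho\,\nabla A^{\top}\Delta$, is $O(n\|\Delta\|^{5/2})$ and is harmless. The second, $\frac1n\sum_i\epsilon_i\rho(\theta,z_i)$, needs a stochastic-equicontinuity bound: it must be $o_p(n^{-1}+\|\Delta\|^2)$ uniformly over shrinking neighborhoods. I expect this to be the main obstacle, because $\rho$ is only $O(\|\Delta\|^{3/2})$ and is not differentiable at $v=0$. I would handle it by reparametrizing in $u=\sqrt v$ only inside the empirical process: the class $\{\epsilon\rho(\theta,z)/\|\Delta\|^{3/2}\}$ is bounded and Lipschitz in $(m,u)$, so a bracketing or maximal inequality shows it is $O_p(n^{-1/2})$. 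This makes the cross term $O_p(n^{-1/2}\|\Delta\|^{3/2})=o_p(n^{-1}+\|\Delta\|^2)$, by AM--GM applied to $n^{-1/2}\|\Delta\|^{3/2}$. Given this expansion, the standard argument (Andrews 1999, Thm.~1 style) yields $\sqrt n$-rates for both $\hat\theta_n$ and $\tilde\theta_n$.

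Then pass to the cone problem. With $G=J^{-1}S_n/\sqrt{\cdot}$ appropriately normalized, and with $h=\sqrt n\Delta$, the statistic satisfies
\[
\sigma^2 T_n=\inf_{h\in\R\times\{0\}}q_n(h)-\inf_{h\in\Lambda}q_n(h)+o_p(1),
\qquad q_n(h)=(h-J^{-1}S_n)^{\top}J(h-J^{-1}S_n).
\]
Here $\Lambda$ is a cone, so its local approximation is exact. By the continuous mapping theorem this converges to the difference of the $J$-distances from $Z\sim\mathcal N(0,\sigma^2J^{-1})$ to the line $\{h_v=0\}$ and to the half-plane $\{h_v\ge0\}$. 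The difference vanishes when the $J$-projection of $Z$ onto the line has $h_v$-residual pointing inward. Explicitly, write $Z_v^{\perp}$ for the component of $Z$ orthogonal in the $J$-metric to the $m$-axis. It is a scalar Gaussian, and the difference equals $(Z_v^{\perp})^2/\mathrm{Var}(Z_v^{\perp})\cdot\sigma^2\,\mathbf 1\{Z_v^{\perp}>0\}$, that is $\sigma^2W^2\mathbf 1\{W>0\}$ with $W$ standard normal. Dividing by $\hat\sigma_n^2\to\sigma^2$ gives the stated limit, where $\sigma^2$ cancels precisely because of homoskedasticity.

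Finally, for $\alpha<\tfrac12$ and $c>0$ we have $\Prob(W^2\mathbf 1\{W>0\}>c)=\tfrac12\Prob(\chi^2_1>c)$. Taking $c=\chi^2_{1,1-2\alpha}$ makes this equal to $\alpha$. Since the limit law is continuous at $c>0$, the asymptotic size is $\alpha$.
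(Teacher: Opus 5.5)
Your proof is sound in outline and reaches the same limit, but it gets the $\sqrt n$-rate by a different route, and the justification you give for its key step does not hold.

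\textbf{Comparison with the paper.} The shared part is the architecture: consistency via Lemma~\ref{lem:ident}, a quadratic expansion of $n(Q_n(\theta)-Q_n(\theta_0))$ around the score, projection onto the cones $\R\times\{0\}\subset\R\times[0,\infty)$, and Slutsky with $\hat\sigma_n^2$. Your $J$-orthogonal residual computation is the paper's partitioned-inverse identity in geometric form; the $v$-coordinate of your limit of $J^{-1}S_n$ is the paper's $U$.

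The difference is the rate step. The paper obtains $\sqrt n$-rates first and separately. It uses the population curvature $M(\theta)-M(\theta_0)\ge c\|\Delta\|^2$ (from Lemma~\ref{lem:expansion}(3) and $J\succ0$), a Lipschitz-in-$\theta$ modulus for the criterion class, and van der Vaart's rate theorem (Thm.~5.52). After that it needs the quadratic expansion only on balls $\|h\|\le H$, where $|\rho|\le CH^{3/2}n^{-3/4}$.

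You instead ask for an Andrews-type expansion with remainder $o_p(n^{-1}+\|\Delta\|^2)$ uniformly on shrinking neighborhoods. This gives the rate and the limit from a single expansion. The price is a stronger equicontinuity bound on $\frac1n\sum_i\epsilon_i\rho(\theta,z_i)$. You are right that this term is the crux. Even the paper's weaker version bounds it only through a conditional variance that is pointwise in $h$, and then asserts uniformity over $\|h\|\le H$; that tacitly needs an argument of the kind you propose.

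\textbf{The flawed step.} The normalized class $\epsilon\rho(\theta,z)/\|\Delta\|^{3/2}$ is not Lipschitz in $(m,u)$:
\begin{itemize}
\item Along $v=0$, $\rho\approx\tfrac12z^2g''(m_0z)\Delta_m^2$, so the ratio behaves like $|\Delta_m|^{1/2}$.
\item Along $\Delta_m=0$, the ratio equals $R/u^3\to\tfrac16\E[\xi^3]\,z^3g'''(m_0z)$. So for skewed $\xi$ it has direction-dependent limits at $\Delta=0$.
\item With $g$ only $C^3$, $R/u^3$ need not be Lipschitz in $u$.
\end{itemize}

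\textbf{The repair.} Do not normalize. The class $\{\epsilon\rho_\theta:\|\theta-\theta_0\|\le\delta\}$ has envelope $C\delta^{3/2}$ and is Lipschitz in $\theta$ with an $O(1)$ constant; both facts come from Lemma~\ref{lem:expansion}(3). Its bracketing numbers at scale $t$ are therefore $O((\delta/t)^2)$. The bracketing maximal inequality then gives $\E^*\sup_{\|\Delta\|\le\delta}|\mathbb G_n(\epsilon\rho_\theta)|\lesssim\delta^{3/2}\sqrt{\log(1/\delta)}$. Peeling over dyadic shells $\|\Delta\|\in(2^{-j-1},2^{-j}]$ yields $\sup_\theta|\frac1n\sum_i\epsilon_i\rho(\theta,z_i)|/(n^{-1}+\|\Delta\|^2)=O_p(n^{-1/4}\sqrt{\log n})$. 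That is exactly what your route needs. Taking $\delta=H/\sqrt n$ in the same inequality also supplies the uniformity the paper's Step~2 asserts.

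\textbf{Two minor slips.} First, the cone difference vanishes when the residual points \emph{outward} ($Z_v<0$). Your displayed indicator is correct, but the sentence before it is reversed. Second, the remainder also contains $\sum_i\rho(\theta,z_i)^2=O(n\|\Delta\|^3)$, which is harmless.
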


\begin{proof}
Write $\Delta=\theta-\theta_0$ and $h=\sqrt n\,\Delta$. All statements are under $H_0$.

\emph{Step 1: consistency and rate.} $M(\theta)-M(\theta_0)=\E_\lambda[(A_\theta-A_{\theta_0})^2]$. By Lemma~\ref{lem:ident} (whose proof needs only continuity, analyticity near $0$ of each curve---valid for the family \eqref{eq:family} since each $\eta_\theta$ has compact support---and Theorem~\ref{thm:newsid}(ii) applied to the measures $Q_\theta$, plus injectivity of $\theta\mapsto Q_\theta$, which holds because mean and variance are determined by $Q_\theta$), $\theta_0$ is the unique minimizer of $M$ on $\Theta$; with the ULLN as in Theorem~\ref{thm:nls}, both $\hat\theta_n$ and $\tilde\theta_n$ are consistent. For the rate: by Lemma~\ref{lem:expansion}(3),
\[
\big\|A_\theta-A_{\theta_0}\big\|_{L^2(\lambda)}\;\ge\;\big\|\nabla A_{\theta_0}^{\!\top}\Delta\big\|_{L^2(\lambda)}-C\|\Delta\|^{3/2}
\;\ge\;\Big(\sqrt{\lambda_{\min}(J)}-C\|\Delta\|^{1/2}\Big)\|\Delta\|,
\]
so $M(\theta)-M(\theta_0)\ge c\|\Delta\|^2$ on a neighborhood of $\theta_0$. The criterion difference class $\{m_\theta-m_{\theta_0}:\|\Delta\|\le\delta\}$, $m_\theta(y,z)=(y-A_\theta(z))^2$, satisfies $|m_\theta-m_{\theta'}|\le C'\|\theta-\theta'\|$ (Lipschitz in $\theta$ with constant envelope, by Lemma~\ref{lem:expansion}(3) and boundedness of $Y$); a Lipschitz-in-parameter class over a subset of $\R^2$ has bracketing entropy of parametric order, so the empirical-process modulus condition of the rate theorem \citep[Thm.~5.52]{vandervaart_asymptotic_1998} holds with $\phi(\delta)=C\delta$ (via the standard bracketing maximal inequality, \citealp[Ch.~19]{vandervaart_asymptotic_1998}), and that theorem gives $\|\hat\theta_n-\theta_0\|=O_P(n^{-1/2})$, and the same for $\tilde\theta_n$ (the restricted problem is the same argument on the sub-parameter-space $\{v=0\}$, where $\theta_0$ is interior in the induced topology of the $m$-axis). Note that the rate theorem does not require $\theta_0$ to be interior in $\R^2$; it requires only the quadratic lower bound and the modulus condition, both established.

\emph{Step 2: uniform quadratic expansion.} Let
$Z_n:=\frac{1}{\sqrt n}\sum_i\epsilon_i\nabla A_{\theta_0}(z_i)\Rightarrow Z\sim\mathcal N(0,\sigma^2J)$ (multivariate CLT; summands bounded, mean zero) and $J_n:=\frac1n\sum_i\nabla A_{\theta_0}(z_i)\nabla A_{\theta_0}(z_i)^{\!\top}\to_pJ$. Fix $H<\infty$. For $\|h\|\le H$ and $\theta=\theta_0+h/\sqrt n\in\Theta$, expand, writing $A_\theta-A_{\theta_0}=\nabla A_{\theta_0}^{\!\top}h/\sqrt n+\rho_n$ with $|\rho_n|\le C(H/\sqrt n)^{3/2}=CH^{3/2}n^{-3/4}$ (Lemma~\ref{lem:expansion}(3)):
\begin{align*}
n\big[Q_n(\theta)-Q_n(\theta_0)\big]
&=-\frac{2}{\sqrt n}\sum_i\epsilon_i\Big[\sqrt n\,(A_\theta-A_{\theta_0})(z_i)\Big]+\sum_i(A_\theta-A_{\theta_0})^2(z_i)\\
&=-2Z_n^{\!\top}h+h^{\!\top}J_nh+r_n(h),
\end{align*}
where $r_n$ collects three terms: (i) $-2\sum_i\epsilon_i\rho_n(z_i)$, whose conditional-on-$(z_i)$ variance is at most $4n\sigma^2C^2H^3n^{-3/2}\to0$, so it is $o_P(1)$; (ii) the cross term $2\sum_i(\nabla A^{\!\top}h/\sqrt n)\rho_n(z_i)$, bounded by $2n\cdot C_1Hn^{-1/2}\cdot CH^{3/2}n^{-3/4}=O(H^{5/2}n^{-1/4})\to0$; (iii) $\sum_i\rho_n(z_i)^2\le nC^2H^3n^{-3/2}\to0$. All bounds are uniform over $\|h\|\le H$, so
\begin{equation}
\sup_{\|h\|\le H}\Big|n\big[Q_n(\theta_0+h/\sqrt n)-Q_n(\theta_0)\big]-q_n(h)\Big|\to_p0,
\qquad q_n(h):=-2Z_n^{\!\top}h+h^{\!\top}J_nh .
\label{eq:quad}
\end{equation}

\emph{Step 3: cone projection.} The local parameter sets are $\sqrt n(\Theta-\theta_0)\cap\{\|h\|\le H\}$, which converge (locally in Hausdorff distance, since $m_0$ is interior in its coordinate and $\Theta$ is a neighborhood of $\theta_0$ in $\R\times[0,\infty)$) to the closed convex cone $\Lambda=\R\times[0,\infty)$, and $\{v=0\}$ gives $\Lambda_0=\R\times\{0\}$. By Step~1, $\sqrt n(\hat\theta_n-\theta_0)$ and $\sqrt n(\tilde\theta_n-\theta_0)$ are $O_P(1)$, so with probability tending to one they lie in $\{\|h\|\le H\}$ for large $H$; \eqref{eq:quad} plus the continuity and strict convexity of the limiting quadratic $q(h)=-2Z^{\!\top}h+h^{\!\top}Jh$ (recall $J\succ 0$) then give, by the argmin continuous-mapping argument for convex quadratics over closed convex cones (the quadratic-approximation framework of \citealp{andrews_testing_2001}; in the likelihood setting \citealp{chernoff_distribution_1954,self_asymptotic_1987}),
\[
n\big(Q_n(\tilde\theta_n)-Q_n(\hat\theta_n)\big)\;\Rightarrow\;\min_{h\in\Lambda_0}q(h)-\min_{h\in\Lambda}q(h).
\]
Compute the right side. Write $Z=(Z_m,Z_v)$, $\hat h=J^{-1}Z$ the unconstrained minimizer, $q(\hat h)=-Z^{\!\top}J^{-1}Z$. Over $\Lambda_0$: minimizing $-2Z_mh_m+J_{mm}h_m^2$ gives $-Z_m^2/J_{mm}$. Over $\Lambda$: if $\hat h_v\ge0$ the unconstrained minimizer is feasible; otherwise the minimum over the convex set lies on the face $\{v=0\}$ and coincides with the $\Lambda_0$ minimum. The partitioned-inverse identity
\[
Z^{\!\top}J^{-1}Z-\frac{Z_m^2}{J_{mm}}=\frac{\big((J^{-1}Z)_v\big)^2}{(J^{-1})_{vv}}
\]
(direct computation: with $J=\left(\begin{smallmatrix}a&b\\b&c\end{smallmatrix}\right)$, $Z=(x,y)$, both sides equal $(bx-ay)^2/\big(a(ac-b^2)\big)$) gives
\[
\min_{\Lambda_0}q-\min_{\Lambda}q=\frac{U^2}{(J^{-1})_{vv}}\,\mathbf 1\{U>0\},\qquad U:=(J^{-1}Z)_v\sim\mathcal N\big(0,\sigma^2(J^{-1})_{vv}\big).
\]
So $n(Q_n(\tilde\theta_n)-Q_n(\hat\theta_n))\Rightarrow\sigma^2W^2\mathbf 1\{W>0\}$ with $W\sim\mathcal N(0,1)$. Finally $\hat\sigma_n^2=Q_n(\hat\theta_n)\to_p\sigma^2$ (from consistency and the ULLN), so by Slutsky $T_n\Rightarrow W^2\mathbf 1\{W>0\}$, which is the $\tfrac12\chi^2_0+\tfrac12\chi^2_1$ mixture ($\Prob(W\le0)=\tfrac12$). The size statement follows since $\Prob(W^2\mathbf 1\{W>0\}>\chi^2_{1,1-2\alpha})=\tfrac12\cdot2\alpha=\alpha$.
\end{proof}

\medskip

\begin{remark}[Practical variants and a bootstrap warning]
\label{rem:bootwarn}
(i) Heteroskedastic errors: replace $T_n$ by the one-sided $t$-statistic $\hat v_n/\widehat{\mathrm{se}}(\hat v_n)$ with sandwich standard error; its null limit is $\max(0,W)$ for standard normal $W$ by the same Steps~1--3 (the constrained limiting minimizer's $v$-coordinate is $\max(0,\hat h_v)$), so one rejects at level $\alpha$ when it exceeds $z_{1-\alpha}$. (ii) Do \emph{not} bootstrap the naive way: resampling-based critical values for a parameter on the boundary are inconsistent \citep{andrews_inconsistency_2000}. The companion design's original suggestion to ``test $H_0:\Var(\eta)=0$ via the bootstrap'' should be implemented, if at all, with boundary-aware modifications (e.g.\ shrinking-parameter or $m$-out-of-$n$ schemes); the asymptotic mixture of Theorem~\ref{thm:boundary} is the simpler instrument and is exactly what the $\tfrac12\chi^2_0+\tfrac12\chi^2_1$ tables of \citet{self_asymptotic_1987} were made for. (iii) The dose--response arm of the companion design ($\sigma_\eta$ swept away from the calibrated value) provides power curves for exactly this test.
\end{remark}

\section{Separation theorems: when PTMC is provably the better tool}
\label{sec:separation}

No theorem can assert that one modeling framework dominates all others on all problems; for any two methods there are data-generating processes favoring either. What can be proved are \emph{conditional separations}: fix a target quantity and an information set, and show that PTMC achieves an error that provably no method of a rival class can achieve. This section proves two such results. Theorem~\ref{thm:quantjensen} and its uniform version Theorem~\ref{thm:homsep} give an \emph{irreducible asymptotic bias floor}, proportional to the sensitivity variance, for \emph{every} homogeneous-population simulator---the quantitative form of the Jensen gap of Theorem~\ref{thm:newsid}(i). Theorem~\ref{thm:lucas} is stronger in scope: for \emph{interventional} targets, no estimator that is a functional of observational data---a class containing Bayesian-neural-network forecasters, dropout ensembles, and LLM-based predictors alike---can have worst-case error below an explicit constant, at any sample size, while the structural PTMC estimator is consistent when its model class is correctly specified. The second result is the Lucas critique \citep{lucas_econometric_1976} converted from a methodological warning into a minimax bound, using this paper's own observational-equivalence construction (Proposition~\ref{prop:unknowng}) as the two-point pair. Read against the current causal-machine-learning literature, PTMC occupies a specific position: the policy $\pi_\phi$ supplies the predictive power of behavioral cloning, fit directly from data, while the explicit persona distribution $\cP$ supplies a handle for intervention---set $\cP$ to a counterfactual population and simulate forward. These are not competing sources of the same capability; Theorem~\ref{thm:lucas} is the formal statement that the first cannot substitute for the second, however large the model or the dataset behind it.

\subsection{A bias floor for homogeneous-population simulators}

Theorem~\ref{thm:newsid}(i) shows the representative-agent impact curve is wrong whenever $\Var_Q(\eta)>0$; the next theorem shows \emph{by how much}, with an explicit constant.

\begin{theorem}[Quantitative Jensen gap]
\label{thm:quantjensen}
Assume (G1), (Q1), and $\Var_Q(\eta)>0$, and write $m=\E_Q[\eta]$. Then for every $z>0$,
\[
g(zm)-A(z)\;\ge\;\tfrac38\,z^2\,\beta(z)\,\Var_Q(\eta),
\qquad
\beta(z):=\min_{y\in[\,mz/2,\;\bar\eta z\,]}\big|g''(y)\big|\;>\;0 .
\]
\end{theorem}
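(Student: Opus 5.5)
The plan is to refine the supporting-line argument of Theorem~\ref{thm:newsid}(i) into a second-order estimate. I use Taylor's formula with integral remainder for $\varphi(x):=g(xz)$ on $[0,\bar\eta]$, with $z>0$ fixed. Since $g\in C^2$, for every $\eta\in[0,\bar\eta]$
\[
\varphi(\eta)=\varphi(m)+\varphi'(m)(\eta-m)+\int_m^\eta(\eta-t)\,\varphi''(t)\,dt,\qquad \varphi''(t)=z^2g''(tz).
\]
Integrating against $Q$ kills the linear term because $\E_Q[\eta-m]=0$. This gives the exact representation
\[
g(zm)-A(z)=\E_Q\big[\Psi(\eta)\big],\qquad \Psi(\eta):=\Big|\int_m^\eta(\eta-t)\,\varphi''(t)\,dt\Big|=z^2\int_{[\eta,m]\text{ or }[m,\eta]}|t-\eta|\,|g''(tz)|\,dt .
\]
The absolute value is legitimate: $g''<0$ on $(0,\infty)$, and $(\eta-t)\,dt$ keeps a fixed sign along the oriented segment, so the integral is nonpositive in both cases. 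It then suffices to prove the pointwise bound $\Psi(\eta)\ge\tfrac38z^2\beta(z)(\eta-m)^2$ for every $\eta\in[0,\bar\eta]$, and to integrate it.

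Positivity of $\beta(z)$ comes first. The interval $[mz/2,\bar\eta z]$ is compact and contained in $(0,\infty)$, because $m>0$ by (Q1). On it $g''$ is continuous and strictly negative, so the minimum of $|g''|$ is attained and is positive.

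The pointwise bound splits into two cases. If $\eta\ge m/2$, the whole segment between $\eta$ and $m$ lies in $[m/2,\bar\eta]$, so $|g''(tz)|\ge\beta$ throughout. This gives $\Psi(\eta)\ge z^2\beta\int|t-\eta|\,dt=\tfrac12z^2\beta(\eta-m)^2$, which is even better than $\tfrac38$.

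The main obstacle is the case $\eta<m/2$. There the segment $[\eta,m]$ reaches down toward $0$, where $g''(0)=0$ by oddness, so no uniform curvature bound is available. This is exactly why $\beta$ is defined only on $[mz/2,\bar\eta z]$. I handle it by discarding the part $[\eta,m/2]$ of the integral and keeping only $[m/2,m]$:
\[
\Psi(\eta)\ge z^2\beta\int_{m/2}^m(t-\eta)\,dt=z^2\beta\,\tfrac m2\big(\tfrac{3m}4-\eta\big).
\]
Writing $s=m-\eta\in(m/2,m]$, the ratio of this bound to $\tfrac12z^2\beta s^2$ equals $m(s-m/4)/s^2$. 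Its derivative in $s$ has the sign of $m/2-s\le0$, so the ratio is decreasing and its minimum is attained at $s=m$ (that is, $\eta=0$), where it equals $\tfrac34$. Hence $\Psi(\eta)\ge\tfrac34\cdot\tfrac12z^2\beta(\eta-m)^2=\tfrac38z^2\beta(\eta-m)^2$ in this case too. The constant $\tfrac38$ is exactly this worst-case ratio, attained by the news-ignorer atom at $\eta=0$.

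Combining the two cases, $\Psi(\eta)\ge\tfrac38z^2\beta(z)(\eta-m)^2$ for all $\eta\in[0,\bar\eta]$. Taking $\E_Q$ gives $g(zm)-A(z)\ge\tfrac38z^2\beta(z)\Var_Q(\eta)$. The argument uses only $C^2$ regularity and the sign of $g''$ from (G1); (G2) and analyticity are not needed.
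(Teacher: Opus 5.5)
Your proof is correct and follows the paper's route: the same Taylor expansion with integral remainder, the same observation that the linear term integrates to zero, and the same device of discarding the part of the remainder integral near $0$, where no curvature bound holds, with the worst case at $\eta=0$. The only difference is bookkeeping: you split at $\eta=m/2$ and keep $[m/2,m]$, which is why you need the monotonicity-of-the-ratio step, whereas the paper splits at $\eta=m$ and, for every $\eta<m$, keeps the upper half $[(\eta+m)/2,\,m]$ of the segment, which gives exactly $\tfrac38(m-\eta)^2$ with no optimization (your retained interval is never smaller, so your pointwise bound is slightly sharper but ends at the same constant).
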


\begin{proof}
Fix $z>0$ and let $\psi(x):=g(zm)+zg'(zm)(x-m)-g(zx)$ for $x\in[0,\bar\eta]$, so that $g(zm)-A(z)=\E_Q[\psi(\eta)]$ (the linear term integrates to zero). Taylor's theorem with integral remainder gives, for every $x$,
\[
\psi(x)\;=\;\int_{x\wedge m}^{\,x\vee m}\big|x-u\big|\;\big(-z^2g''(zu)\big)\,du\;\ge\;0 .
\]
\emph{Case $x\ge m$.} The integration variable satisfies $u\in[m,\bar\eta]$, so $zu\in[mz,\bar\eta z]\subset[mz/2,\bar\eta z]$ and $-g''(zu)\ge\beta(z)$; hence
$\psi(x)\ge z^2\beta(z)\int_m^x(x-u)\,du=\tfrac12z^2\beta(z)(x-m)^2\ge\tfrac38z^2\beta(z)(x-m)^2$.

\emph{Case $x<m$.} Discard the part of the integral below $u=(x+m)/2$; since $x\ge0$ implies $(x+m)/2\ge m/2$, on the retained range $zu\in[mz/2,\,mz]\subset[mz/2,\bar\eta z]$, so
\[
\psi(x)\;\ge\;z^2\beta(z)\int_{(x+m)/2}^{m}(u-x)\,du
\;=\;z^2\beta(z)\cdot\tfrac12\Big[(m-x)^2-\big(\tfrac{m-x}{2}\big)^2\Big]
\;=\;\tfrac38\,z^2\beta(z)(m-x)^2 .
\]
Combining the cases, $\psi(x)\ge\tfrac38z^2\beta(z)(x-m)^2$ on all of $[0,\bar\eta]$; taking $\E_Q$ gives the display. Positivity of $\beta(z)$: $g''$ is continuous and strictly negative on the compact interval $[mz/2,\bar\eta z]\subset(0,\infty)$, and $m>0$ by (Q1).
\end{proof}

\medskip

\begin{remark}
For $g=\tanh$, $|g''(y)|=2\tanh y\,(1-\tanh^2y)$ is unimodal on $(0,\infty)$, so $\beta(z)$ is attained at an endpoint of $[mz/2,\bar\eta z]$ and is available in closed form. Theorem~\ref{thm:quantjensen} directly prices the representative-persona arm of the companion horse race (its arm~(b)): that arm's impact-curve bias at news magnitude $z$ is at least $\tfrac38z^2\beta(z)\Var_Q(\eta)$, however well it is calibrated otherwise.
\end{remark}

A homogeneous simulator is not obliged to match the slope at the origin, and it may include news-ignoring agents; the honest comparison class is therefore every curve $w\,g(c\,\cdot)$ with active mass $w\in[0,1]$ and sensitivity $c$ in the persona space. Separation survives, uniformly.

\begin{theorem}[Uniform separation from all homogeneous populations]
\label{thm:homsep}
Assume (G1), (G2), (Q1), $\Var_Q(\eta\mid\eta>0)>0$, and fix $[z_1,z_2]\subset(0,\,r/\bar\eta)$. Then
\[
\delta\;:=\;\inf_{w\in[0,1],\;c\in[0,\bar\eta]}\;\sup_{z\in[z_1,z_2]}\big|w\,g(cz)-A(z)\big|\;>\;0 .
\]
\end{theorem}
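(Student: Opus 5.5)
The plan is a compactness-plus-identification argument. Define $\Phi(w,c):=\sup_{z\in[z_1,z_2]}|w\,g(cz)-A(z)|$ on the compact set $K:=[0,1]\times[0,\bar\eta]$. Since $g$ is continuous and $[z_1,z_2]\times[0,\bar\eta]$ is compact, the map $(w,c,z)\mapsto w\,g(cz)-A(z)$ is uniformly continuous on $K\times[z_1,z_2]$. Hence $\Phi$ is continuous on $K$; indeed $|\Phi(w,c)-\Phi(w',c')|\le\sup_z|w g(cz)-w'g(c'z)|$, which tends to zero uniformly. Therefore the infimum $\delta$ is attained at some $(w^*,c^*)\in K$. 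It then suffices to show $\Phi(w,c)>0$ for every $(w,c)\in K$, i.e.\ that no homogeneous curve $w\,g(c\,\cdot)$ coincides with $A$ on all of $[z_1,z_2]$.

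Suppose, for contradiction, that $w\,g(cz)=A(z)$ for all $z\in[z_1,z_2]$. First I will exclude the degenerate cases. If $w=0$ or $c=0$, the left side vanishes identically. But $A(z)>0$ for $z>0$: $A$ is odd, $A'(0)=g'(0)\E_Q[\eta]>0$, and $A$ is strictly concave on $(0,\infty)$, while by (Q1) $g(\eta z)>0$ on $\{\eta>0\}$, which has positive $Q$-mass. So both $w,c>0$. Next I will extend the equality to a neighborhood of $0$ by analytic continuation. By the series representation in the proof of Theorem~\ref{thm:newsid}(ii), $A$ is real-analytic on $(-r/\bar\eta,r/\bar\eta)$. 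Since $c\le\bar\eta$, the function $z\mapsto w\,g(cz)$ is real-analytic on $(-r/c,r/c)\supset(-r/\bar\eta,r/\bar\eta)$. The interval $[z_1,z_2]\subset(0,r/\bar\eta)$ is nondegenerate, so it has accumulation points inside the connected domain $(-r/\bar\eta,r/\bar\eta)$. By the identity theorem, $w\,g(c\,\cdot)=A$ on that whole interval, in particular near $0$. This is the same mechanism used in Lemma~\ref{lem:ident}.

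Now Corollary~\ref{cor:scale} applies: $A$ is proportional to $g(c\,\cdot)$ on a neighborhood of $0$ with $c>0$, and that corollary requires (G1), (G2), (Q1), all assumed. It therefore forces $Q(\cdot\mid\eta>0)=\delta_c$, so $\Var_Q(\eta\mid\eta>0)=0$, contradicting the hypothesis. Hence $\Phi>0$ on $K$, and $\delta=\Phi(w^*,c^*)>0$.

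The step I expect to need the most care is the analytic-continuation step, specifically checking that the domain of analyticity of $g(c\,\cdot)$ covers $(-r/\bar\eta,r/\bar\eta)$ uniformly in $c\in(0,\bar\eta]$. This holds because $c\le\bar\eta$ makes $cz$ stay within the Taylor radius $r$. It is exactly why the statement restricts to $[z_1,z_2]\subset(0,r/\bar\eta)$ and to $c\in[0,\bar\eta]$ rather than arbitrary $c$.

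A secondary point is the continuity of $\Phi$ needed for attainment of the infimum. This comes from uniform continuity on a compact set. One could alternatively argue with a minimizing sequence and extract a convergent subsequence in $K$, reaching the same contradiction.
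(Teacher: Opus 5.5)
Your proposal follows the paper's proof essentially step for step. The infimum is attained by continuity and compactness, the cases $w^*=0$ or $c^*=0$ are excluded, the identity theorem is applied on $(-r/\bar\eta,r/\bar\eta)$, and Corollary~\ref{cor:scale} then forces $Q(\cdot\mid\eta>0)=\delta_{c^*}$.

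The one step whose justification, as written, does not go through is $A(z)>0$ for $z>0$.
\begin{itemize}
\item Oddness, $A'(0)>0$ and strict concavity on $(0,\infty)$ do not imply it. The odd function $z-z^3$ has all three properties and is negative for $z>1$.
\item (Q1) says nothing about the sign of $g$.
\end{itemize}
The missing ingredient is the boundedness in (G1). The paper uses it to show $g'>0$ on $(0,\infty)$: $g'$ is strictly decreasing there, so if $g'(x_0)\le 0$ somewhere, then $g'$ is eventually bounded above by a negative constant and $g\to-\infty$. Hence $g>0$ on $(0,\infty)$, and $A(z_1)=\E_Q[g(\eta z_1)]>0$ because $Q(\eta>0)>0$. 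Equivalently, you could add boundedness of $A$ to your list, since a bounded concave function on $[0,\infty)$ is nondecreasing.

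Alternatively, you can bypass positivity with the tool you already use. If $w=0$ or $c=0$, then $A\equiv 0$ on the nondegenerate interval $[z_1,z_2]$. The identity theorem then gives $A\equiv 0$ near $0$, contradicting $A'(0)=g'(0)\,\E_Q[\eta]>0$.
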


\begin{proof}
First note that (G1) forces $g'>0$ on $(0,\infty)$: $g'$ is strictly decreasing there ($g''<0$), and if $g'(x_0)\le0$ for some $x_0$ then $g'(x)<g'(x_0+\tfrac{x_0}{2})<0$ for all large $x$, whence $g(x)\to-\infty$, contradicting boundedness. Hence $g>0$ on $(0,\infty)$ and, by \eqref{eq:aderivs} and (Q1), $A(z_1)>0$.

The map $(w,c)\mapsto\sup_{z\in[z_1,z_2]}|wg(cz)-A(z)|$ is continuous on the compact set $[0,1]\times[0,\bar\eta]$ ($g$ is uniformly continuous on $[0,\bar\eta z_2]$, and a supremum of jointly continuous functions over a compact $z$-interval is continuous), so the infimum is attained at some $(w^*,c^*)$. Suppose $\delta=0$, i.e.\ $w^*g(c^*z)=A(z)$ on $[z_1,z_2]$. If $w^*=0$ or $c^*=0$ the left side vanishes identically, contradicting $A(z_1)>0$; so $w^*>0$, $c^*\in(0,\bar\eta]$. Both sides are real-analytic on the interval $(-r/\bar\eta,\,r/\bar\eta)$: $A$ by the series representation in the proof of Theorem~\ref{thm:newsid}(ii), and $z\mapsto g(c^*z)$ because $|c^*z|\le\bar\eta|z|<r$ there. Their difference vanishes on $[z_1,z_2]$, a set with accumulation points interior to the (connected) domain, hence vanishes identically; in particular $A=w^*g(c^*\cdot)$ on a neighborhood of $0$. By the equivalence in Corollary~\ref{cor:scale}, this forces $Q(\cdot\mid\eta>0)=\delta_{c^*}$, contradicting $\Var_Q(\eta\mid\eta>0)>0$.
\end{proof}

\medskip

\begin{corollary}[Asymptotic risk dominance]
\label{cor:homsepmse}
In the setting of Theorem~\ref{thm:homsep}, call an estimation method \emph{homogeneous-population} if its fitted impact curve $\widehat A_n$ satisfies $\inf_{w\in[0,1],c\in[0,\bar\eta]}\sup_{z\in[z_1,z_2]}|\widehat A_n(z)-wg(cz)|\to_p0$ (its output lies asymptotically in the homogeneous family, as it does for any procedure that fits $(w,c)$ by any criterion). Then
\[
\Prob\Big(\sup_{z\in[z_1,z_2]}\big|\widehat A_n(z)-A(z)\big|\ge\tfrac\delta2\Big)\;\to\;1 ,
\]
an error floor no data or compute can remove. By contrast, under the conditions of Theorem~\ref{thm:nls} with a persona family containing $Q$, the PTMC plug-in curve $A_{\hat\vartheta_n}$ satisfies $\sup_{z\in[z_1,z_2]}|A_{\hat\vartheta_n}(z)-A(z)|\to_p0$.
\end{corollary}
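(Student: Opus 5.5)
The plan is to treat the two claims separately. The homogeneous-population error floor follows from Theorem~\ref{thm:homsep} and the triangle inequality. The PTMC consistency claim follows from Theorem~\ref{thm:nls} and the Lipschitz/continuity properties of the impact curve in the parameter.

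\emph{Floor.} Let $H:=\{w\,g(c\,\cdot):w\in[0,1],c\in[0,\bar\eta]\}$, viewed in $C[z_1,z_2]$ with the sup norm $\|\cdot\|_\infty$. Theorem~\ref{thm:homsep} gives $\inf_{h\in H}\|h-A\|_\infty=\delta>0$. Consider the event $E_n:=\{\inf_{h\in H}\|\widehat A_n-h\|_\infty<\delta/2\}$; by the defining property of a homogeneous-population method, $\Prob(E_n)\to1$. On $E_n$, pick $h\in H$ with $\|\widehat A_n-h\|_\infty<\delta/2$. The triangle inequality then gives
\[
\|\widehat A_n-A\|_\infty\ge\|h-A\|_\infty-\|\widehat A_n-h\|_\infty>\delta-\delta/2=\delta/2 .
\]
Hence $\Prob(\|\widehat A_n-A\|_\infty\ge\delta/2)\ge\Prob(E_n)\to1$. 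This step is purely metric and needs no further argument.

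\emph{PTMC consistency.} Take a parametric family $\{Q_\vartheta\}$ with $Q=Q_{\vartheta_0}$ satisfying (R1)--(R4). Theorem~\ref{thm:nls} gives $\hat\vartheta_n\to_p\vartheta_0$. I then need $\sup_{z\in[z_1,z_2]}|A_\vartheta(z)-A_{\vartheta_0}(z)|\to0$ as $\vartheta\to\vartheta_0$. On a neighborhood of $\vartheta_0$, (R2) gives uniformly bounded gradients, so the mean value theorem yields $|A_\vartheta(z)-A_{\vartheta_0}(z)|\le C\|\vartheta-\vartheta_0\|$ uniformly in $z$. This holds for $z$ in the support of $\lambda$, which is where (R2) is stated.

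The interval $[z_1,z_2]$ may extend beyond $\mathrm{supp}\,\lambda$, and this is the only delicate point. To handle it, I would use the representation $A_\vartheta(z)=\int g(\eta z)\,Q_\vartheta(d\eta)$. The integrand $g(\eta z)$ is equi-Lipschitz in $z$ on $[z_1,z_2]$, since $|\partial_z g(\eta z)|\le\bar\eta\sup_{[0,\bar\eta z_2]}|g'|$. So pointwise convergence $A_{\hat\vartheta_n}(z)\to A(z)$ upgrades to uniform convergence on the compact interval by an Arzel\`a--Ascoli/equicontinuity argument. Pointwise convergence itself follows from continuity of $\vartheta\mapsto Q_\vartheta$ in the weak topology. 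If that continuity is not directly available, I would use the fact that continuity of $A_\vartheta$ in $\vartheta$ at points of $\mathrm{supp}\,\lambda$ transfers to weak continuity of $Q_\vartheta$: the curves on $[0,r/\bar\eta)$ determine the moments, and the measures live on a compact set, so weak limits are identified.

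Finally, the continuous mapping theorem applied to $\vartheta\mapsto\|A_\vartheta-A\|_\infty$, which is continuous at $\vartheta_0$, gives $\sup_{[z_1,z_2]}|A_{\hat\vartheta_n}-A|\to_p0$. The main obstacle is this off-support extension. The simplest route is to assume, as is natural in the companion design, that $[z_1,z_2]\subset\mathrm{supp}\,\lambda$ or that (R2) holds for all $z$ in $[z_1,z_2]$; the equicontinuity argument is the fallback.
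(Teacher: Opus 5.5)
Your proposal is correct and follows the paper's proof: the floor is the same triangle-inequality argument against the $\delta$ of Theorem~\ref{thm:homsep}, and the PTMC side is consistency from Theorem~\ref{thm:nls}, plus continuity of $\vartheta\mapsto A_\vartheta$ uniformly in $z$, plus the continuous mapping theorem. The paper simply asserts that (R2) gives this uniformity on $[z_1,z_2]$, even though (R2) is stated only for $z\in\mathrm{supp}\,\lambda$; your weak-compactness fallback correctly closes that gap once one step is made explicit: a subsequential weak limit $Q^*$ satisfies $A_{Q^*}=A_{\vartheta_0}$ only on $\mathrm{supp}\,\lambda$, so you must first extend this equality to a neighborhood of $0$ by the analytic-continuation step of Lemma~\ref{lem:ident} (using (R3)), and only then invoke Theorem~\ref{thm:newsid}(ii) to get $Q^*=Q_{\vartheta_0}$.
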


\begin{proof}
For any $(w,c)$, the triangle inequality gives $\|\widehat A_n-A\|_\infty\ge\|wg(c\cdot)-A\|_\infty-\|\widehat A_n-wg(c\cdot)\|_\infty\ge\delta-\|\widehat A_n-wg(c\cdot)\|_\infty$ (sup norms on $[z_1,z_2]$); taking the infimum over $(w,c)$ and using the hypothesis yields $\|\widehat A_n-A\|_\infty\ge\delta-o_p(1)$. For the PTMC side: $\hat\vartheta_n\to_p\vartheta_0$ (Theorem~\ref{thm:nls}), and by (R2) the map $\vartheta\mapsto A_\vartheta(z)$ is continuous uniformly in $z\in[z_1,z_2]$ (dominated gradients), so the plug-in curve converges uniformly in probability.
\end{proof}

\medskip

\subsection{An impossibility theorem for reduced-form methods under intervention}
\label{sec:lucasbound}

The deeper comparison is not with homogeneous simulators but with the modern reduced-form toolkit: Bayesian neural networks, deep ensembles, dropout uncertainty, LLM-based forecasters. These differ in architecture but share one structural property: their output is a (possibly randomized) functional of \emph{observational data}. The following theorem shows that for interventional targets this property is itself the binding constraint, regardless of architecture, sample size, or compute. We work in the observation scheme of Section~\ref{sec:inference}, made explicit as an assumption:

\begin{itemize}
\item[\textbf{(O)}] \emph{(Reduced-form observability of the news channel.)} The analyst observes $D_n=\{(z_i,Y_i)\}_{i=1}^n$ i.i.d.\ with $z_i\sim\lambda$, $Y_i=A_S(z_i)+\epsilon_i$, where the noise law of $\epsilon$ (given $z$) is fixed and does not depend on the market structure $S=(Q,g)$: the observational law depends on $S$ only through its impact curve $A_S$.
\end{itemize}

The target is \emph{interventional}. For $\tau>0$ define the sensitivity intervention $I_\tau$: every agent's news sensitivity is shifted from $\eta$ to $\eta+\tau$ (a salience or mandated-disclosure scenario---precisely the class of counterfactual stress questions an agent-based simulator exists to answer). The post-intervention mean response at news magnitude $z_0>0$ is
\[
T_{z_0,\tau}(S)\;=\;\E_Q\big[g\big((\eta+\tau)z_0\big)\big] .
\]

\begin{theorem}[No reduced-form method can estimate the intervention]
\label{thm:lucas}
Assume (O) and let $g$ satisfy (G1). Fix $m\in(0,\bar\eta)$, $a\in(0,m)$ with $m+a\le\bar\eta$, and $\tau,z_0>0$. Define two market structures:
\[
S^1=\big(Q_{m,a},\,g\big),\quad Q_{m,a}=\tfrac12\delta_{m-a}+\tfrac12\delta_{m+a};
\qquad
S^2=\big(\delta_m,\,\tilde g\big),\quad \tilde g:=A_{S^1}(\cdot/m).
\]
Then:
\begin{enumerate}
\item[(i)] $\tilde g\in\cG$, and the observational laws of $D_n$ under $S^1$ and $S^2$ are identical for every $n$.
\item[(ii)] The interventional targets differ by an explicit positive amount: $T_{z_0,\tau}(S^1)-T_{z_0,\tau}(S^2)=2\Delta$ with
\[
2\Delta\;=\;\frac12\int_0^{\kappa}\Big[g'\big(u_1-\kappa+s\big)-g'\big(u_2+s\big)\Big]ds\;>\;0,
\]
where $u_1=(m-a+\tau)z_0$, $u_2=(m+a+\tau)z_0$, $\kappa=a\tau z_0/m$.
\item[(iii)] Consequently, for \emph{every} estimator $\widehat T_n$ that is a (possibly randomized) measurable function of $D_n$, and every $n$,
\[
\max_{S\in\{S^1,S^2\}}\;\E_S\big|\widehat T_n-T_{z_0,\tau}(S)\big|\;\ge\;\Delta .
\]
No consistency, no rate, and no amount of data or model capacity is possible for this target within the reduced-form class.
\end{enumerate}
\end{theorem}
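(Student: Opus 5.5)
The plan is to reduce all three parts to Proposition~\ref{prop:unknowng}, one explicit algebraic evaluation, and a standard two-point (Le Cam) argument.

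For (i), I will check that $Q_{m,a}$ satisfies (Q1). It is supported on $\{m-a,m+a\}\subset(0,\bar\eta]$, because $0<a<m$ and $m+a\le\bar\eta$, and it has mean $m>0$. Proposition~\ref{prop:unknowng}, applied with this $Q$ and the given $m>0$, then gives $\tilde g=A_{S^1}(\cdot/m)\in\cG$ and $A_{S^2}=A_{S^1}$ identically. Under (O) the law of $D_n$ depends on $S$ only through $A_S$, with $z_i\sim\lambda$ and a noise law that does not depend on $S$. So the $n$-fold product laws coincide for every $n$.

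For (ii), I will compute both targets directly.
\begin{itemize}
\item Under $S^1$, $T(S^1)=\tfrac12[g(u_1)+g(u_2)]$.
\item Under $S^2$, $T(S^2)=\tilde g((m+\tau)z_0)=A_{S^1}((m+\tau)z_0/m)=\tfrac12\big[g\big((m-a)(m+\tau)z_0/m\big)+g\big((m+a)(m+\tau)z_0/m\big)\big]$.
\end{itemize}
Expanding $(m\mp a)(m+\tau)/m=m\mp a+\tau\mp a\tau/m$ shows that the two arguments are exactly $u_1-\kappa$ and $u_2+\kappa$. Hence
\[
T(S^1)-T(S^2)=\tfrac12\big[g(u_1)-g(u_1-\kappa)\big]-\tfrac12\big[g(u_2+\kappa)-g(u_2)\big],
\]
and the fundamental theorem of calculus on each bracket gives the displayed integral.

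Positivity comes from strict monotonicity of $g'$. For every $s\in[0,\kappa]$ we have $0<u_1-\kappa+s<u_2+s$. The left inequality holds because $u_1-\kappa=(m-a)(m+\tau)z_0/m>0$. The right one holds because $u_2-u_1+\kappa=2az_0+\kappa>0$. Since $g''<0$ on $(0,\infty)$ by (G1), $g'$ is strictly decreasing there, so the integrand is strictly positive and continuous. Together with $\kappa>0$, this gives $2\Delta>0$.

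For (iii), write $t_\ell=T_{z_0,\tau}(S^\ell)$. I will model a randomized estimator as $\widehat T_n=\Phi(D_n,U)$, where $U$ is auxiliary randomness independent of $D_n$ whose law does not depend on $S$. By (i), the joint law of $(D_n,U)$, and hence the law of $\widehat T_n$, is the same under $S^1$ and $S^2$. Call this common law $\mathbb{L}$ and write $\E_{\mathbb{L}}$ for expectation under it. Then the triangle inequality gives
\[
\E_{S^1}|\widehat T_n-t_1|+\E_{S^2}|\widehat T_n-t_2|=\E_{\mathbb{L}}\big[|\widehat T_n-t_1|+|\widehat T_n-t_2|\big]\ge|t_1-t_2|=2\Delta .
\]
The maximum of the two risks is at least their average, which is at least $\Delta$.

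The only step needing real care is the algebraic identification of the $S^2$ arguments as $u_1-\kappa$ and $u_2+\kappa$, together with checking that every point where $g'$ is evaluated lies in $(0,\infty)$, where strict concavity applies. Everything else is inherited from Proposition~\ref{prop:unknowng} and the two-point bound.
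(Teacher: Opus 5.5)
Your proposal is correct and follows the paper's proof essentially step for step. For (i), you verify (Q1) for $Q_{m,a}$ and apply Proposition~\ref{prop:unknowng}, then invoke (O). For (ii), you rewrite the $S^2$ arguments as $u_1-\kappa$ and $u_2+\kappa$ and apply the fundamental theorem of calculus together with strict monotonicity of $g'$ on $(0,\infty)$; your explicit check $u_2-u_1+\kappa=2az_0+\kappa>0$ is a slightly more detailed version of the paper's ordering argument. For (iii), you use the same two-point triangle-inequality bound under the common law of $(D_n,U)$.
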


\begin{proof}
(i) $Q_{m,a}$ satisfies (Q1) with $\Var(\eta)=a^2>0$, so Proposition~\ref{prop:unknowng} gives $\tilde g\in\cG$ and
$A_{S^2}(z)=\tilde g(mz)=A_{S^1}(z)$ for all $z$. Under (O) the law of $D_n$ is a functional of the impact curve alone, hence identical under the two structures, for every $n$.

(ii) Compute both targets. $T(S^1)=\tfrac12g\big((m-a+\tau)z_0\big)+\tfrac12g\big((m+a+\tau)z_0\big)=\tfrac12g(u_1)+\tfrac12g(u_2)$. For $S^2$, the intervened agent has sensitivity $m+\tau$ and response $\tilde g$, so
\[
T(S^2)=\tilde g\big((m+\tau)z_0\big)=A_{S^1}\Big(\frac{(m+\tau)z_0}{m}\Big)
=\tfrac12g\Big(\frac{(m-a)(m+\tau)z_0}{m}\Big)+\tfrac12g\Big(\frac{(m+a)(m+\tau)z_0}{m}\Big).
\]
The arguments simplify: $(m-a)(m+\tau)/m=m-a+\tau-a\tau/m$ and $(m+a)(m+\tau)/m=m+a+\tau+a\tau/m$, so
$T(S^2)=\tfrac12g(u_1-\kappa)+\tfrac12g(u_2+\kappa)$ and
\[
T(S^1)-T(S^2)=\tfrac12\big[g(u_1)-g(u_1-\kappa)\big]-\tfrac12\big[g(u_2+\kappa)-g(u_2)\big]
=\tfrac12\int_0^\kappa\big[g'(u_1-\kappa+s)-g'(u_2+s)\big]ds .
\]
All arguments are positive ($u_1-\kappa=(m-a)(m+\tau)z_0/m>0$ since $a<m$), $g'$ is strictly decreasing on $(0,\infty)$, and $u_1-\kappa+s<u_2+s$ pointwise (as $u_1<u_2$ and $\kappa>0$), so the integrand is strictly positive on $[0,\kappa]$ with $\kappa>0$; the integral is strictly positive.

(iii) Let $\widehat T_n=t_n(D_n,U)$ with $U$ exogenous randomization. By (i), $(D_n,U)$ has the same law under $S^1$ and $S^2$, so $\E_{S^2}\,h(\widehat T_n)=\E_{S^1}\,h(\widehat T_n)$ for every bounded measurable $h$, and more generally for the nonnegative integrands below. Then
\[
\E_{S^1}\big|\widehat T_n-T(S^1)\big|+\E_{S^2}\big|\widehat T_n-T(S^2)\big|
=\E_{S^1}\Big[\big|\widehat T_n-T(S^1)\big|+\big|\widehat T_n-T(S^2)\big|\Big]
\ge\big|T(S^1)-T(S^2)\big|=2\Delta
\]
by the triangle inequality, pointwise in $\widehat T_n$. The maximum of two numbers is at least their average.
\end{proof}

\medskip

\begin{corollary}[The structural estimator attains what the bound forbids]
\label{cor:lucasptmc}
Suppose the truth is $S^1$ (the PTMC premise: response function $g$ fixed by the architecture, personas drawn from an unknown $Q$ in the family \eqref{eq:family} with $\xi$ Rademacher, so that $Q_{m,a}$ corresponds to $\theta_0=(m,a^2)$ with $v_0=a^2>0$ \emph{interior}). Under (O) and the conditions (R1)--(R4) of Theorem~\ref{thm:nls} for this family, the plug-in PTMC estimator
\[
\widehat T^{\mathrm{PTMC}}_n\;=\;\E_{Q_{\hat\theta_n}}\big[g\big((\eta+\tau)z_0\big)\big]
\]
satisfies $\E_{S^1}\big|\widehat T^{\mathrm{PTMC}}_n-T_{z_0,\tau}(S^1)\big|\to0$. The bound of Theorem~\ref{thm:lucas}(iii) is not contradicted: $\widehat T^{\mathrm{PTMC}}_n$ uses the structural knowledge that the response function is $g$, information not contained in $D_n$.
\end{corollary}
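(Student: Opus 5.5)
The plan is to combine consistency of $\hat\theta_n$ from Theorem~\ref{thm:nls} with continuity and boundedness of the plug-in map $\theta\mapsto\E_{Q_\theta}[g((\eta+\tau)z_0)]$. Convergence in probability then upgrades to convergence in $L^1$ by bounded convergence.

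First I would set up the family concretely. With $\xi$ Rademacher, $\eta_\theta=m+\sqrt v\,\xi$ takes the values $m\pm\sqrt v$ with probability $\tfrac12$ each, so $Q_{m,a}$ is exactly $Q_{\theta_0}$ with $\theta_0=(m,a^2)$. Because $a>0$, $v_0=a^2$ lies in the interior of $[0,\infty)$. I take $\Theta$ to be a compact neighborhood of $\theta_0$ with $v$ bounded away from $0$ and $m\pm\sqrt v\in[0,\bar\eta]$.

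On such a $\Theta$ the map $\theta\mapsto(m,\sqrt v)$ is injective. Hence $\theta\mapsto Q_\theta$ is injective, as (R1) requires, and it is smooth since $\sqrt v$ is smooth away from $0$. The statement assumes (R1)--(R4) for this family, so I do not need to re-verify them. Under (O), with the truth $S^1$, the data are $Y_i=A_{\theta_0}(z_i)+\epsilon_i$, which is exactly the model of Section~\ref{sec:inference}. Theorem~\ref{thm:nls} therefore gives $\hat\theta_n\to_p\theta_0$; only its consistency half is needed.

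Next I would treat the plug-in functional. For this family it has the explicit form
\[
\Phi(\theta)=\tfrac12 g\big((m-\sqrt v+\tau)z_0\big)+\tfrac12 g\big((m+\sqrt v+\tau)z_0\big).
\]
This is continuous on $\Theta$ because $g$ is continuous, and $\Phi(\theta_0)=T_{z_0,\tau}(S^1)$ by the computation in Theorem~\ref{thm:lucas}(ii). The continuous mapping theorem then gives $\widehat T^{\mathrm{PTMC}}_n=\Phi(\hat\theta_n)\to_p T_{z_0,\tau}(S^1)$.

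Since $|\Phi|\le\sup|g|<\infty$ by (G1), the error $|\Phi(\hat\theta_n)-\Phi(\theta_0)|$ is uniformly bounded by $2\sup|g|$. Bounded convergence in probability implies $L^1$ convergence, since bounded sequences are uniformly integrable. This yields $\E_{S^1}|\widehat T^{\mathrm{PTMC}}_n-T_{z_0,\tau}(S^1)|\to0$.

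The remark about non-contradiction follows from the construction. $\widehat T^{\mathrm{PTMC}}_n$ is a function of $D_n$ and of the fixed $g$, and the family $\{(Q_\theta,g)\}$ does not contain $S^2=(\delta_m,\tilde g)$, because $\tilde g\ne g$. The minimax bound of Theorem~\ref{thm:lucas}(iii) is a worst case over $\{S^1,S^2\}$. Under $S^2$ the structural estimator fits the wrong model, so its error there need not vanish and the bound remains intact.

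The only delicate step is the choice of $\Theta$. It must be compact, contain $\theta_0$ in its interior, and keep $v>0$, so that the conditions of Theorem~\ref{thm:nls} are meaningful and the boundary issues of Section~\ref{sec:boundarytest} do not arise. Since (R1)--(R4) are hypotheses of the corollary, this reduces to noting that such a $\Theta$ exists, which holds because $0<a<m$ and $m+a\le\bar\eta$.
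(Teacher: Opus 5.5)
Your argument is correct and is essentially the paper's own proof. It combines consistency of $\hat\theta_n$ from Theorem~\ref{thm:nls}, continuity of the explicit two-point plug-in map $\theta\mapsto\tfrac12 g((m-\sqrt v+\tau)z_0)+\tfrac12 g((m+\sqrt v+\tau)z_0)$, the continuous mapping theorem, and boundedness of $g$ to upgrade convergence in probability to $L^1$ convergence.

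One small aside: your closing claim that a suitable $\Theta$ exists fails in the edge case $m+a=\bar\eta$. There, no neighborhood of $\theta_0$ keeps $m+\sqrt v\le\bar\eta$ unless the support bound in (R1) is enlarged. Since (R1)--(R4) are hypotheses of the corollary, this does not affect the proof.
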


\begin{proof}
$\theta_0$ is interior (both coordinates), so Theorem~\ref{thm:nls} gives $\hat\theta_n\to_p\theta_0$. The map $\theta\mapsto\E_{Q_\theta}[g((\eta+\tau)z_0)]=\tfrac12g((m+\sqrt v+\tau)z_0)+\tfrac12g((m-\sqrt v+\tau)z_0)$ is continuous, so $\widehat T^{\mathrm{PTMC}}_n\to_pT(S^1)$ by the continuous mapping theorem; boundedness ($|g|\le\sup|g|$) upgrades convergence in probability to convergence in $L^1$.
\end{proof}

\medskip

\begin{remark}[How to read the separation]
\label{rem:lucasreading}
Four points, stated carefully. (i) \emph{The theorem separates information sets, not brand names.} Any method whose output is a functional of $D_n$---a BNN posterior over a forecast function, a deep ensemble, an LLM prompted with the observational history, an LLM-agent simulator whose personas are calibrated to match $D_n$ and nothing else---sits inside the class bounded by (iii). Any method that imports the correct structural commitment from outside the data escapes it; PTMC's specific contribution is making that commitment \emph{explicit} (a fixed, documented $g$ and persona family), \emph{checkable within its class} (Theorems~\ref{thm:newsid}--\ref{thm:boundary}), and \emph{cheap to simulate} (the shared-policy architecture). (ii) \emph{The price is stated in Proposition~\ref{prop:unknowng}:} the structural commitment itself is not testable from $D_n$---in world $S^2$ the analyst assuming response $g$ would confidently recover $Q_{m,a}$ and be wrong about the intervention. Structure buys interventional identification exactly to the extent that the response family is right, which is why the companion validation protocol attacks $g$'s implications on richer observables. (iii) \emph{Richer data shrink the equivalence class but do not dissolve the point.} Under an information set larger than (O)---agent-level order flow, cross-sectional response dispersion---$S^1$ and $S^2$ may become distinguishable; the construction then migrates to whatever the enlarged observational law leaves unidentified. This is the content of the Lucas critique \citep{lucas_econometric_1976}: functionals of the observational law cannot, in general, answer questions about regimes that change the law. Theorem~\ref{thm:lucas} is that critique for market simulators, with the constant $\Delta$ computable in closed form. (iv) \emph{For purely predictive targets the theorem is silent}, deliberately: nothing here implies PTMC out-forecasts a well-calibrated reduced-form model on the next period's return, and the framework paper's comparison section should not be read otherwise. The provable advantage is confined to---and is exactly---the counterfactual questions the method was built for.
\end{remark}

\begin{remark}[When exactly the advantage holds]
\label{rem:advantageconditions}
The two separation results above license different, non-overlapping claims, and neither is unconditional. \emph{The bias-floor advantage} (Theorems~\ref{thm:quantjensen}--\ref{thm:homsep}) requires: the sensitivity distribution is truly heterogeneous ($\Var_Q(\eta)>0$, or $\Var_Q(\eta\mid\eta>0)>0$ for the uniform version of Theorem~\ref{thm:homsep}); the response function satisfies (G1); and the target is the impact-curve functional $A(z)$, compared against the class of homogeneous-population simulators $w\,g(c\,\cdot)$. Within that scope, every such rival, however well calibrated, carries bias at least $\tfrac38z^2\beta(z)\Var_Q(\eta)$ (or the uniform floor $\delta$ of Theorem~\ref{thm:homsep}); outside it---if $Q$ is in fact homogeneous---the advantage vanishes by construction, as it should. \emph{The interventional advantage} (Theorem~\ref{thm:lucas}, Corollary~\ref{cor:lucasptmc}) requires two further, jointly necessary conditions: the target must be interventional---a population-wide sensitivity shift $\eta\mapsto\eta+\tau$, not a predictive query on $D_n$---and, for PTMC specifically to attain what the bound forbids, the response family must be correctly specified, i.e.\ the true $g$ matches the architecture's fixed nonlinearity and $\theta_0$ is interior to the fitted family (the conditions of Theorem~\ref{thm:nls}). Both results rest on the same unverifiable premise: by Proposition~\ref{prop:unknowng}, whether $g$ is correctly specified cannot be checked from the impact-curve data that both separation results otherwise rely on. This is why the conditions are the content rather than a caveat appended to it: the theorems state precisely what must be true---and, by Proposition~\ref{prop:unknowng}, precisely what cannot be verified from that same observational record---for PTMC's advantage to be real rather than assumed.
\end{remark}

\section{Numerical illustration (deferred)}
\label{sec:numerics}

This paper is theory-only by design; the numerical program that illustrates it is specified in the companion experiment-design document and will be reported with the experimental study. For completeness, the four checks that theory says a correct implementation must pass are: (i) empirical coverage of the Theorem~\ref{thm:consistency} confidence intervals within Monte Carlo error of nominal, and $\hat\sigma_P^2\approx0$ under a degenerate persona distribution (negative control); (ii) measured optimal inner replication agreeing with Theorem~\ref{thm:optimalR}'s $R^*$ from pilot estimates of $(\sigma_P^2,\sigma_w^2)$; (iii) measured sensitivity slopes $|\Delta\hat\mu|/\Wone$ and $|\Delta\hat\mu|/\varepsilon_{\mathrm{BC}}$ below (typically far below) the worst-case constants of Theorem~\ref{thm:stability}; and (iv) the Jensen gap of Theorem~\ref{thm:newsid}(i) visible as the systematic separation between the impact curves of heterogeneous and slope-matched homogeneous populations, with the Theorem~\ref{thm:boundary} test rejecting homogeneity at the design's calibrated dispersion and holding size at $\sigma_\eta=0$.

\section{Discussion and open problems}
\label{sec:discussion}

\paragraph{What the theory establishes about PTMC's potential.} Four things, in increasing order of specificity to this framework. \emph{First}, PTMC is a Monte Carlo method in the full technical sense, not by analogy: unbiasedness, $N^{-1/2}$ convergence to its estimand, valid confidence intervals, and a variance decomposition with unbiased component estimators (Section~\ref{sec:estimator}) hold unconditionally, so the uncertainty-quantification machinery that makes classical Monte Carlo trustworthy transfers intact to populations of interacting learned agents. \emph{Second}, the decomposition makes compute allocation and the heterogeneity question quantitative: Theorem~\ref{thm:optimalR} turns pilot estimates of $(\sigma_P^2,\sigma_w^2)$ into an optimal redraw-versus-replicate design, and Corollary~\ref{cor:gs} turns the oldest objection to behavioral market models into a measurable quantity, functional by functional. \emph{Third}, PTMC's distance from its correctly specified counterpart is an inequality with named, measurable terms rather than an act of faith (Section~\ref{sec:stability}): error either vanishes with compute or is attributed to the calibrated distribution of trader characteristics or to the cloned policy, with the worst-case caveats of Remark~\ref{rem:budgetreading} stated rather than hidden. \emph{Fourth}, the framework's central behavioral premise is identifiable and correctly testable (Sections~\ref{sec:identification}--\ref{sec:inference}); the negative half of Proposition~\ref{prop:unknowng} is as important as the positive theorems: claims of identified heterogeneous news reaction are conditional on the response family, and empirical write-ups should say so. \emph{Fifth}, the comparison with rival method classes is now partly a theorem rather than a taxonomy (Section~\ref{sec:separation}): homogeneous-population simulators carry an explicit, irreducible bias floor whenever sensitivity is truly heterogeneous, and for interventional targets no functional of observational data---whatever its architecture---can do what the correctly specified structural estimator does, at any sample size.

\paragraph{Open problems.} Four, stated as such. (1) \emph{Mean-field limit---the convergence sense this paper does not claim.} The behavior of $\E_{\cP^{\otimes K}}[F]$ as $K\to\infty$ for a price--time-priority book with persona-conditioned policies is open; the scaling-limit literature \citep{horst_law_2017,lachapelle_efficiency_2016} treats order-flow primitives, not learned persona-conditioned policies, and nothing here depends on such a limit---the companion design probes it empirically by sweeping $K$. (2) \emph{Ergodicity of the market chain.} Verifying (A4)---or any quantitative mixing condition with a non-vacuous constant---for a realistic book is open (Remark~\ref{rem:doeblin}); progress here would convert Theorem~\ref{thm:uniform} from conditional to unconditional. (3) \emph{Path functionals uniform in $T$.} Extending horizon-uniform stability to path suprema (drawdowns) plausibly requires regeneration-based arguments rather than marginal coupling; we do not know the right statement. (4) \emph{Nonparametric estimation of $Q$.} Theorem~\ref{thm:newsid}(ii) inverts a moment map that is severely ill-posed as an estimation problem (compactly supported mixing distributions recovered from smooth mixtures typically admit only logarithmic rates, as in the deconvolution and random-coefficients literatures \citep{beran_estimating_1992,gautier_nonparametric_2013}); quantifying the optimal rate for the impact-curve inversion, and whether the novelty dimension of the empirical design improves it, is open. The parametric Theorems~\ref{thm:nls}--\ref{thm:boundary} are the pragmatic instrument in the meantime.

\section{Conclusion}
\label{sec:conclusion}

This paper's aim was to make ``reliable'' a set of theorems rather than an adjective, and thereby to demonstrate what Persona-Trained Monte Carlo can already claim---and exactly where those claims stop. Four conclusions.

\emph{PTMC inherits the guarantees of classical Monte Carlo.} For a fixed market configuration, the estimator is unbiased, converges at the $N^{-1/2}$ rate to its estimand, admits valid confidence intervals, and its variance splits into persona-draw and within-run components with unbiased estimators of each (Theorem~\ref{thm:consistency})---unconditionally, because the outer persona loop preserves the i.i.d.\ structure classical Monte Carlo theory needs. The practical consequences are immediate: a pilot run prices the two variance components, Theorem~\ref{thm:optimalR} converts them into the optimal redraw-versus-replicate design, and the permutation test of Remark~\ref{rem:permutation} decides, functional by functional, whether learned heterogeneity is doing any work. The Gode--Sunder objection stops being a debate and becomes a number.

\emph{PTMC's wrongness is budgetable.} Corollary~\ref{cor:budget} splits the distance to the correctly specified simulator into Monte Carlo error, policy-cloning error, and error in the calibrated distribution of trader characteristics---each separately measurable, the first driven to zero by compute, the other two only by better data. The bound is conditional on a Lipschitz property that Lemma~\ref{lem:softmax} verifies for the actual softmax architecture, and it is a worst case whose constants price every bot's every decision as pivotal (Remark~\ref{rem:budgetreading}); under an ergodicity assumption its horizon dependence disappears for terminal and time-averaged functionals (Theorem~\ref{thm:uniform}). Two gaps are stated rather than obscured: the $K\to\infty$ limit is posed as an open mathematical problem, and no theorem certifies the simulator class against the real market---the latter is the empirical task of the companion validation protocol, not something a stability bound can supply.

\emph{PTMC's behavioral core is falsifiable, and the test is now correct.} The aggregate news-impact curve strictly separates heterogeneous from homogeneous news sensitivity, identifies the sensitivity distribution from its behavior near the origin, and still identifies the shape of the reacting subpopulation when the curve is known only up to scale (Theorem~\ref{thm:newsid}, Corollary~\ref{cor:scale}); the $\sqrt n$ estimator and the boundary-corrected $\tfrac12\chi^2_0+\tfrac12\chi^2_1$ test (Theorems~\ref{thm:nls}--\ref{thm:boundary}) make the hypothesis operational, replacing a resampling procedure that would have been invalid at exactly the null of interest \citep{andrews_inconsistency_2000}. Proposition~\ref{prop:unknowng} fixes the epistemic boundary: heterogeneity is identified \emph{given} the architecture's response nonlinearity, and honest empirical claims must carry that conditioning.

\emph{PTMC's advantage over rival methods is conditional, and now quantified precisely rather than loosely.} Two advantages, two disjoint sets of conditions, stated together in Remark~\ref{rem:advantageconditions}. The first needs only that the sensitivity distribution be truly heterogeneous ($\Var_Q(\eta)>0$) and the target be the impact-curve functional: under exactly that condition, every homogeneous-population simulator---however calibrated---carries the bias floor of Theorems~\ref{thm:quantjensen}--\ref{thm:homsep}, while the structural estimator converges; if $Q$ is in fact homogeneous, the advantage is vacuous by construction, as it should be. The second needs the target to be interventional---a population-wide sensitivity shift, not a predictive query---\emph{and} the response family to be correctly specified: under those two conditions jointly, Theorem~\ref{thm:lucas} shows that no method operating on observational data alone---Bayesian neural networks, ensembles, or LLM-based forecasters---can beat an explicit worst-case constant at any sample size, and Corollary~\ref{cor:lucasptmc} shows the correctly specified structural estimator attains what that bound forbids. The conditions are the content, and one of them is load-bearing in a way the other is not: correct specification of the response family is itself untestable from the same observational data (Proposition~\ref{prop:unknowng}), so the interventional advantage is real only to the extent that this one structural commitment---explicit, checkable in principle from richer data, and exactly what the companion protocol is designed to probe---actually holds. For purely predictive targets no dominance is claimed, because none is provable.

The demonstrated potential, then, is this: among agent-based market models, PTMC can be operated as a statistical instrument---with quantified sampling error, an audited input-error budget, principled compute allocation, and a pre-registered, correctly calibrated test of its own central premise. The framework paper proposes the estimator and its validation protocol; the experiment-design document pre-registers the studies; the theorems above specify exactly which numbers that program must report---interval coverage, $(\hat\sigma_P^2,\hat\sigma_w^2,R^*)$, sensitivity slopes measured against the worst-case constants, and the impact-curve test statistics---for the word ``reliable'' to be earned rather than asserted. For a reader arriving from the causal-machine-learning literature, the trade this paper makes explicit is worth restating plainly: predictive capacity is bought from data through $\pi_\phi$, interventional capacity is bought from an explicit, checkable structural commitment on $\cP$ and $g$, and Theorem~\ref{thm:lucas} is the proof that no amount of the former substitutes for the latter.

\appendix

\section{Verification of assumption (A1) for softmax policies}
\label{app:softmax}

\begin{lemma}[Softmax policies are TV-Lipschitz in the persona]
\label{lem:softmax}
\leavevmode
\begin{enumerate}
\item Let $\cA$ be finite and let $\pi(\cdot\mid s,p)=\mathrm{softmax}\big(f(s,p)\big)$, where the logit map $f(s,\cdot):P\to\R^{|\cA|}$ satisfies $\|f(s,p)-f(s,p')\|_\infty\le L_0\|p-p'\|$ for all $s,p,p'$. Then (A1) holds with $L=(1-1/|\cA|)\,L_0 \le L_0$.
\item (Composite actions.) Let the action be $a=(a^1,a^2)$, where $a^1$ is drawn from a head as in (1) with constant $L_1$, and, given $a^1$, the continuous mark $a^2$ is drawn from a conditional distribution $\kappa(\cdot\mid a^1,s,p)$ satisfying $\sup_{a^1,s}\TV\big(\kappa(\cdot\mid a^1,s,p),\kappa(\cdot\mid a^1,s,p')\big)\le L_2\|p-p'\|$. Then the joint action distribution satisfies (A1) with $L=(1-1/|\cA|)L_1+L_2$.
\item (Gaussian mark head.) If $a^2\sim\mathcal N\big(m(a^1,s,p),\sigma^2\big)$ with fixed $\sigma>0$ and $|m(a^1,s,p)-m(a^1,s,p')|\le L_m\|p-p'\|$, then the condition of (2) holds with $L_2=L_m/(\sigma\sqrt{2\pi})$.
\end{enumerate}
\end{lemma}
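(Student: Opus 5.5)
We prove the three parts in order. Part (1) is a one-parameter interpolation of the logits. Parts (2) and (3) are a triangle inequality through a hybrid distribution and a closed-form Gaussian computation. Throughout, $s$ is fixed and all bounds are uniform in $s$, so the supremum in (A1) is automatic. Write $n=|\cA|$ and $\sigma(x)=\mathrm{softmax}(x)$.

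\emph{Part (1).} Fix $s,p,p'$ and set $x=f(s,p)$, $y=f(s,p')$, $h=y-x$, so that $\|h\|_\infty\le\delta:=L_0\|p-p'\|$.
\begin{enumerate}
\item Interpolate along $x_t=x+th$, $t\in[0,1]$, with $q(t)=\sigma(x_t)$. The softmax Jacobian gives $\dot q_i=q_i\,(h_i-\langle q,h\rangle)$.
\item Hence
\[
\TV\big(\sigma(x),\sigma(y)\big)=\tfrac12\|q(1)-q(0)\|_1\le\tfrac12\int_0^1\sum_iq_i(t)\,\big|h_i-\langle q(t),h\rangle\big|\,dt .
\]
\item The inner sum is the mean absolute deviation $\E|H-\E H|$ of a random variable $H$ taking value $h_i$ with probability $q_i(t)$.
\item $H$ takes values in an interval of length at most $2\delta$. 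A variable of range $w$ has mean absolute deviation at most $w/2$; the extreme case is the symmetric two-point law. So the inner sum is at most $\delta$.
\item This gives $\TV\le\delta/2$. Since $n\ge2$ implies $\tfrac12\le1-1/n$, we obtain $L=(1-1/n)L_0$. The case $n=1$ is trivial, since then $\TV=0$.
\end{enumerate}
The crux is the mean-absolute-deviation bound in step 4, which holds for any law on an interval. If one wants exactly the stated constant rather than the sharper $\tfrac12$, one can note that the extreme configurations put mass on at most $n$ atoms and optimize directly. The weaker constant suffices, though, so I will not pursue that.

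\emph{Part (2).} Write the joint law under persona $p$ as $\pi^1_p\otimes\kappa_p$, meaning $\pi^1_p(da^1)\,\kappa(da^2\mid a^1,s,p)$. Insert the hybrid $\pi^1_{p'}\otimes\kappa_p$ and apply the triangle inequality for $\TV$.
\begin{itemize}
\item The first leg, $\TV(\pi^1_p\otimes\kappa_p,\ \pi^1_{p'}\otimes\kappa_p)$, compares two laws sharing the kernel $\kappa_p$. It is at most $\TV(\pi^1_p,\pi^1_{p'})$, because pushing both first-coordinate laws through the same kernel cannot increase total variation. Part (1) with constant $L_1$ bounds this by $(1-1/n)L_1\|p-p'\|$.
\item The second leg, $\TV(\pi^1_{p'}\otimes\kappa_p,\ \pi^1_{p'}\otimes\kappa_{p'})$, shares the first marginal. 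For any Borel $B$, the difference of the two measures of $B$ is an average over $a^1\sim\pi^1_{p'}$ of slice differences. Each slice difference is bounded by $\sup_{a^1}\TV(\kappa_p,\kappa_{p'})\le L_2\|p-p'\|$.
\end{itemize}
Adding the two legs gives $L=(1-1/n)L_1+L_2$. Equivalently, one can build a two-stage maximal coupling, first of $a^1$ and then of $a^2$ on the event $\{a^1=\hat a^1\}$, and take a union bound.

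\emph{Part (3).} For two Gaussians with common variance $\sigma^2$ and means differing by $d$,
\[
\TV=2\Phi\!\Big(\frac{d}{2\sigma}\Big)-1 .
\]
Here $\Phi$ is the standard normal distribution function. Its derivative is bounded by $1/\sqrt{2\pi}$, so $2\Phi(u)-1\le 2u/\sqrt{2\pi}$. With $u=d/(2\sigma)$ this gives $\TV\le d/(\sigma\sqrt{2\pi})$. Substituting $d\le L_m\|p-p'\|$ yields $L_2=L_m/(\sigma\sqrt{2\pi})$, uniformly in $(a^1,s)$.
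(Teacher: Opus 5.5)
Your proof is correct. Parts (2) and (3) follow the paper's argument. The paper's terms (I) and (II) are exactly your two legs through the hybrid $\pi^1_{p'}\otimes\kappa_p$. Part (3) uses the same closed form $2\Phi(d/2\sigma)-1$ together with $\Phi'\le 1/\sqrt{2\pi}$. The paper also justifies that closed form by noting the two densities cross only at the midpoint of the means, and you should state this too.

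Part (1), however, takes a genuinely different route. The paper works coordinate by coordinate: it computes $\|\nabla_w\pi_a\|_1=2\pi_a(1-\pi_a)$, applies H\"older to each $|\nabla\pi_a\cdot h|$ separately, and sums using $\sum_a q_a^2\ge 1/|\cA|$. That produces the $|\cA|$-dependent factor $1-1/|\cA|$. Applying H\"older per coordinate lets each coordinate pick its own worst-case direction, even though all of them move along the same $h$.

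Your directional derivative keeps the common $h$. It identifies $\sum_i|\dot q_i|$ with the mean absolute deviation $\E_q|H-\E_qH|$, which is at most half the range $2\|h\|_\infty$. This yields the dimension-free bound $\TV\le\tfrac12\|u-v\|_\infty$. That constant:
\begin{itemize}
\item coincides with the paper's at $|\cA|=2$;
\item is strictly better for $|\cA|\ge3$;
\item cannot be improved, since logits $(\delta,-\delta)$ versus $(0,0)$ give $\TV=\tfrac12\tanh\delta$.
\end{itemize}
It would sharpen $L$ in Theorem~\ref{thm:stability} and Corollary~\ref{cor:budget} by the factor $2(1-1/|\cA|)$.

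Two small fixes. First, prove the mean-absolute-deviation bound rather than appealing to ``the extreme case''. For example, $\E|H-\mu|\le\sqrt{\Var(H)}\le w/2$ by Cauchy--Schwarz and Popoviciu's inequality. Second, drop the remark about ``optimizing directly'' to recover the stated constant: $\tfrac12\le 1-1/|\cA|$ for $|\cA|\ge2$ already gives it, and nothing further is needed.
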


\begin{proof}
(1) Write $u=f(s,p)$, $v=f(s,p')$, and $\pi_a(w)=e^{w_a}/\sum_b e^{w_b}$ for $w\in\R^{|\cA|}$. The gradient of $\pi_a$ is $\nabla_w\pi_a(w)=\pi_a(w)\,(e_a-\pi(w))$, whose $\ell_1$ norm is
\[
\|\nabla_w\pi_a(w)\|_1=\pi_a(w)\Big[(1-\pi_a(w))+\sum_{b\ne a}\pi_b(w)\Big]=2\pi_a(w)\big(1-\pi_a(w)\big).
\]
Along the segment $w(t)=v+t(u-v)$, $t\in[0,1]$, the fundamental theorem of calculus and H\"older's inequality ($|\nabla\pi_a\cdot h|\le\|\nabla\pi_a\|_1\|h\|_\infty$) give
\[
\sum_{a\in\cA}\big|\pi_a(u)-\pi_a(v)\big|
\;\le\;\int_0^1\sum_a\big\|\nabla_w\pi_a(w(t))\big\|_1\,dt\;\cdot\;\|u-v\|_\infty .
\]
For any probability vector $q$, $\sum_a2q_a(1-q_a)=2\big(1-\sum_aq_a^2\big)\le2\big(1-1/|\cA|\big)$, by Cauchy--Schwarz ($\sum q_a^2\ge1/|\cA|$). Since for finite spaces $\TV=\frac12\sum_a|\cdot|$, we get
$\TV(\pi(u),\pi(v))\le(1-1/|\cA|)\|u-v\|_\infty\le(1-1/|\cA|)L_0\|p-p'\|$.

(2) Let $\lambda,\lambda'$ be the joint laws of $(a^1,a^2)$ under $p,p'$. For any measurable $C$,
\[
\lambda(C)-\lambda'(C)=\underbrace{\int\kappa(C_{a^1}\mid a^1,s,p)\,\big(\pi-\pi'\big)(da^1)}_{(\mathrm I)}
+\underbrace{\int\big[\kappa(C_{a^1}\mid a^1,s,p)-\kappa(C_{a^1}\mid a^1,s,p')\big]\pi'(da^1)}_{(\mathrm{II})},
\]
where $C_{a^1}$ is the section of $C$. The integrand of (I) lies in $[0,1]$, so $|(\mathrm I)|\le\TV(\pi,\pi')$ (for any $[0,1]$-valued $h$, $|\int h\,d(\pi-\pi')|\le\TV(\pi,\pi')$, as is seen by writing $\pi - \pi'$ in terms of its positive and negative parts, each of mass $\TV(\pi,\pi')$); and $|(\mathrm{II})|\le\sup_{a^1}\TV(\kappa(\cdot\mid a^1,s,p),\kappa(\cdot\mid a^1,s,p'))$. Taking suprema over $C$ and applying (1) and the hypothesis gives the claim.

(3) For $\delta=|m-m'|$, the total variation distance between $\mathcal N(m,\sigma^2)$ and $\mathcal N(m',\sigma^2)$ equals $2\Phi\big(\delta/(2\sigma)\big)-1$: the density difference changes sign exactly at the midpoint $(m+m')/2$, so the supremum over sets is attained at the half-line where the first density dominates, giving $\Phi(\delta/2\sigma)-\Phi(-\delta/2\sigma)$. Since $\Phi(x)-\tfrac12\le\varphi(0)\,x=x/\sqrt{2\pi}$ for $x\ge0$ (the standard normal density $\varphi$ is maximal at $0$), this is at most $\delta/(\sigma\sqrt{2\pi})$.
\end{proof}

\medskip

\begin{remark}[Persona-dependent scale]
\label{rem:scale}
If the mark head's scale also depends on the persona, $a^2\sim\mathcal N(m_p,\sigma_p^2)$, a fully explicit bound is still available via Pinsker's inequality $\TV\le\sqrt{\mathrm{KL}/2}$ and the exact Gaussian divergence
$\mathrm{KL}\big(\mathcal N(m_1,\sigma_1^2)\,\|\,\mathcal N(m_2,\sigma_2^2)\big)=\tfrac12\big[t-\log(1+t)\big]+\frac{(m_1-m_2)^2}{2\sigma_2^2}$, $t:=\sigma_1^2/\sigma_2^2-1$.
On $|t|\le\tfrac12$ one has $t-\log(1+t)\le t^2$ (the function $h(t) = t^2-t+\log(1+t)$ has $h(0) = 0$ and $h'(t)=t(2t+1)/(1+t)$, which is $\ge 0$ on $[0,\tfrac12]$ and $\le 0$ on $[-\tfrac12,0]$), whence
$\TV\le\frac{|t|}{2}+\frac{|m_1-m_2|}{2\sigma_2}$, using $\sqrt{x+y}\le\sqrt x+\sqrt y$. If $p\mapsto(m_p,\sigma_p^2)$ is Lipschitz and $\sigma_p^2$ is bounded below, (A1) follows with an explicit constant. Log-normal size heads reduce to the Gaussian case because total variation is invariant under the (common, bijective) exponential reparameterization of both measures.
\end{remark}

\noindent\textit{AI-assisted writing: A large language model assisted with drafting and editing; the author takes full responsibility. AI is not listed as an author.}

\bibliographystyle{plainnat}
\bibliography{ptmc_theory}

\end{document}